\newcommand{\transpose}{^{\mathrm{T}}}
\newcommand{\bbbr}{\mathbb{R}}
\newcommand{\ubar}[1]{\underaccent{\bar}{#1}}
\theoremstyle{plain}
\newtheorem{theorem}{Theorem}
\newtheorem{proposition}{Proposition}
\newtheorem{lemma}{Lemma}
\newtheorem{corollary}{Corollary}
\theoremstyle{definition}
\newtheorem{definition}{Definition}
\theoremstyle{remark}
\newtheorem{remark}{Remark}
\begin{document}
\title{Multivariate Median Filters and Partial Differential Equations}
\author{Martin Welk\\ University for Health Sciences, Medical Informatics and Technology (UMIT),\\ Eduard-Walln\"ofer-Zentrum 1, 6060 Hall/Tyrol, Austria\\ \url{martin.welk@umit.at} }
\date{March 18, 2015}
\maketitle

%%%%%%%%%%%%%%%%%%%%%%%%%%%%%%%%%%%%%%%%%%%%%%%%%%%%%%%%
\begin{abstract}
Multivariate median filters have been proposed as generalisations
of the well-established median filter for grey-value images to
multi-channel images. As multivariate median, most of the recent
approaches use the $L^1$ median, i.e.\ the minimiser of an
objective function that is the sum of distances to all input points.
Many properties of univariate median filters generalise to
such a filter. However, the famous result by Guichard and Morel about
approximation of the mean curvature motion PDE by median filtering
does not have a comparably simple counterpart for $L^1$ multivariate median
filtering. We discuss the affine equivariant Oja median and the
affine equivariant trans\-for\-ma\-tion--re\-trans\-for\-ma\-tion 
$L^1$ median as alternatives to $L^1$ median filtering. We analyse 
multivariate median filters in a space-continuous setting, including 
the formulation of a space-continuous version of the 
transformation--retransformation $L^1$ median, and derive PDEs approximated
by these filters in the cases of bivariate planar images,
three-channel volume images and three-channel planar images.
The PDEs for the affine equivariant filters can be interpreted 
geometrically as combinations of a diffusion and a principal-component-wise
curvature motion contribution with a cross-effect term based on 
torsions of principal components.
Numerical experiments are presented that demonstrate the validity of
the approximation results.

\bigskip \noindent%
\textbf{Keywords:} 
Median filtering $\bullet$
Affine equivariance $\bullet$
Oja median $\bullet$
Multi-channel images $\bullet$
Transformation--retransformation median $\bullet$
Curvature-based PDE
\end{abstract}
%%%%%%%%%%%%%%%%%%%%%%%%%%%%%%%%%%%%%%%%%%%%%%%%%%%%%%%%

\sloppy
%%%%%%%%%%%%%%%%%%%%%%%%%%%%%%%%%%%%%%%%%%%%%%%%%%%%%%%%
\section{Introduction}
%%%%%%%%%%%%%%%%%%%%%%%%%%%%%%%%%%%%%%%%%%%%%%%%%%%%%%%%

Median filtering of signals and images goes back to the work of
Tukey \cite{Tukey-Book71} and has since then been established
in image processing as a simple nonlinear denoising method
for grey-value images with the capability to denoise even
impulse noise and similar types of noise with heavy-tailed 
distributions, and to retain at the same time sharp edges
in the denoising process.

Like other local image filters, the median filter consists of a
\emph{selection step} that identifies for each pixel location those pixels
which will enter the computation of the filtered value at that
location, followed by an \emph{aggregation step} that combines the intensities
of these pixels into the filtered value.
In the standard setting, the selection step uses a fixed-shape
sliding window, which can be called the \emph{structuring element} following
the naming convention from mathematical morphology.
The aggregation step
consists in taking the median of the selected intensities.
The process can be iterated, giving rise to what is called the
\emph{iterated median filter.}

The median filter, particularly in its iterated form, has been subject
to intensive investigation over the decades. For example,
\cite{Eckhardt-JMIV03} studied so-called
\emph{root signals,} non-trivial steady states that occur in the
iterated median filter and depend subtly on the choice of the structuring
element. Work by Guichard and Morel \cite{Guichard-sana97} has identified
iterated median filtering as an explicit nonstandard discretisation of
(mean) curvature motion \cite{Alvarez-SINUM92}, thus establishing a link
between the discrete filter concept and a partial differential equation (PDE).

\paragraph{Multivariate median filtering.}
Given the merit of median filtering in processing grey-value images one
is interested in stating also a median filter for multi-channel images
such as colour images, flow fields, tensor fields etc.
As the switch from single- to multi-channel images does not affect the
selection step mentioned above but solely the aggregation, it is clear
that what is needed to accomplish this goal is the definition of a
multivariate median. 

A straightforward approach to median filtering of multi-channel data
is to establish some kind of linear order in $\bbbr^n$. For example,
\cite{Caselles-JMIV00} considered a vector median filter of this type
(based on lexicographic ordering) and derived even PDEs for this filter.
A clear shortcoming of such an approach, however, is that mapping
$\bbbr^n$ to $\bbbr$ (which necessarily happens with a linear order) 
either breaks injectivity or continuity, and is usually incompatible with
natural geometric invariances of the data colour space, like symmetries
of colour spaces, or the Euclidean or affine structures of flow vectors 
or tensor spaces.

A starting point for a multivariate median definition that avoids these
problems is the
following characterisation of the univariate median: A
median of a tuple $\mathcal{X}=(x_1,\ldots,x_N)$ of real numbers 
$x_1,\ldots,x_N$ is a real number that minimises
the sum of distances to all numbers of the set,
\begin{equation}
m(\mathcal{X})
=\mathop{\operatorname{argmin}}\limits_{x\in\bbbr}
\sum_{i=1}^N\lvert x-x_i\rvert \;.
\label{m1d}
\end{equation}
Strictly speaking, this minimiser is unique only if the data set is of odd
cardinality; for even-numbered input sets, the two middle elements in the
rank order and all real numbers in between fulfil the criterion, making
$\mathrm{argmin}$ actually set-valued.
Heuristics like mean value are often used to disambiguate the median in
this situation. We will not consider this here but keep in mind that there
is a whole set of medians in this case. At any rate, in the univariate case,
there happens to always exist a number from the given data set which is a
median of this set, such that one can also write
\begin{equation}
m(\mathcal{X})
=\mathop{\operatorname{argmin}}\limits_{x\in\mathcal{X}}
\sum_{i=1}^N\lvert x-x_i\rvert \;.
\label{medoid1d}
\end{equation}

Early attempts to multi-channel median filtering in the computer science and
signal processing literature, starting from \cite{Astola-PIEEE90} in 1990, 
defined therefore a vector-valued ``median''
that selects \emph{from the set of input points} in $\bbbr^n$ the one that
minimises the sum of distances to all other sample points.
Given a tuple $\mathcal{X}:=(\bm{x}_1,\ldots,\bm{x}_N)$ of points
$\bm{x}_i\in\bbbr^n$, this amounts to
\begin{equation}
\bm{m}_{L^1\upharpoonright\mathcal{X}}(\mathcal{X}):=
\mathop{\operatorname{argmin}}\limits_{\bm{x}\in\mathcal{X}}
\sum_{i=1}^N\lVert\bm{x}-\bm{x}_i\rVert\;.
\label{medoidL1}
\end{equation}
In a more differentiating terminology, see e.g.\ \cite{Struyf-JSS97},
such a concept would rather be called a \emph{medoid.}

More recent approaches, such as
\cite{Kleefeld-cciw15,Spence-icip07} for colour images or \cite{Welk-dagm03}
for symmetric
matrices, rely on the same minimisation
but without the restriction to the
given data points, i.e.\ (in the same notations as before)
\begin{equation}
\bm{m}_{L^1}(\mathcal{X}):=
\mathop{\operatorname{argmin}}\limits_{\bm{x}\in\bbbr^n}
\sum_{i=1}^N\lVert\bm{x}-\bm{x}_i\rVert\;.
\label{mL1}
\end{equation}
The underlying multivariate median concept can be traced back in
the statistics literature to works by Hayford from 1902 \cite{Hayford-JASA1902} 
and Weber from 1909 \cite{Weber-Book1909}, followed by
\cite{Austin-Met59,Gini-Met29,Weiszfeld-TMJ37} and many others.
It is nowadays denoted as the \emph{spatial median} or \emph{$L^1$ median}. 
The $L^1$ median is unique for all non-collinear input data sets. Only for
collinear sets non-uniqueness as for the univariate median takes place; in
this case, the $\mathrm{argmin}$ in \eqref{mL1} is actually set-valued.
As these configurations are non-generic, we do not follow this issue further.
For the computation of $L^1$ medians, efficient algorithms are available,
see e.g.\ \cite{Vardi-MP01}.

However, the $L^1$ median is not the only multivariate median concept in 
literature. Another generalisation of the same minimisation property of the
univariate median was introduced by Oja in 1983 \cite{Oja-StPL83} and is
known as the \emph{simplex median} or \emph{Oja median.}
Here, distances between points on the real line from the univariate median 
definition are generalised not to distances in $\bbbr^n$ but to simplex 
volumes. Thus, the simplex median of a finite set of points
in $\bbbr^n$ is the point $\bm{m}\in\bbbr^n$ that minimises the sum of
simplex volumes $\lvert[\bm{m},\bm{a}_1,\ldots,\bm{a}_n]\rvert$ where
$\bm{a}_i$ are distinct points of the input data set, i.e.\
\begin{equation}
\bm{m}_{\mathrm{Oja}}(\mathcal{X}):=
\mathop{\operatorname{argmin}}\limits_{\bm{x}\in\bbbr^n}
\sum_{1\le i_1<\ldots<i_n\le N}\!\!
\lvert[\bm{x},\bm{x}_{i_1},\ldots,\bm{x}_{i_n}]\rvert \;.
\label{mOja}
\end{equation}
An advantage of this concept that is relevant
for many statistics applications is its affine equivariance,
i.e.\ that it commutes with affine
transformations of the data space. In contrast, the $L^1$ median
only affords Euclidean equivariance.
It should be noticed that also in an image processing context
affine equivariance offers an advantage over just Euclidean equivariance:
For images whose value ranges are not equipped with a meaningful Euclidean
structure, justification of Euclidean equivariant concepts like the $L^1$
median is questionable.

While there exist in any dimension even datasets that are not degenerated 
to hyperplanes whose Oja median is non-unique, these cases are non-generic.
A more substantial caveat is that the Oja median is always undefined when
the input data lie on a common hyperplane. Heuristics exist to cure this
but usually these interfere with affine equivariance.

Whereas the affine equivariance of the Oja median concept has been welcomed
in the statistical community, its computational complexity was considered
a problem from the beginning, see the discussion in Section~\ref{ssec-demo-num}. 
On one hand, there are some results regarding more efficient computation
of Oja medians, see e.g.\ \cite{Aloupis-CG03,Ronkainen-drs03}. On the other
hand, researchers have been inspired soon to design multivariate median
concepts that combine affine equivariance with the efficiency of the $L^1$
median \cite{Chakraborty-PAMS96,Hettmansperger-Biomet02,Rao-Sankhya88},
see also the survey in \cite{Chakraborty-StPL99}.
In these approaches, affine equivariance is
achieved using a \emph{transformation--retransformation} method.
Input data sets are normalised by a data-dependent affine transform
$\bm{T}_{\mathcal{X}}:\bbbr^n\to\bbbr^n$. Applying
the standard $L^1$ median and transforming back to the original data space
then yields an affine equivariant median operation
\begin{align}
\bm{m}_{L^1;\mathrm{aff}}(\mathcal{X})&:=
\bm{T}_{\mathcal{X}}^{-1}
\left(\bm{m}_{L^1} \bigl(\bm{T}_{\mathcal{X}}(\mathcal{X})\bigr)\right)
=\bm{T}_{\mathcal{X}}^{-1}
\left(\mathop{\operatorname{argmin}}\limits_{\bm{y}\in\bbbr^n}
\sum_{i=1}^N\lVert\bm{y}-\bm{T}_{\mathcal{X}}(\bm{x}_i)\rVert\right)\;.
\label{mL1a}
\end{align}
The data-dependent affine transform in these approaches is typically
based on an estimator of the covariance matrix of the distribution
underlying the observed data, such that the transformed data are
supposed to follow an isotropic distribution.

Besides these multivariate median concepts that generalise in different
ways the distance sum minimisation property of the univariate median,
there exist several other concepts which we will not consider here, see 
the review \cite{Small-ISR90}.

\paragraph{Multivariate median filters and PDE.}
While the above-mentioned relationship between univariate median filtering
and the mean curvature motion PDE could be extended to relate also adaptive
median filtering procedures \cite{Welk-JMIV11} and further discrete filters
\cite{Welk-Aiep14} to well-understood PDEs of image processing,
the picture changes when turning to multivariate median filtering.
As demonstrated in \cite{Welk-Aiep14}, it is possible to derive some
PDE for median filtering based on the spatial median as in
\cite{Spence-icip07}. 
However, this PDE
involves complicated coefficient functions coming from elliptic integrals
most of which cannot even be stated in closed form, see
\cite{Welk-Aiep14} and for the bivariate case \cite{Welk-ssvm15}.
During the present work it became evident that
the analysis of the $L^1$ median filter in \cite{Welk-Aiep14} 
contained a mistake with the consequence that one term was omitted in
the resulting PDE.
We will state in the present paper corrected results for the case of two-
and three-channel data, the latter restricted to a relevant special case.
A corrected result for the general multivariate case with proof will be 
provided in a forthcoming technical report \cite{Welk-tr15x}.

Given the unfavourable complexity of the PDE approximated by $L^1$
median filtering,
the question arises whether other multivariate median
concepts could be advantageous in multi-channel image processing.
The paper \cite{Welk-ssvm15} was intended
as a first step in this direction
which is continued in the present contribution.
Whereas in \cite{Welk-ssvm15} only bivariate images over planar domains
(like 2D flow fields or, somewhat artificial, two-colour images)
were covered, we extend the view here to include three-channel volume
images (like 3D flow fields) and three-channel planar images (like
colour images). Moreover, we include also an affine equivariant 
transformed $L^1$ median filter based on the transformation--retransformation
procedure in our analysis.

%%%%%%%%%%%%%%%%%%%%%%%%%%%%%%%%%%%%%%%%%%%%%%%%%%%%%%%%
\paragraph{Our contribution.}
This paper extends the work from \cite{Welk-ssvm15}.
Regarding bivariate median filtering of planar images, we restate in this
paper the PDE approximation result for the Oja median from \cite{Welk-ssvm15}.
We present its proof from \cite{Welk-ssvm15} in a slightly modified and
more detailed form, and present a new, alternative proof. We compare the
PDE with that for bivariate $L^1$ median filtering and discuss the geometric 
meaning of these PDEs, showing that they combine an isotropic diffusion
contribution with a curvature motion part and torsion-based cross-effects
between the channels.
We also discuss the degeneracy of the PDE approximated
by the Oja median when the Jacobian of the input function becomes singular.
We also give a formulation for a space-continuous version of the
transformation--retransformation median, which enables us, by recombining
ideas from the analysis of the $L^1$ and Oja median filters, to derive a PDE 
approximation statement for this filter. The outcome is that the two affine
equivariant medians, Oja median and transformation--retransformation $L^1$
median, are asymptotically equivalent as image filters in the case of 
bivariate planar images.

In the case of
three-channel volume images, we prove PDE approximation results for
the Oja median and transformation--retransformation $L^1$ median.
The PDE is again identical for both filters, implying their asymptotical
equivalence. Its structure is analogous to the bivariate case, with the
diffusion, mean curvature motion and torsion-based cross-effect terms.

For three-channel planar images, for which the 3D Oja median on 
local neighbourhoods is degenerated or almost degenerated, we compare the
2D Oja median (minimiser of sum of triangle areas) applied to 3D data
with the transformation--retransformation $L^1$ median, and derive PDE
approximation results for both, which again display the same structure
as in the cases before and confirm asymptotical equivalence of the two
filters.

We test, and verify to reasonable accuracy,
the PDE approximations in all dimensional settings
by numerical experiments that compare discrete multivariate median filters 
for example functions sampled at high grid resolutions with theoretically
derived PDE time steps. Finally, we investigate iterated Oja and 
transformation--retransformation $L^1$ median filtering of RGB colour images 
and compare it to a numerical evaluation of the corresponding PDE. These
experiments, too, confirm the theoretical results.

%%%%%%%%%%%%%%%%%%%%%%%%%%%%%%%%%%%%%%%%%%%%%%%%%%%%%%%%
\paragraph{Structure of the paper.}
The remainder of the paper is structured as follows.
In Section~\ref{sec-geo}, we demonstrate two- and three-channel median filters
on a 2D flow field, as a bivariate test case, and RGB colour images, as a
three-channel example. For the latter, we consider four variants of
three-channel medians: $L^1$, 2D Oja, 3D Oja and 
transformation--retransformation $L^1$ median.
Finally, we discuss basic geometric properties of the $L^1$ and Oja medians in
the bivariate setting.
Section~\ref{sec-pde} is dedicated to the analysis of multivariate median
filters for bivariate planar images, three-channel volume images and
three-channel planar images.
PDE approximation results generalising 
Guichard and Morel's \cite{Guichard-sana97} result for the univariate case
are derived in all settings, and discussed.
In Section~\ref{sec-ex} the results of the theoretical analysis are
validated by numerical experiments on analytic example functions and
RGB images, where the latter also cover iterated median filtering.
A summary and outlook is given in Section~\ref{sec-summ}.
Appendices~\ref{app-22proof1}--\ref{app-23proof} contain detailed proofs
for lemmas from Section~\ref{sec-pde}. Appendix~\ref{app-pdealgo}
details a finite-difference scheme for the PDE approximated by affine
equivariant median filters for RGB images that is used for the experiments
in Section~\ref{sec-ex}.

%%%%%%%%%%%%%%%%%%%%%%%%%%%%%%%%%%%%%%%%%%%%%%%%%%%%%%%%
\section{Comparison of $L^1$ and Oja Median}\label{sec-geo}
%%%%%%%%%%%%%%%%%%%%%%%%%%%%%%%%%%%%%%%%%%%%%%%%%%%%%%%%

To motivate our theoretical analysis, we will demonstrate
in this section the effects of image filters
based on the $L^1$ and Oja median by experiments on 
image and flow field data.
Additionally, some geometric intuition
about the $L^1$ and Oja medians in the bivariate case will be given to
help understanding their relations.

%%%%%%%%%%%%%%%%%%%%%%%%%%%%%%%%%%%%%%%%%%%%%%%%%%%%%%%%
\subsection{Numerical Realisation of Multivariate Median Filters}
\label{ssec-demo-num}
%%%%%%%%%%%%%%%%%%%%%%%%%%%%%%%%%%%%%%%%%%%%%%%%%%%%%%%%

Before we turn to presenting filtering experiments, some words
need to be said about the implementations of the filters as they are
used in this paper.
Given the focus of this work at theoretical connections,
simplicity and comparability of the implementations are in the foreground.
Computational efficiency is not a goal, thus possibilities for
improvements in this respect are only touched grazingly.

Since the objective functions of the $L^1$ and Oja medians are convex,
one can think of numerous generic minimisation algorithms. However,
the objective functions are only piecewise smooth, and may be
extremely anisotropic around their minima. This poses difficulties for
algorithms.
For the numerical computation of $L^1$ and Oja medians in this work,
we use therefore a gradient descent approach with adaptive step-size
control using a line search, similar to the proceeding described
in \cite{Welk-dagm03}.
The advantage of this approach is its simplicity and
the fact that it can be used in a straightforward way for all median
variants considered in this work.

For the $L^1$ median $\bm{m}_{L^1}(\mathcal{X})$, 
one reads off \eqref{mL1} the objective function
$f(\bm{x})=\sum_{i=1}^N\lVert\bm{x}-\bm{x}_i\rVert$.
Its gradients are
computed directly by summation over the data points, which has a
linear complexity $\mathcal{O}(N)$,
which is fast enough to filter e.g.\ $512\times512$
images with structuring elements of radius $5$ within less than 3 minutes
in single-threaded CPU computation on a 3\,GHz machine. 
A substantially more efficient computation would be possible by using an
iterative weighted means algorithm for the $L^1$ median, see 
\cite{Vardi-MP01}.

For Oja medians $\bm{m}_{\mathrm{Oja}}(\mathcal{X})$ 
of two- and three-dimensional input data, see \eqref{mOja}, the 
objective functions are sums of triangle areas,
$f(\bm{x})=\sum_{1\le i<j\le N}\lvert[\bm{x},\bm{x}_i,\bm{x}_j]\rvert$,
or tetrahedron volumes, 
$f(\bm{x})=\sum_{1\le i<j<k\le N}
\lvert[\bm{x},\bm{x}_i,\bm{x}_j,\bm{x}_k]\rvert$,
respectively.
Their gradients are computed here by summation over pairs or triples,
respectively, of data points, which amounts to an $\mathcal{O}(N^2)$
or $\mathcal{O}(N^3)$ complexity, respectively, and is therefore
computationally expensive. It is possible in this way
to compute two- and three-dimensional Oja medians of test functions
within sampled structuring elements and image filters based
on two-dimensional Oja medians, with computation times ranging from
minutes to hours in single-threaded CPU computation, depending on
image and structuring element sizes, and numerical convergence criteria
for the gradient descent. The convergence of the gradient descent
computation can be somewhat accelerated if the input data are 
subjected to an affine transformation that makes their distribution
more isotropic, which is possible based on the affine equivariance of
the Oja median. Principal axis transform of the covariance matrix
can be used to determine a suitable transformation.

In practical application contexts, the computational expense of
such an Oja median filter would be unacceptable. 
Let us therefore mention possible alternatives.
For the bivariate case, \cite{Aloupis-CG03} describes an algorithm
that allows to compute two-dimensional Oja medians in 
$\mathcal{O}(N\log^3 N)$ time. This is achieved by an angular reordering
of points in the gradient computation together with geometric considerations
that limit the possible locations for Oja medians to a small set of
discrete points in the plane.
It can be expected that using this
algorithm would speed up an image filter with a structuring element of 
radius $5$ (approx.\ $80$ sample points) by two to three orders of 
magnitude. Highly parallel computation such as on GPUs
would further improve on this.

For Oja medians in general dimensions, we refer to \cite{Ronkainen-drs03}
where several exact and stochastic algorithms are discussed.

An additional difficulty with Oja medians specifically in image
filtering results from the frequent occurrence of degenerated input data. 
In a multivariate image,
data vectors belonging to pixels from a small local neighbourhood will
often concentrate around or even lie on a lower-dimensional submanifold
of the actual data space. In such a case, the objective function of the
Oja median is made up by volumes of degenerated or almost degenerated
simplices, and the filtering result becomes undetermined or numerically
unstable.

One simple, albeit expensive, way to cope with these degeneracies of Oja
medians is to replace each input data point with a set of
data points that are isotropically scattered in a small neighbourhood
of the actual input point. Thereby one enforces the full dimensionality
of the input set, thus the input data are regularised. Note, however,
that the isotropic scattering of the new data points involves a notion
of metric, and thus goes at the cost of affine equivariance.
In our experiment series with Oja median filtering on one test image
(shown in Figure~\ref{fig-co01} in Section~\ref{ssec-demo-rgb} and used
again in Figure~\ref{fig-co01-evo} in Section~\ref{ssec-evo})
we perform this
kind of input regularisation by replacing each input point by the corners
of a regular simplex centered at the input point, along with the
above-mentioned principal axis transform. All other Oja median
experiments are done with the plain gradient descent algorithm
without these modifications.

To complement the standard $L^1$ median and Oja median filters,
we want to perform also filtering based on the 
affine equivariant transformed $L^1$ median \eqref{mL1a}.
The affine transform $\bm{T}_{\mathcal{X}}$ for a tuple $\mathcal{X}$ of input
data is computed from the 
same principal axis transform of the covariance matrix as mentioned above
in such a way that the covariance matrix for the transformed data
$\bm{T}_{\mathcal{X}}(\mathcal{X})$ becomes diagonal,
with the diagonal entries being $1$ in most cases. Only if the original
covariance matrix is singular or almost singular, some of the diagonal
entries will be close or equal to $0$.
The $L^1$ median $\bm{m}_{L^1}$ inside \eqref{mL1a} is computed
by our gradient descent method.

%%%%%%%%%%%%%%%%%%%%%%%%%%%%%%%%%%%%%%%%%%%%%%%%%%%%%%%%
\subsection{Median Filtering of 2D Flow Fields}
\label{ssec-demo-ff}
%%%%%%%%%%%%%%%%%%%%%%%%%%%%%%%%%%%%%%%%%%%%%%%%%%%%%%%%

\begin{figure}[t!]
\unitlength0.01\textwidth
\begin{picture}(100,24)
\put(0,0){\includegraphics[width=32\unitlength]{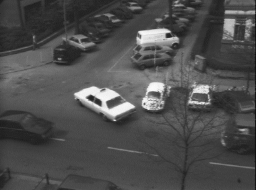}}
\put(34,0){\parbox[b]{66\unitlength}{%
\caption{\label{fig-taxi}
Frame 5 from the \emph{Hamburg taxi sequence} (author: H.-H.\ Nagel),
size: $256\times190$ pixels.}}}
\end{picture}
\end{figure}

\begin{figure}[t!]
\unitlength0.01\textwidth
\begin{picture}(100,24)
\put(0,0){\includegraphics[width=32\unitlength]{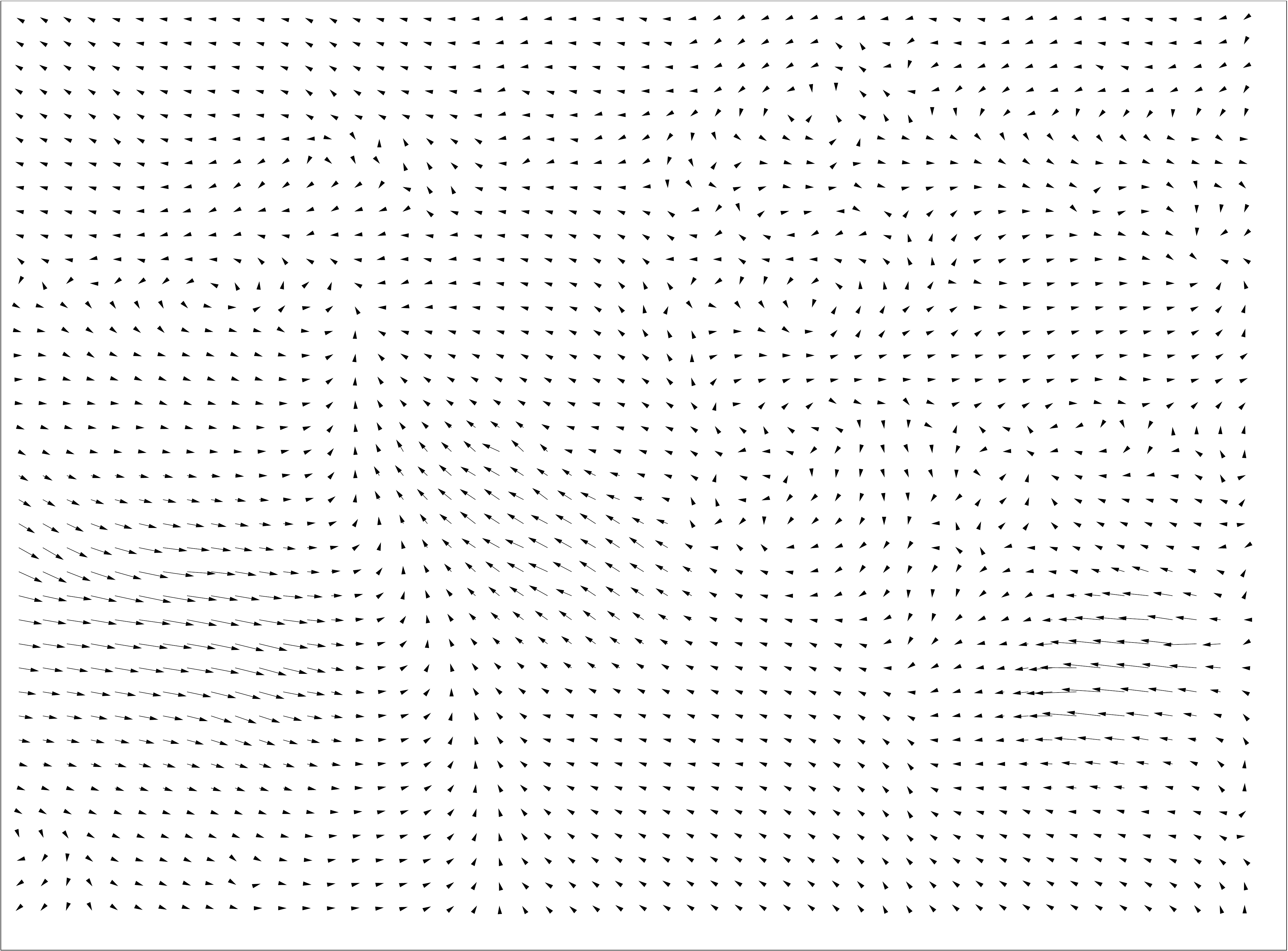}}
\put(34,0){\includegraphics[width=32\unitlength]{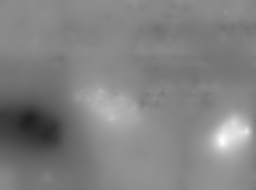}}
\put(68,0){\includegraphics[width=32\unitlength]{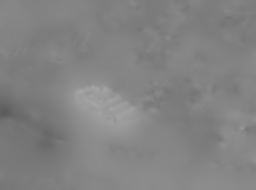}}
\put( 1.1,2){\colorbox{white}{\rule{0pt}{.6em}\hbox to.6em{\kern.1em\smash{a}}}}
\put(35.1,2){\colorbox{white}{\rule{0pt}{.6em}\hbox to.6em{\kern.1em\smash{b}}}}
\put(69.1,2){\colorbox{white}{\rule{0pt}{.6em}\hbox to.6em{\kern.1em\smash{c}}}}
\end{picture}
\caption{\label{fig-flow}
Optical flow between Frames 5 and 6 of the Hamburg taxi sequence,
computed by a coarse-to-fine Horn-Schunck method with warping.
Magnitudes of vector entries range up to approx.\ $2.44$.
\textbf{(a)} Flow field visualised by vector arrows, subsampled (every 5th
flow vector in $x$ and $y$ direction is shown). --
\textbf{(b)} Horizontal component of the same flow field. Grey (128)
represents zero, brighter values represent flows to the left,
darker values flows to the right. --
\textbf{(c)} Vertical component of the flow field.
Grey represents zero, brighter values represent upward flows,
darker values downward flows.
}
\end{figure}

\begin{figure}[t!]
\unitlength0.01\textwidth
\begin{picture}(100,76)
\put(0,52){\includegraphics[width=32\unitlength]{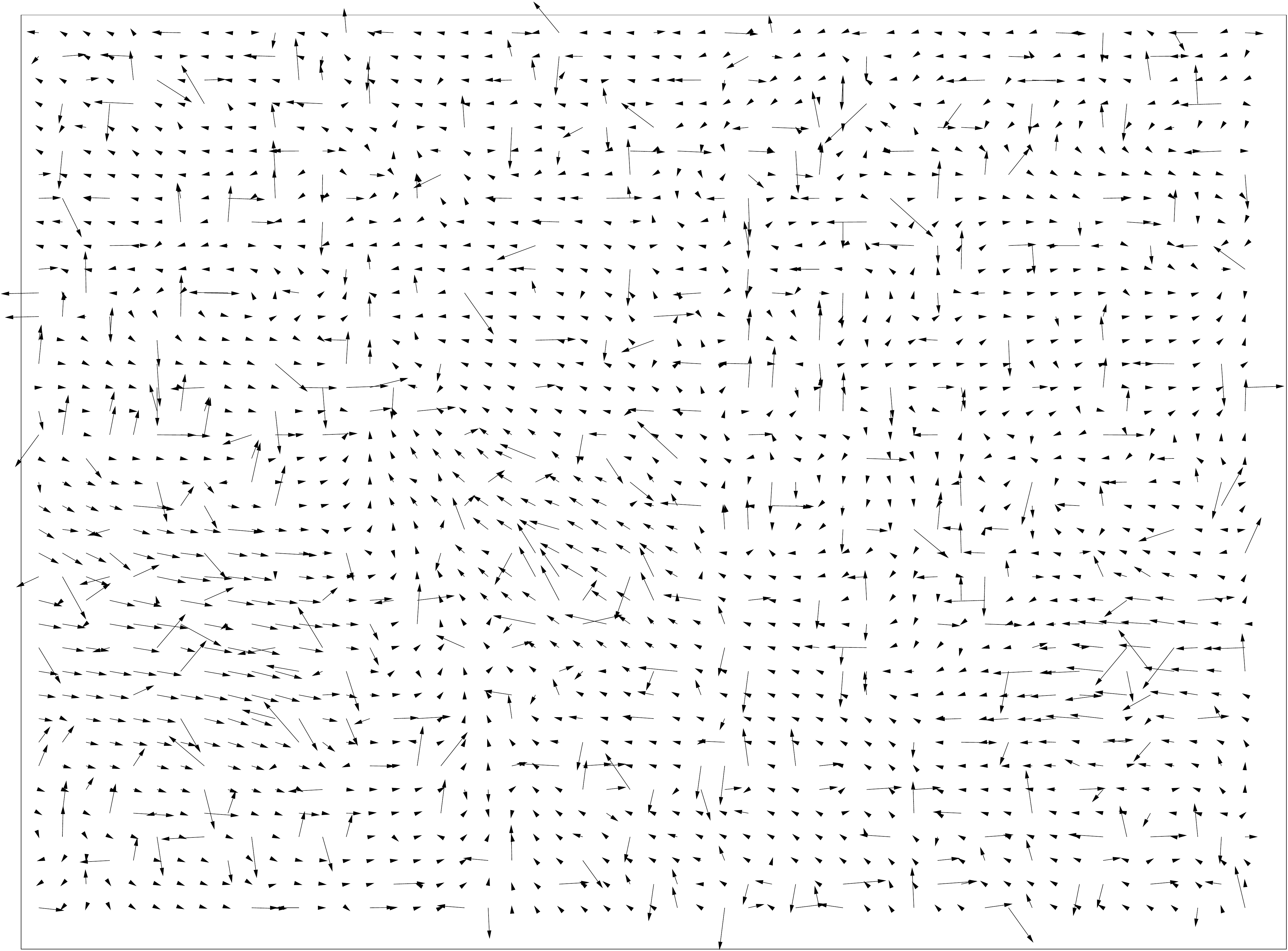}}
\put(34,52){\includegraphics[width=32\unitlength]{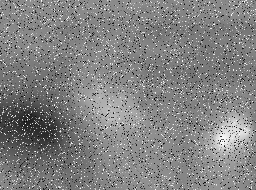}}
\put(68,52){\includegraphics[width=32\unitlength]{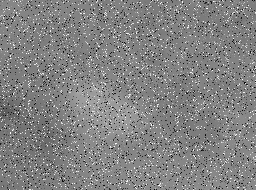}}
\put( 1.1,54){\colorbox{white}{\rule{0pt}{.6em}\hbox to.6em{\kern.1em\smash{a}}}}
\put(35.1,54){\colorbox{white}{\rule{0pt}{.6em}\hbox to.6em{\kern.1em\smash{b}}}}
\put(69.1,54){\colorbox{white}{\rule{0pt}{.6em}\hbox to.6em{\kern.1em\smash{c}}}}
\put(0,26){\includegraphics[width=32\unitlength]{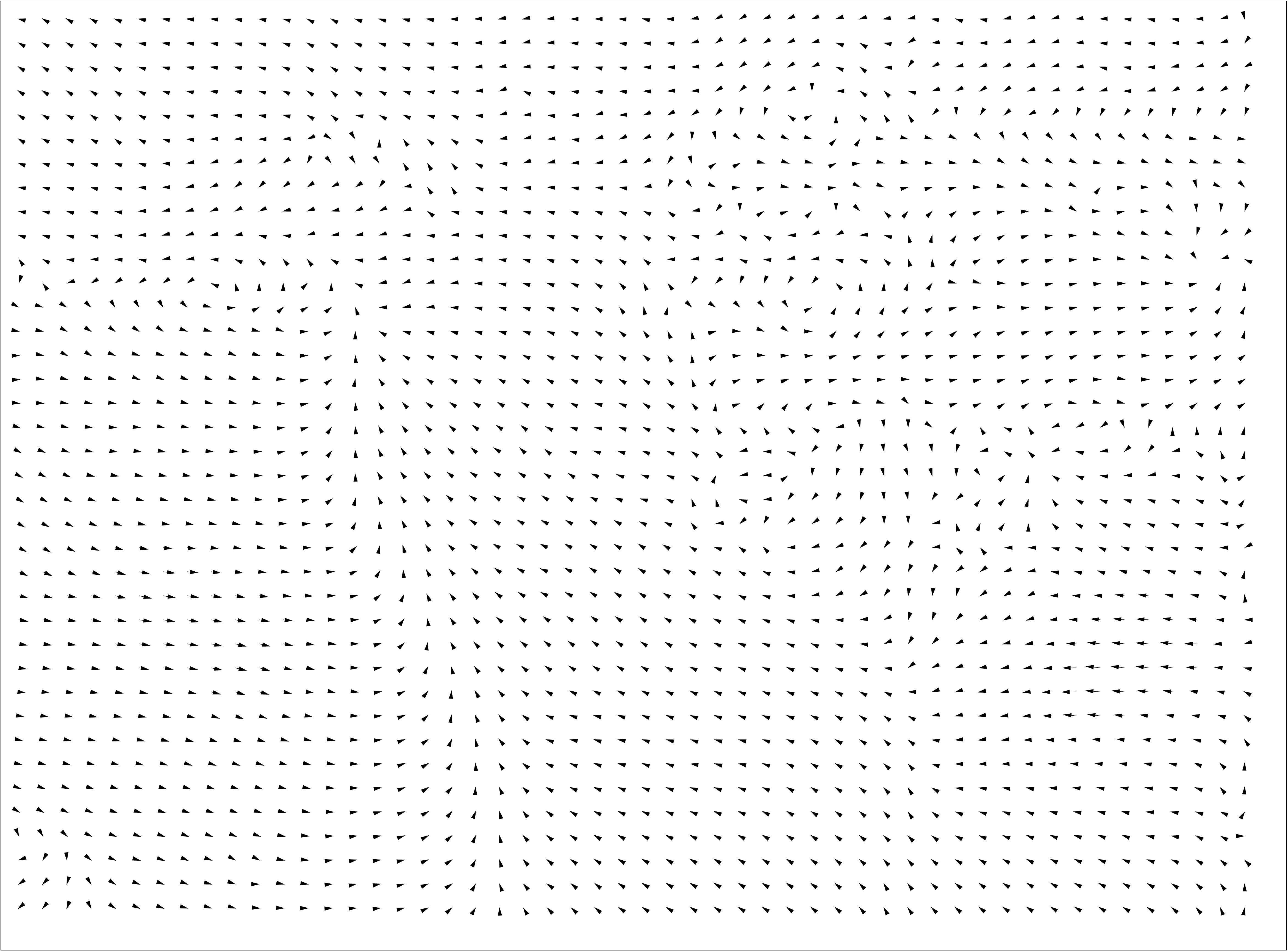}}
\put(34,26){\includegraphics[width=32\unitlength]{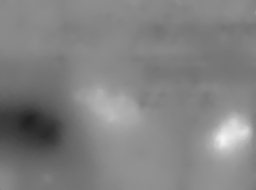}}
\put(68,26){\includegraphics[width=32\unitlength]{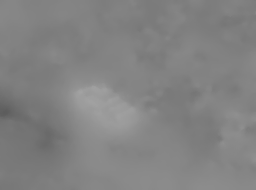}}
\put( 1.1,28){\colorbox{white}{\rule{0pt}{.6em}\hbox to.6em{\kern.1em\smash{d}}}}
\put(35.1,28){\colorbox{white}{\rule{0pt}{.6em}\hbox to.6em{\kern.1em\smash{e}}}}
\put(69.1,28){\colorbox{white}{\rule{0pt}{.6em}\hbox to.6em{\kern.1em\smash{f}}}}
\put(0,0){\includegraphics[width=32\unitlength]{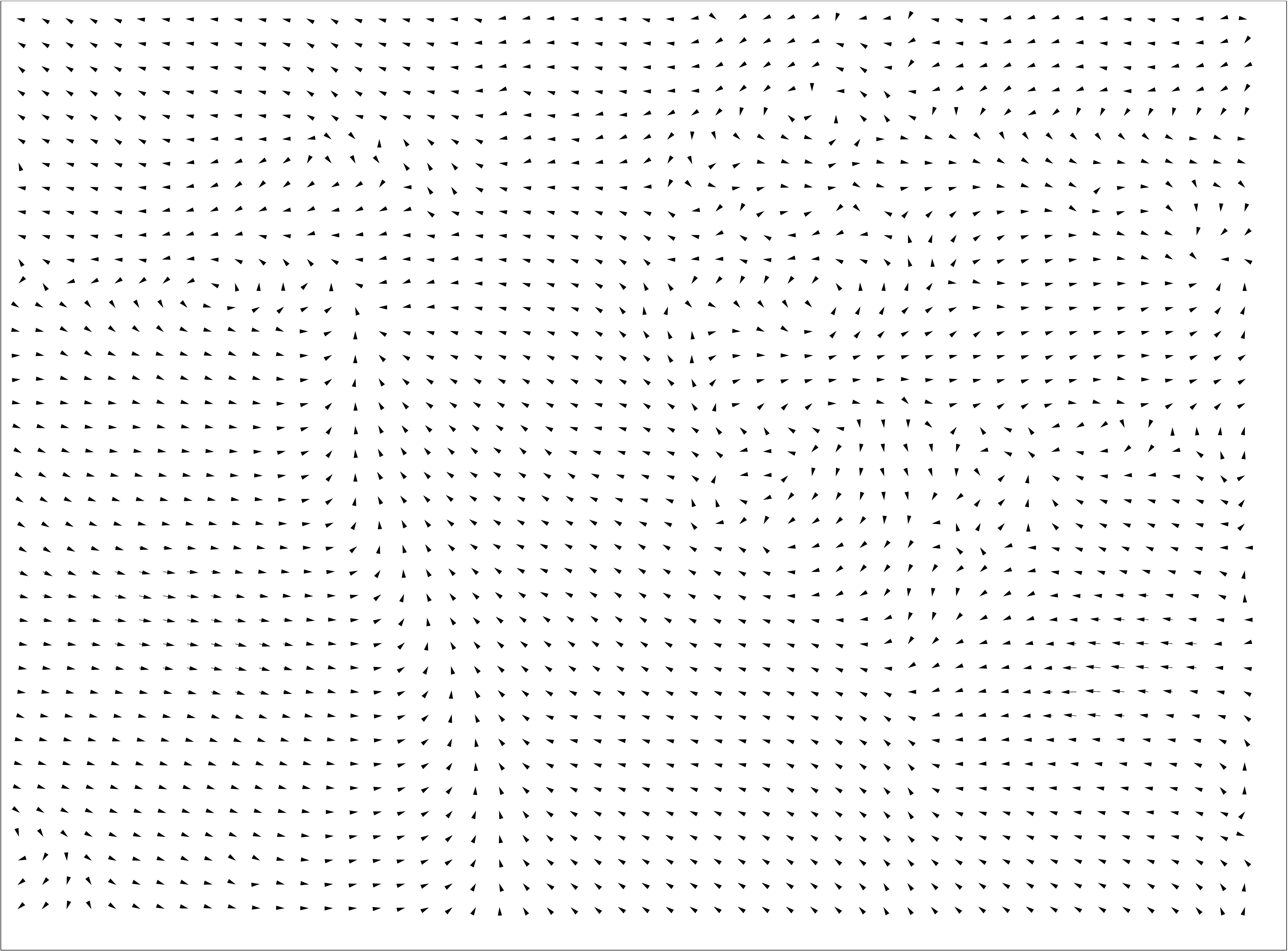}}
\put(34,0){\includegraphics[width=32\unitlength]{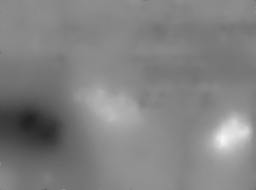}}
\put(68,0){\includegraphics[width=32\unitlength]{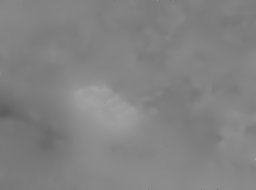}}
\put( 1.1,2){\colorbox{white}{\rule{0pt}{.6em}\hbox to.6em{\kern.1em\smash{g}}}}
\put(35.1,2){\colorbox{white}{\rule{0pt}{.6em}\hbox to.6em{\kern.1em\smash{h}}}}
\put(69.1,2){\colorbox{white}{\rule{0pt}{.6em}\hbox to.6em{\kern.1em\smash{i}}}}
\end{picture}
\caption{\label{fig-flow-un}
\textbf{Top row:}
Flow field from Figure~\ref{fig-flow} degraded by uniform noise where 
$20\,\%$ of the vector entries have been replaced by random values in
$[-2.44,2.44]$.
\textbf{(a)} Subsampled vector field representation. --
\textbf{(b)} Horizontal component. --
\textbf{(c)} Vertical component. --
\textbf{Middle row (d--f):}
Denoised by one step of $L^1$ median filtering with a disc-shaped 
structuring element of radius $3$.
\textbf{Bottom row (g--i):}
Denoised by one step of Oja median filtering with the same structuring
element as in (d--f).
}
\end{figure}

We turn first to the bivariate case. Possible applications for this
setup include two-channel colour images, for which an example was
presented in \cite{Welk-ssvm15}, or, with more practical relevance,
2D flow fields.

We demonstrate here bivariate median filtering on an
exemplary flow field computed from
two frames of the \emph{Hamburg taxi sequence}. The first of these 
frames is shown in Figure~\ref{fig-taxi}. Within the sequence, the
taxi moves in the upper left direction, whereas two vehicles enter the
scene from the left and right margin.
The flow field, visualised in Figure~\ref{fig-flow}, has been obtained
using an implementation of the Horn-Schunck method \cite{Horn-AI81}
within a coarse-to-fine multiscale approach with warping
\cite{Anandan-IJCV89,Memin-TIP98} in order to cope with displacements
larger than one pixel.

In the top row of Figure~\ref{fig-flow-un} this flow field has been
degraded by uniform impulse noise with $20\,\%$ density applied to
the horizontal and vertical flow components independently.
The middle and bottom row of Figure~\ref{fig-flow-un} show results
of $L^1$ and Oja median filtering, respectively, both of which succeed
to remove the noise and restore a smooth flow field similar to the
original one. Note that the filtering results of both median filters
are very similar.

%%%%%%%%%%%%%%%%%%%%%%%%%%%%%%%%%%%%%%%%%%%%%%%%%%%%%%%%
\subsection{Median Filtering of RGB Images}
\label{ssec-demo-rgb}
%%%%%%%%%%%%%%%%%%%%%%%%%%%%%%%%%%%%%%%%%%%%%%%%%%%%%%%%

\begin{figure}[t!]
\unitlength0.01\textwidth
\begin{picture}(100,74.7)
\put( 0.0,50.6){\includegraphics[width=24.1\unitlength]{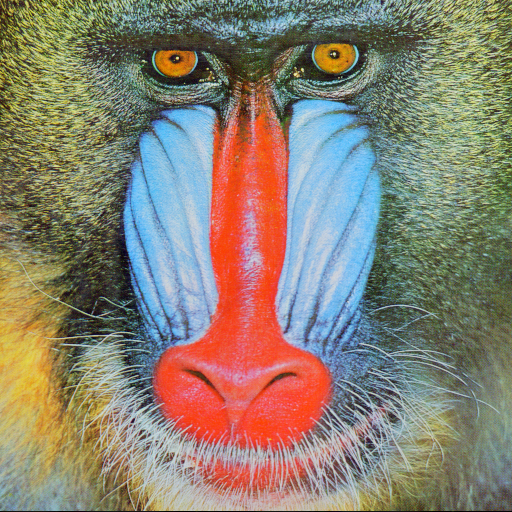}}
\put(25.3,50.6){\includegraphics[width=24.1\unitlength]{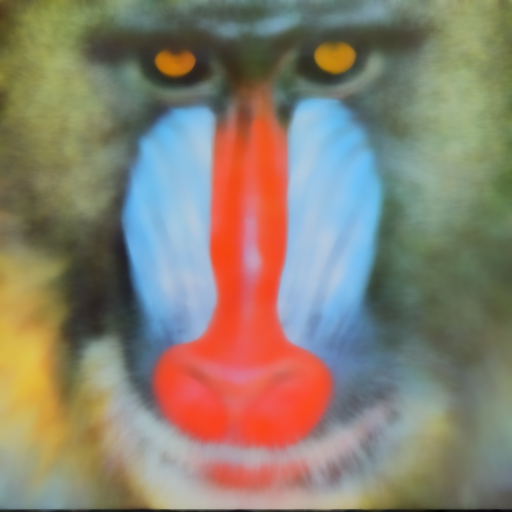}}
\put(50.6,50.6){\includegraphics[width=24.1\unitlength]{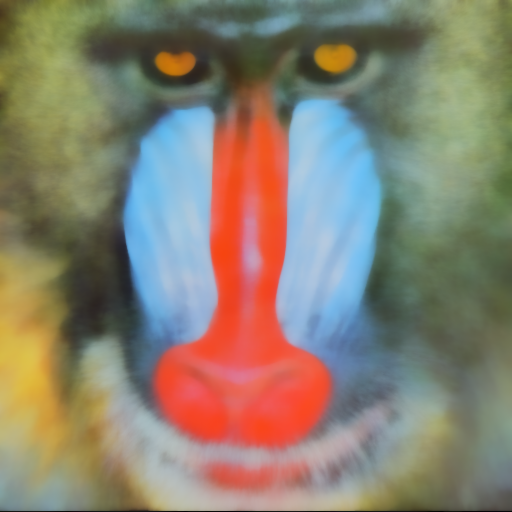}}
\put(75.9,50.6){\includegraphics[width=24.1\unitlength]{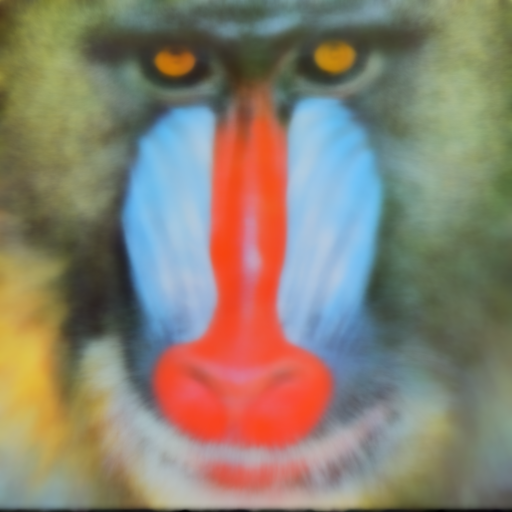}}
\put( 1.1,52.6){\colorbox{white}{\rule{0pt}{.6em}\hbox to.6em{\kern.1em\smash{a}}}}
\put(26.4,52.6){\colorbox{white}{\rule{0pt}{.6em}\hbox to.6em{\kern.1em\smash{b}}}}
\put(51.7,52.6){\colorbox{white}{\rule{0pt}{.6em}\hbox to.6em{\kern.1em\smash{c}}}}
\put(77.0,52.6){\colorbox{white}{\rule{0pt}{.6em}\hbox to.6em{\kern.1em\smash{d}}}}
\put( 0.0,25.3){\includegraphics[width=24.1\unitlength]{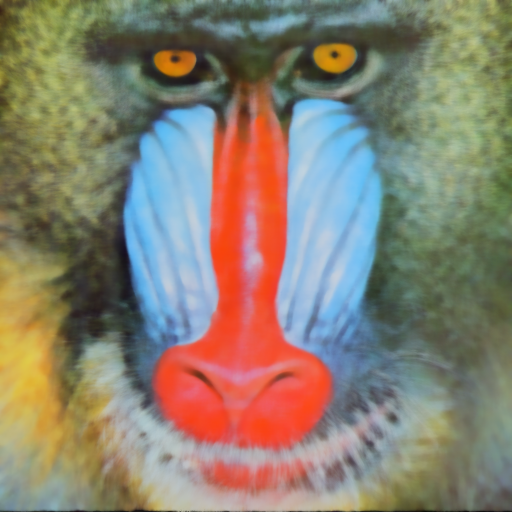}}
\put(25.3,25.3){\includegraphics[width=24.1\unitlength]{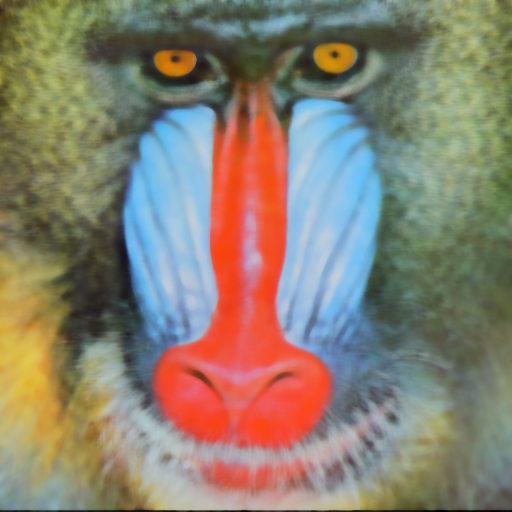}}
\put(50.6,25.3){\includegraphics[width=24.1\unitlength]{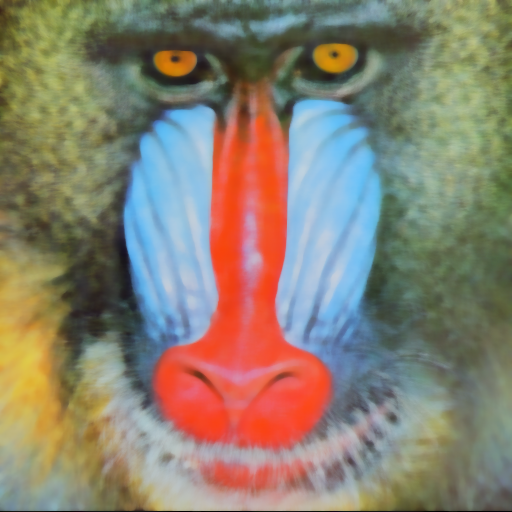}}
\put(75.9,25.3){\includegraphics[width=24.1\unitlength]{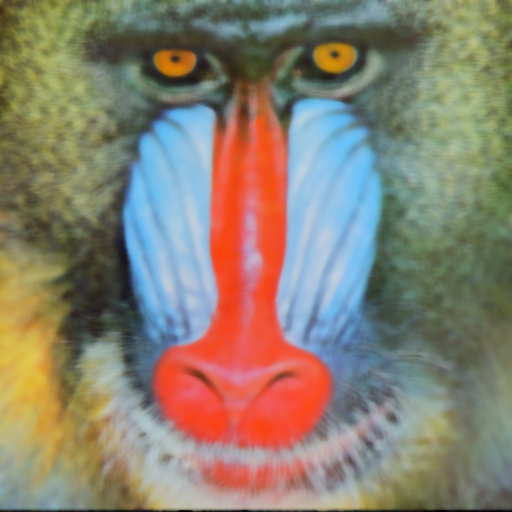}}
\put( 1.1,27.3){\colorbox{white}{\rule{0pt}{.6em}\hbox to.6em{\kern.1em\smash{e}}}}
\put(26.4,27.3){\colorbox{white}{\rule{0pt}{.6em}\hbox to.6em{\kern.1em\smash{f}}}}
\put(51.7,27.3){\colorbox{white}{\rule{0pt}{.6em}\hbox to.6em{\kern.1em\smash{g}}}}
\put(77.0,27.3){\colorbox{white}{\rule{0pt}{.6em}\hbox to.6em{\kern.1em\smash{h}}}}
\put( 0.0, 0.0){\includegraphics[width=24.1\unitlength]{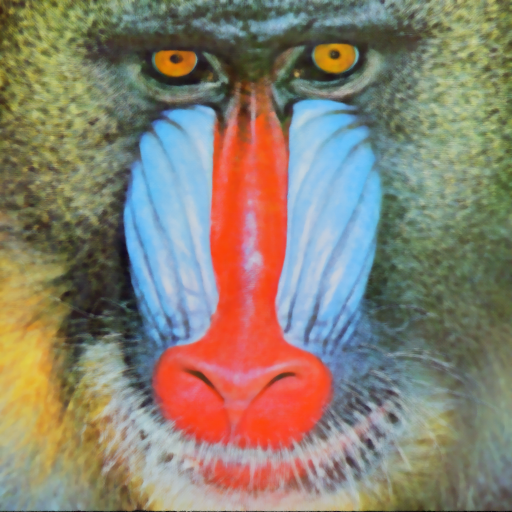}}
\put(25.3, 0.0){\includegraphics[width=24.1\unitlength]{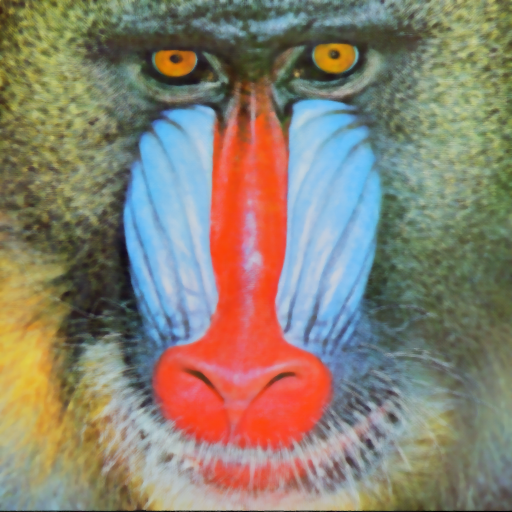}}
\put(50.6, 0.0){\includegraphics[width=24.1\unitlength]{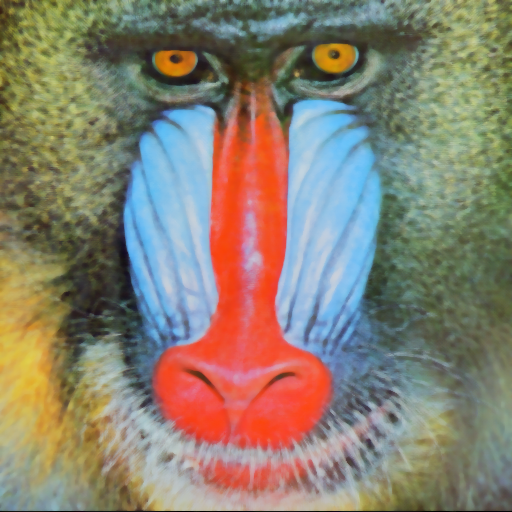}}
\put(75.9, 0.0){\includegraphics[width=24.1\unitlength]{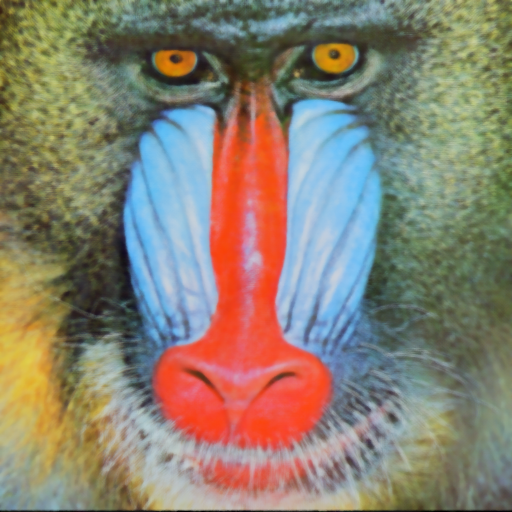}}
\put( 1.1, 2.0){\colorbox{white}{\rule{0pt}{.6em}\hbox to.6em{\kern.1em\smash{i}}}}
\put(26.4, 2.0){\colorbox{white}{\rule{0pt}{.6em}\hbox to.6em{\kern.1em\smash{j}}}}
\put(51.7, 2.0){\colorbox{white}{\rule{0pt}{.6em}\hbox to.6em{\kern.1em\smash{k}}}}
\put(77.0, 2.0){\colorbox{white}{\rule{0pt}{.6em}\hbox to.6em{\kern.1em\smash{l}}}}
\end{picture}
\caption{\label{fig-baboon}
Median filtering of the \emph{Baboon} test image 
using different multivariate medians and
disc-shaped structuring elements $D_\varrho$ of varying radius.
\textbf{Top row, left to right:}
\textbf{(a)} Original RGB image ($512\times512$ pixels). --
\textbf{(b)} Filtered using 2D Oja median with $\varrho=10$. --
\textbf{(c)} Filtered using $L^1$ median with $\varrho=10$. --
\textbf{(d)} Filtered using affine equivariant transformed
$L^1$ median with $\varrho=10$. --
\textbf{Middle row, left to right:}
\textbf{(e)} Filtered using 3D Oja median with $\varrho=5$. --
\textbf{(f)} Filtered using 2D Oja median with $\varrho=5$. --
\textbf{(g)} Filtered using $L^1$ median with $\varrho=5$. --
\textbf{(h)} Filtered using affine equivariant transformed
$L^1$ median with $\varrho=5$. --
\textbf{Bottom row, left to right:}
\textbf{(i)} Filtered using 3D Oja median with $\varrho=3$. --
\textbf{(j)} Filtered using 2D Oja median with $\varrho=3$. --
\textbf{(k)} Filtered using $L^1$ median with $\varrho=3$. --
\textbf{(l)} Filtered using affine equivariant transformed
$L^1$ median with $\varrho=3$.
}
\end{figure}

In this section, we consider the filtering of RGB colour image data.
The RGB colour space is used here for its simplicity. A comparison
with other colour spaces like HSV, HCL, YCbCr etc.\ is left to future
work, and will be important when evaluating the applicability of
multivariate median filters in, e.g., denoising applications.
It is worth noting, however, that common colour spaces are related
via differentiable transforms (with isolated singularities to be observed
in some cases). This means that \emph{locally} replacing one colour
space with another is just an affine transformation (given by the Jacobian
of the colour space transform). For affine equivariant median filters
applied in small neighbourhoods of smooth images, filtering results
can therefore be expected to be largely independent of the colour space
being used.

Application of the $L^1$ median to three-channel data is straightforward.
Regarding the Oja median filter, it is worth noting that 
a planar RGB image is a discretisation of a function
$\bm{u}: \mathbb{R}^2\supset\varOmega\to\mathbb{R}^3$, i.e.\ a parametrised
surface in $\mathbb{R}^3$.
The values of $\bm{u}$ (RGB triples) within a small patch of $\varOmega$,
such as the structuring element of a pixel, form a surface patch
in $\mathbb{R}^3$. 
For a noise-free image, the function $\bm{u}$ can be assumed to be smooth, 
resulting in almost planar surface patches.

One consequence of this is that the 3D Oja median applied to the
RGB triples from a structuring element will be the minimiser of a sum of
simplex volumes where virtually all of the simplices are almost degenerated.

On the other hand, the 2D Oja median, which minimises a sum of triangle
areas, can easily be applied to these data, which gives us a further option
for median filtering of planar RGB images that stands between the $L^1$
(thus, 1D Oja) and 3D Oja median,
\begin{equation}
\bm{m}_{\mathrm{Oja}(2,3)}(\mathcal{X}):=
\mathop{\operatorname{argmin}}\limits_{\bm{x}\in\bbbr^3}
\sum_{1\le i<j\le N}\!\!
\lvert[\bm{x},\bm{x}_i,\bm{x}_j]\rvert \;.
\label{mOja23}
\end{equation}
Of course, the 2D Oja median $\bm{m}_{\mathrm{Oja}(2,3)}$ for general 3D 
data is not equivariant
under affine transformations of $\mathbb{R}^3$. However, the 2D Oja median
of co-planar data from $\mathbb{R}^3$ is affine equivariant even
with respect to affine transformations of $\mathbb{R}^3$. Since the
RGB triples being filtered are almost co-planar, it can be expected
that a 2D Oja median filter for planar RGB images will display a good 
approximation to affine equivariance.
We include therefore in our experiments four filters based on
the standard $L^1$ median \eqref{mL1}, the 2D \eqref{mOja23} and 
3D \eqref{mOja} Oja median, and
the affine equivariant transformed $L^1$ median \eqref{mL1a}.

Figure~\ref{fig-baboon} shows results of filtering of an RGB image with
these three filters with disc-shaped structuring elements $D_\varrho$
of different size. (The combination of the full 3D Oja median filter
with a structuring element of radius $\varrho=10$ is beyond computational
possibilities with our simple algorithm and therefore omitted.)

Results indicate that the four median filter variants again give
very similar results. As the size of structuring elements increases,
the behaviour known from univariate median filters is observed: 
Small image details
are progressively smoothed out, whereas strong edges between homogeneous
regions are kept sharp even for larger structuring elements.

\begin{figure}[t!]
\unitlength0.0048\textwidth
\begin{picture}(208,126)
\put(  0.0,94){\includegraphics[width=32\unitlength]{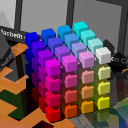}}
\put( 34.0,94){\includegraphics[width=32\unitlength]{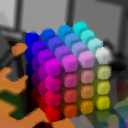}}
\put( 68.0,94){\includegraphics[width=32\unitlength]{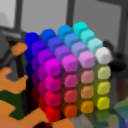}}
\put(  1.1,96){\colorbox{white}{\rule{0pt}{.6em}\hbox to.6em{\kern.1em\smash{a}}}}
\put( 35.1,96){\colorbox{white}{\rule{0pt}{.6em}\hbox to.6em{\kern.1em\smash{b}}}}
\put( 69.1,96){\colorbox{white}{\rule{0pt}{.6em}\hbox to.6em{\kern.1em\smash{c}}}}
\put(  0.0,60){\includegraphics[width=32\unitlength]{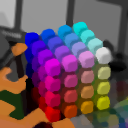}}
\put( 34.0,60){\includegraphics[width=32\unitlength]{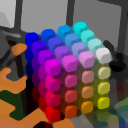}}
\put( 68.0,60){\includegraphics[width=32\unitlength]{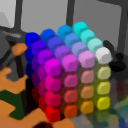}}
\put(  1.1,62){\colorbox{white}{\rule{0pt}{.6em}\hbox to.6em{\kern.1em\smash{d}}}}
\put( 35.1,62){\colorbox{white}{\rule{0pt}{.6em}\hbox to.6em{\kern.1em\smash{e}}}}
\put( 69.1,62){\colorbox{white}{\rule{0pt}{.6em}\hbox to.6em{\kern.1em\smash{f}}}}
\put(  0.0,26){\includegraphics[width=32\unitlength]{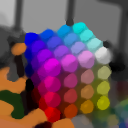}}
\put( 34.0,26){\includegraphics[width=32\unitlength]{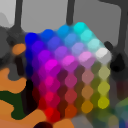}}
\put( 68.0,26){\includegraphics[width=32\unitlength]{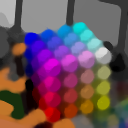}}
\put(  1.1,28){\colorbox{white}{\rule{0pt}{.6em}\hbox to.6em{\kern.1em\smash{g}}}}
\put( 35.1,28){\colorbox{white}{\rule{0pt}{.6em}\hbox to.6em{\kern.1em\smash{h}}}}
\put( 69.1,28){\colorbox{white}{\rule{0pt}{.6em}\hbox to.6em{\kern.1em\smash{i}}}}
\put(108.0,18){\parbox[b]{100\unitlength}{%
\caption{\label{fig-co01}
Median filtering of the \emph{Colors} test image
using different multivariate medians and
disc-shaped structuring elements $D_\varrho$ of varying radius.
The test image is a clipping from the image 
``Delta E'' from \texttt{http://brucelindbloom.com/}
(\copyright 2001--2015 Bruce Justin Lindbloom; 
research and non-commercial use permitted.)
\textbf{Top row, left to right:}
\textbf{(a)} Original RGB image ($128\times128$ pixels). --
\textbf{(b)} Filtered using 2D Oja median with $\varrho=3$, 
without regularisation. --
\textbf{(c)} Filtered using 3D Oja median with $\varrho=3$,
with input regularisation. --
\textbf{Middle row, left to right:}
\textbf{(d)} Filtered using 2D Oja median with $\varrho=3$,
with input regularisation. --
\textbf{(e)} Filtered using $L^1$~median with $\varrho=3$.~~--~~
\textbf{(f)} Filtered using 
}}}
\put(0,0){\parbox[b]{208\unitlength}{%
affine equivariant transformed
$L^1$ median with $\varrho=3$. --
\textbf{Bottom row, left to right:}
\textbf{(g)} Filtered using 2D Oja median with $\varrho=5$,
with input regularisation. --
\textbf{(h)} Filtered using $L^1$ median with $\varrho=5$. --
\textbf{(i)} Filtered using affine equivariant transformed
$L^1$ median with $\varrho=5$.
}}
\end{picture}
\end{figure}

Whereas the test image used in Figure~\ref{fig-baboon} contains many 
fine-scale structures everywhere in the image, we consider in our
next experiment, Figure~\ref{fig-co01}, a test image which is 
dominated by smooth regions, some
even with constant colour values, separated by sharp boundaries. This image,
shown in Figure~\ref{fig-co01}(a),
is almost perfectly noise-free apart from the quantisation noise.
Therefore, the RGB triples found within a structuring element are often 
strictly
collinear such that the degeneracy of the Oja median energies becomes an
issue in computation. This is demonstrated in Figure~\ref{fig-co01}(b)
by the result of (2D) Oja median filtering without regularisation. Note
that most edges are substantially blurred. However, at some junctions
where values from sufficiently many regions within a small neighbourhood
create input data sets of sufficient dimensionality, edges stay sharp.
For the further 2D and 3D Oja median filtering in this experiment series,
we use therefore the input regularisation as described in 
Section~\ref{ssec-demo-num}, consisting of replacing points with 
quadruples of simplex corners and subsequent normalisation by principal
axis transform.
Even with this proceeding, a slight blur remains visible in
the Oja results, Figure~\ref{fig-co01}(c), (d) and (g), especially
for the larger structuring element (g).
Apart from this, the results of 3D (c) and 2D Oja
median filtering (d, g) as well as those of standard (e, h) and 
affine equivariant transformed $L^1$ median filtering (f, i) are again
largely comparable. They show the structure simplification and rounding 
of contours known from univariate median filters, whereas edges
are kept reasonably sharp.

\begin{figure}[t]
\unitlength0.01\textwidth
\begin{picture}(100,36)
\put(2,19){\includegraphics[width=27\unitlength]{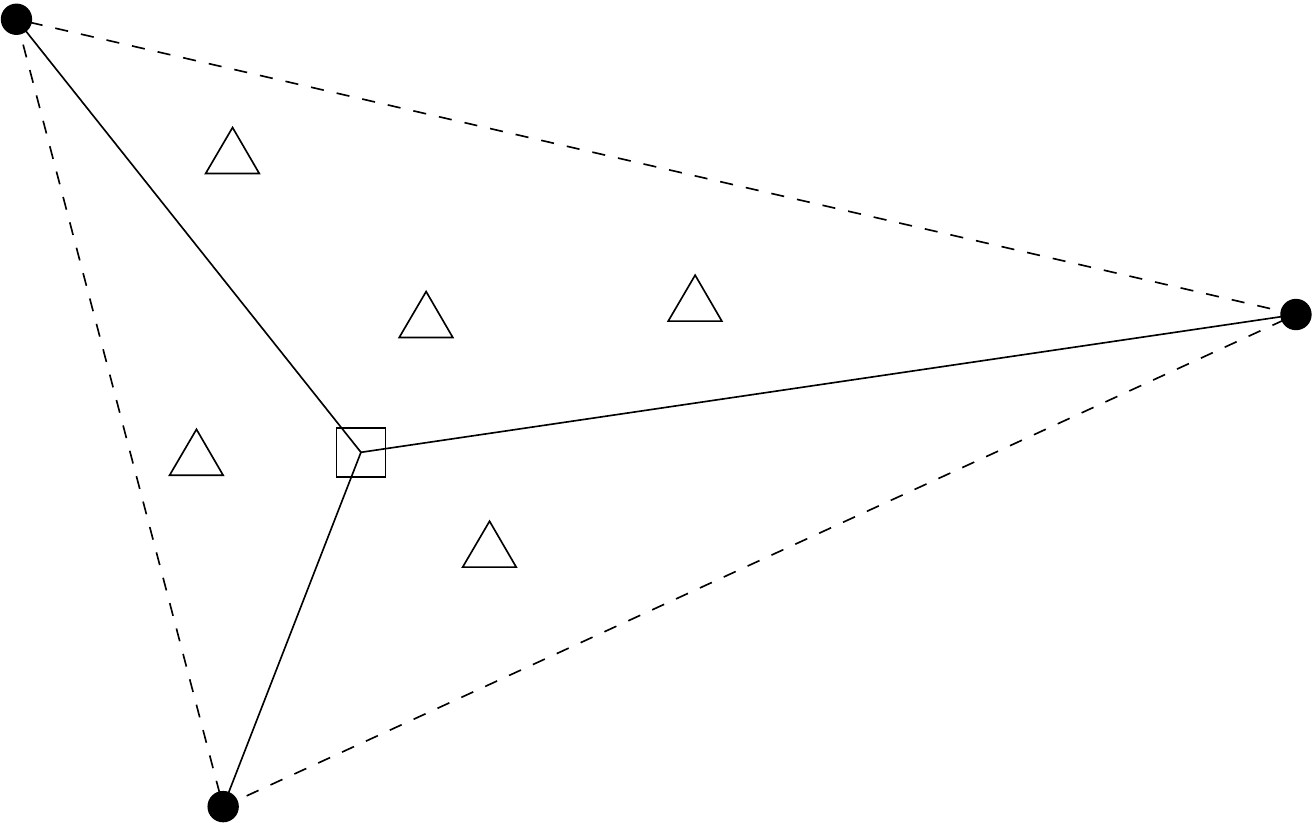}}
\put(31,22){\includegraphics[width=33\unitlength]{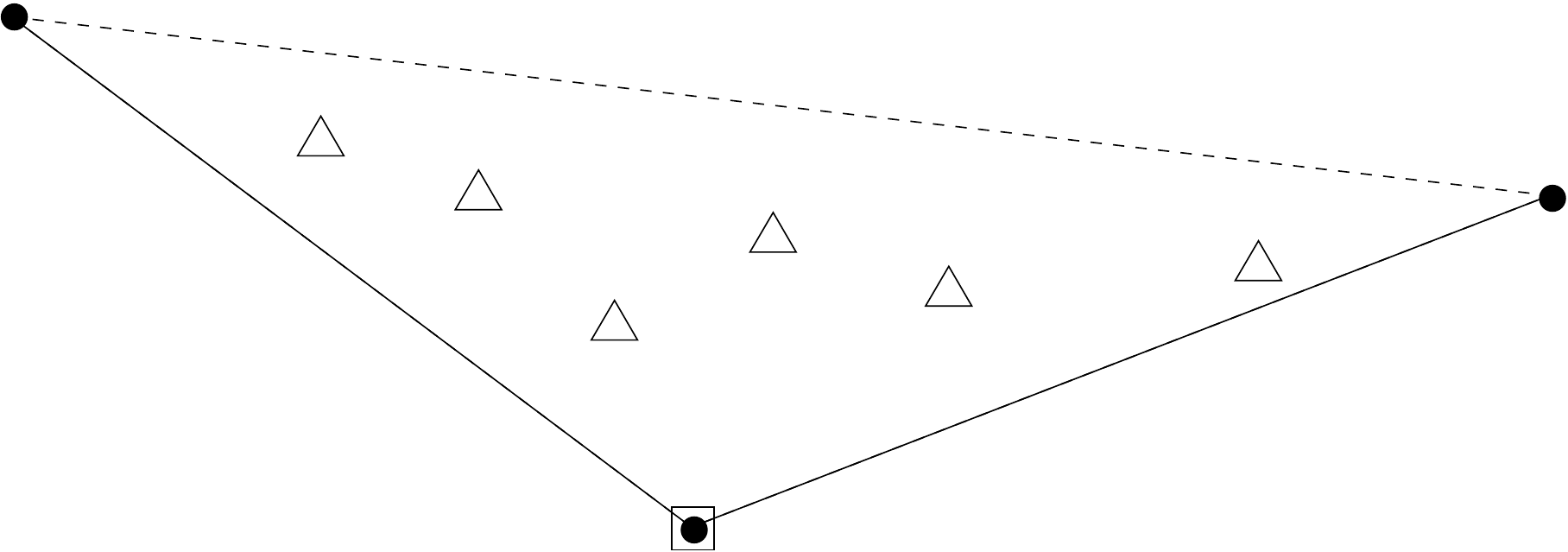}}
\put(66,20){\includegraphics[width=30\unitlength]{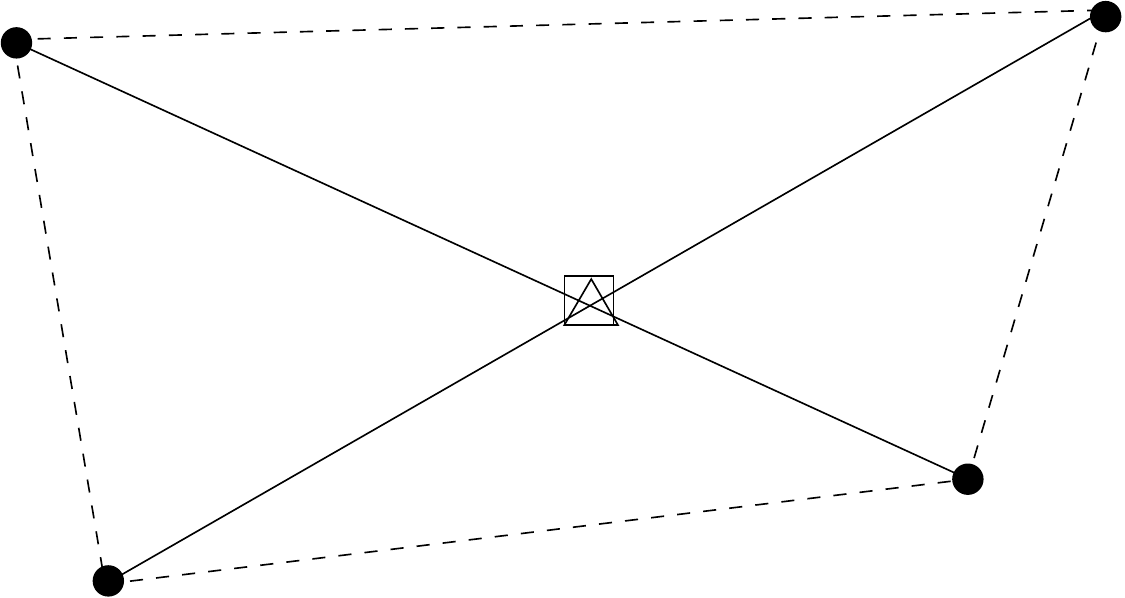}}
\put(18,0){\includegraphics[width=30\unitlength]{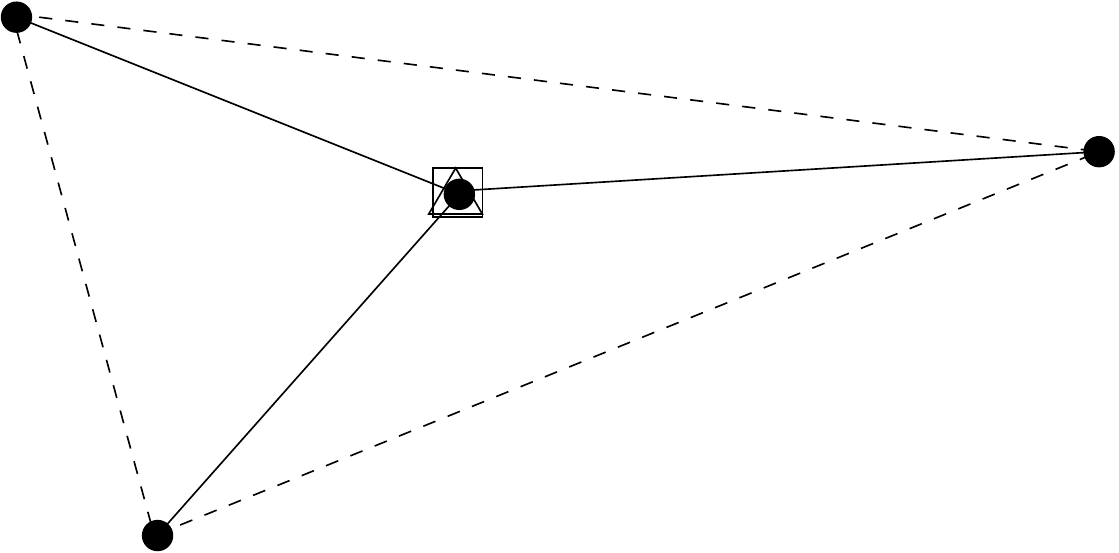}}
\put(50,0){\includegraphics[width=30\unitlength]{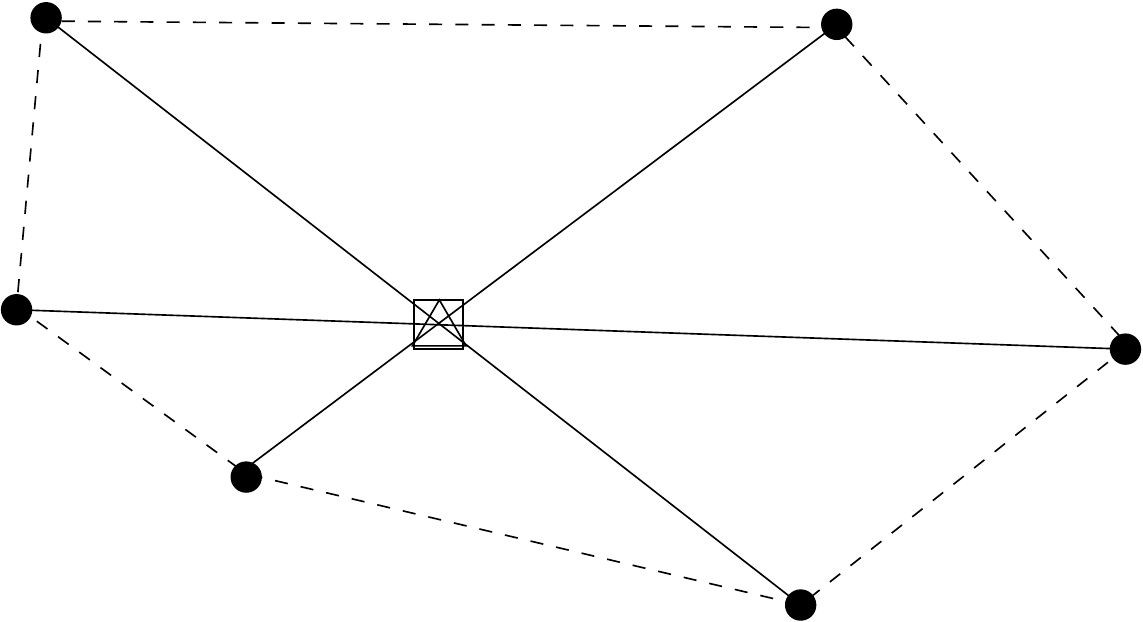}}
\put(4,19){a}
\put(38,19){b}
\put(66,19){c}
\put(18,0){d}
\put(54,0){e}
\end{picture}
\caption{\label{fig-geometry}
Simple configurations of input data points (solid points) with
their $L^1$ medians (squares) and Oja medians (triangles).
\textbf{(a)} Three points forming a triangle with all interior
angles less than $120$ degrees: The $L^1$ median is the
\emph{Steiner point}; any point within the triangle is an Oja
median. --
\textbf{(b)} Three points forming a triangle with an obtuse
angle of $120$ degrees or more: The obtuse corner is the $L^1$
median; still, all points within the triangle are Oja medians. --
\textbf{(c)} Four points forming a convex quadrangle:
the $L^1$ and Oja median coincide at the intersection of the
diagonals. --
\textbf{(d)} Four points whose convex hull is a triangle:
the $L^1$ and Oja median coincide at
the data point that is not a corner of the convex hull. --
\textbf{(e)} $2n$ points that form a convex $2n$-gon (hexagon
shown as example) in which all diagonals between opposing points
have a common intersection point: its $L^1$ and Oja median coincide
at this intersection point. --
From \cite{Welk-ssvm15}.
}
\end{figure}

%%%%%%%%%%%%%%%%%%%%%%%%%%%%%%%%%%%%%%%%%%%%%%%%%%%%%%%%
\subsection{Geometric Facts about Bivariate $L^1$ and Oja Median}
%%%%%%%%%%%%%%%%%%%%%%%%%%%%%%%%%%%%%%%%%%%%%%%%%%%%%%%%

To add some geometric intuition about the $L^1$ and Oja medians,
we consider small point sets in the plane and their medians.
The following statements can easily be inferred from standard elementary
geometry arguments such as the triangle
inequality (for the $L^1$ median) and multiplicities of covering of
the convex hull of input points by the triangles with input and median
points as corners (for the 2D Oja median).

In all cases, the $L^1$ and Oja medians 
will be located within the convex hull of the input data set
(if, in the case of the Oja median, this set is not collinear)
due to the convexity of the objective functions being minimised.

\begin{enumerate}
\item
For two points, the $L^1$ median criterion is fulfilled equally for
all points of their connecting line segment. The Oja median criterion
is even fulfilled by all points of the straight line through these
points since the Oja median definition degenerates for collinear
sets of points.
\item
For three points, the $L^1$ median depends on the sort of triangle
they span. If all of its interior angles are smaller than $120$
degrees, see Figure~\ref{fig-geometry}(a), the sum of distances
to the corners is minimised by a unique point known as \emph{Steiner point}
or \emph{Fermat-Torricelli point,} from which all sides of the
triangle are seen under $120$ degree angles. For a triangle with
an obtuse corner of at least $120$ degrees, this corner is the
$L^1$ median, see Figure~\ref{fig-geometry}(b).

In contrast, the
Oja median criterion is met in both cases by all points of the
triangle. This is consistent with the affine equivariance of
the Oja median that does not discriminate triangles by shape.
Besides, this configuration nicely illustrates how
in the Oja median definition 
simplices take the role of line segments from the
univariate median definition: the three-point case of the bivariate
Oja median is just the analogue of the two-point case of the univariate
median.
\item
For four points, $L^1$ and Oja median always coincide: If the
convex hull of the data points is a triangle, then the data point
that is not a corner of the convex hull is the median, see
Figure~\ref{fig-geometry}(d); if it is
a convex quadrangle, then the intersection point of its diagonals
is the median, see Figure~\ref{fig-geometry}(c).
\item
The coincidence between $L^1$ and Oja median continues also in
some configurations of more data points. A (non-generic) example
is shown in Figure~\ref{fig-geometry}(e): A convex $2n$-gon
in which all the diagonals that bisect the point set (i.e.\ those
that span $n$ sides) have a common intersection point, features this
point as $L^1$ and Oja median.
\end{enumerate}

We point out two facts that can be learned from these simple
configurations.
Firstly, bivariate medians, unlike their univariate counterpart,
cannot always be chosen from the input data set, but they happen
to be input data points in some generic configurations. Only in cases
when none of the input points lies sufficiently ``in the middle''
of the data, a new point is created. Secondly, despite their
different definitions, the $L^1$ and Oja median coincide in some
generic situations, or are not far apart from each other. This
adds plausibility to why the image filtering results in
Figure~\ref{fig-baboon} and Figure~\ref{fig-co01} are that similar.

We conclude this section by mentioning a result from
\cite{Barbara-MG01}: 
For a non-collinear point set in the plane, there exists always an Oja median
(i.e.\ a point minimising the relevant objective function)
that is the intersection of two lines, each of which goes through
two data points. Restricting the search for minimisers to the
finite set of these intersection points is one of the ingredients
in the efficient planar Oja median algorithm from \cite{Aloupis-CG03}.

%%%%%%%%%%%%%%%%%%%%%%%%%%%%%%%%%%%%%%%%%%%%%%%%%%%%%%%%
\section{Asymptotic Analysis of Multivariate Median Filtering}\label{sec-pde}
%%%%%%%%%%%%%%%%%%%%%%%%%%%%%%%%%%%%%%%%%%%%%%%%%%%%%%%%

The reformulation of a local image filter to a space-continuous
setting is straightforward. The main modification is that the
set of values that results from the selection step and is
processed in the aggregation step is now infinite and
equipped with a density. This density is induced from the uniform
distribution of function arguments in the structuring element in
the image domain via the Jacobian of the image function.

As proven in \cite{Guichard-sana97},
a univariate median filtering step of an image with disc-shaped
structuring element of radius $\varrho$ approximates for
$\varrho\to0$ a time step of size $\tau=\varrho^2/6$ of an explicit
scheme for the mean curvature motion PDE. 
In this section, we will derive PDEs that
are approximated in the same sense by 
multivariate median filters based on $L^1$ and Oja medians. 
We will consider images with two or three channels
over two- and three-dimensional domains.

Throughout this paper, the structuring element will be
a disc $D_\varrho$ of radius 
$\varrho$
for planar images, or a ball $B_\varrho$ of radius $\varrho$ for
volume images.

%%%%%%%%%%%%%%%%%%%%%%%%%%%%%%%%%%%%%%%%%%%%%%%%%%%%%%%%
\subsection{Bivariate Planar Images}\label{ssec-pde22}
%%%%%%%%%%%%%%%%%%%%%%%%%%%%%%%%%%%%%%%%%%%%%%%%%%%%%%%%

We start by considering the case of two-channel images 
over a planar domain $\varOmega$, 
as already studied in
\cite{Welk-ssvm15}.

%%%%%%%%%%%%%%%%%%%%%%%%%%%%%%%%%%%%%%%%%%%%%%%%%%%%%%%%
\subsubsection{$L^1$ Median}\label{sssec-l1pde22}
%%%%%%%%%%%%%%%%%%%%%%%%%%%%%%%%%%%%%%%%%%%%%%%%%%%%%%%%

In \cite{Welk-ssvm15} the result from \cite{Welk-Aiep14}
concerning the $L^1$ multivariate median filter for images
$\bm{u}:\bbbr^2\supset\varOmega\to\bbbr^n$ was simplified
to the bivariate case $n=2$. As the result from \cite{Welk-Aiep14}
needs to be corrected as stated in the Introduction,
the statement from \cite{Welk-ssvm15} is modified
as follows.

\begin{proposition}[from \cite{Welk-ssvm15}, corrected]
\label{prop-l1pde22}
Let a bivariate image
$\bm{u}:\bbbr^2\supset\varOmega\to\bbbr^2$, $(x,y)\mapsto(u,v)$,
be given. One step of $L^1$ median filtering
with the structuring element $D_\varrho$
approximates for $\varrho\to0$ an explicit time step of size
$\tau=\varrho^2/6$ of the PDE system
\begin{align}
\begin{pmatrix}u_t\\v_t\end{pmatrix} &=
\bm{S}(\mathrm{D}\bm{u}) 
\begin{pmatrix}u_{\bm{\eta}\bm{\eta}}\\v_{\bm{\eta}\bm{\eta}}\end{pmatrix}
+\bm{T}(\mathrm{D}\bm{u})
\begin{pmatrix}u_{\bm{\xi}\bm{\xi}}\\v_{\bm{\xi}\bm{\xi}}\end{pmatrix}
-2\,\bm{W}(\mathrm{D}\bm{u})
\begin{pmatrix}u_{\bm{\xi}\bm{\eta}}\\v_{\bm{\xi}\bm{\eta}}\end{pmatrix}
\label{l1pde22}
\end{align}
where $\bm{\eta}$ is the major, and $\bm{\xi}$ the minor eigenvector of the
structure tensor
$\bm{J}:=\bm{J}(\mathrm{D}\bm{u}):=
\bm{\nabla}u\bm{\nabla}u\transpose+\bm{\nabla}v\bm{\nabla}v\transpose=
\mathrm{D}\bm{u}\transpose\mathrm{D}\bm{u}$.
The coefficient matrices
$\bm{S}(\mathrm{D}\bm{u})$, $\bm{T}(\mathrm{D}\bm{u})$
and $\bm{W}(\mathrm{D}\bm{u})$
are given by
\begin{align}
\bm{S}(\mathrm{D}\bm{u})
&:= \bm{R}\,
\mathrm{diag}\left(
Q_1{\left(\frac{\lvert\partial_{\bm{\eta}}\bm{u}\rvert}
{\lvert\partial_{\bm{\xi}}\bm{u}\rvert}\right)},
Q_2{\left(\frac{\lvert\partial_{\bm{\eta}}\bm{u}\rvert}
{\lvert\partial_{\bm{\xi}}\bm{u}\rvert}\right)}
\right)
\bm{R}\transpose\;,
\label{l1pde22-S}
\\
\bm{T}(\mathrm{D}\bm{u})
&:= \bm{R}\,
\mathrm{diag}\left(
Q_2{\left(\frac{\lvert\partial_{\bm{\xi}}\bm{u}\rvert}
{\lvert\partial_{\bm{\eta}}\bm{u}\rvert}\right)},
Q_1{\left(\frac{\lvert\partial_{\bm{\xi}}\bm{u}\rvert}
{\lvert\partial_{\bm{\eta}}\bm{u}\rvert}\right)}\right)
\bm{R}\transpose\;,
\label{l1pde22-T}
\\
\bm{W}(\mathrm{D}\bm{u})
&:= \bm{R}\,
\begin{pmatrix}
0&
\frac{\lvert\partial_{\bm{\eta}}\bm{u}\rvert}
{\lvert\partial_{\bm{\xi}}\bm{u}\rvert}\,
Q_1{\left(\frac{\lvert\partial_{\bm{\eta}}\bm{u}\rvert}
{\lvert\partial_{\bm{\xi}}\bm{u}\rvert}\right)}
\\
\frac{\lvert\partial_{\bm{\xi}}\bm{u}\rvert}
{\lvert\partial_{\bm{\eta}}\bm{u}\rvert}\,
Q_1{\left(\frac{\lvert\partial_{\bm{\xi}}\bm{u}\rvert}
{\lvert\partial_{\bm{\eta}}\bm{u}\rvert}\right)}
&
0
\end{pmatrix}
\bm{R}\transpose\;,
\label{l1pde22-W}
\end{align}
where
$\bm{R}=
(\mathrm{D}\bm{u}^{-1})^{\mathrm{T}}\,
\bm{P}\,
\mathrm{diag}(
\lvert\partial_{\bm{\eta}}\bm{u}\rvert,
\lvert\partial_{\bm{\xi}}\bm{u}\rvert)
$
is a rotation matrix that depends on
the Jacobian $\mathrm{D}\bm{u}$ of $\bm{u}$
and
the eigenvector matrix $\bm{P}=\bigl(\bm{\eta}~|~\bm{\xi}\bigr)$
of $\bm{J}$.
The functions $Q_1,Q_2:[0,\infty]\to\bbbr$
are given by the quotients of elliptic integrals
\begin{align}
Q_1(\lambda) &= \frac{3 \iint_{D_1(\bm{0})} 
s^2t^2/(s^2+\lambda^2t^2)^{3/2}\,\mathrm{d}s\,\mathrm{d}t}
{\iint_{D_1(\bm{0})}
s^2/(s^2+\lambda^2t^2)^{3/2}\,\mathrm{d}s\,\mathrm{d}t}\;,\\
Q_2(\lambda) &= \frac{3 \iint_{D_1(\bm{0})} 
t^4/(s^2+\lambda^2t^2)^{3/2}\,\mathrm{d}s\,\mathrm{d}t}
{\iint_{D_1(\bm{0})}
t^2/(s^2+\lambda^2t^2)^{3/2}\,\mathrm{d}s\,\mathrm{d}t}
\end{align}
for $\lambda\in(0,\infty)$, together with the limits $Q_1(0)=Q_2(0)=1$,
$Q_1(\infty)=Q_2(\infty)=0$.
\end{proposition}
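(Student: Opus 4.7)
The plan is to linearise the space-continuous $L^1$-median optimality condition at a generic point in $\varOmega$ and to match the resulting $\LandauO(\varrho^2)$ increment with an explicit time step of \eqref{l1pde22}. Without loss of generality I filter at the origin, write the Taylor expansion $\bm{u}(x,y)=\bm{u}(\bm{0})+\mathrm{D}\bm{u}\cdot(x,y)\transpose+\bm{h}(x,y)+\LandauO(\varrho^3)$ with $\bm{h}$ collecting the quadratic terms, and set the filter output to $\bm{u}(\bm{0})+\delta\bm{m}$ with $\delta\bm{m}$ of expected size $\LandauO(\varrho^2)$. Abbreviating the linear part by $\bm{v}(x,y):=\mathrm{D}\bm{u}\cdot(x,y)\transpose$, the space-continuous analogue of the defining condition \eqref{mL1} reads
\begin{align}
\int_{D_\varrho}\frac{\delta\bm{m}-\bm{v}(x,y)-\bm{h}(x,y)}{\lVert\delta\bm{m}-\bm{v}(x,y)-\bm{h}(x,y)\rVert}\dd x\dd y = \bm{0}\;.
\end{align}
I then linearise in the small quantities $\delta\bm{m}$ and $\bm{h}$ against the dominant $\bm{v}$.

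The key reduction is a change of basis to the eigenframe $\bm{P}=(\bm{\eta}\mid\bm{\xi})$ of the structure tensor $\bm{J}$, followed by a second substitution that sends $D_\varrho$ to the unit disc $D_1(\bm{0})$ with coordinates $(s,t)$; this maps the image of $D_\varrho$ under $\mathrm{D}\bm{u}$ to an axis-aligned ellipse of semi-axes $\lvert\partial_{\bm\eta}\bm{u}\rvert,\lvert\partial_{\bm\xi}\bm{u}\rvert$, which is precisely what turns the matrix $\bm{R}$ stated in the proposition into the orthogonal change of frame between the value basis and the ellipse principal frame. Central symmetry $(s,t)\mapsto(-s,-t)$ then makes the zeroth-order integral $\int\bm{v}/\lVert\bm{v}\rVert$ vanish, confirming $\delta\bm{m}=\LandauO(\varrho^2)$, and leaves a linear system $\bm{A}\,\delta\bm{m}=\bm{b}$ with
\begin{align}
\bm{A}=\int_{D_\varrho}\frac{1}{\lVert\bm{v}\rVert}\Bigl(\bm{I}-\frac{\bm{v}\bm{v}\transpose}{\lVert\bm{v}\rVert^2}\Bigr)\dd x\dd y\;,\qquad
\bm{b}=\int_{D_\varrho}\Bigl(\frac{\bm{h}}{\lVert\bm{v}\rVert}-\frac{(\bm{v}\transpose\bm{h})\bm{v}}{\lVert\bm{v}\rVert^3}\Bigr)\dd x\dd y\;.
\end{align}

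After the rescaling to $D_1(\bm{0})$, every surviving integral reduces to the form $\iint_{D_1(\bm{0})} s^a t^b(s^2+\lambda^2t^2)^{-3/2}\dd s\dd t$ with $\lambda=\lvert\partial_{\bm\eta}\bm{u}\rvert/\lvert\partial_{\bm\xi}\bm{u}\rvert$ or its reciprocal. Parity kills all $a+b$ odd, so the $s^2$ and $t^2$ numerators produce the diagonal entries of $\bm{A}$ (giving exactly the denominators of $Q_1,Q_2$), while the $s^4$-, $s^2t^2$- and $t^4$-numerators in $\bm{b}$ collect the coefficients of $u_{\bm\eta\bm\eta}$, $u_{\bm\xi\bm\eta}$, $u_{\bm\xi\bm\xi}$ and their $v$-counterparts with weights built from the numerators of $Q_1$ and $Q_2$. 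Dividing through by $\tau=\varrho^2/6$ (this is the same normalisation as in Guichard--Morel's scalar argument and emerges from the area $\pi\varrho^2$ of $D_\varrho$ together with the second moments of the disc) and conjugating back by $\bm{R}$ yields $\bm{S}$, $\bm{T}$ and $\bm{W}$ in the precise form \eqref{l1pde22-S}--\eqref{l1pde22-W}.

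The hard part, and exactly the one in which \cite{Welk-Aiep14} dropped a term, is the cross contribution $-2\bm{W}(\mathrm{D}\bm{u})(u_{\bm\xi\bm\eta},v_{\bm\xi\bm\eta})\transpose$. In the ellipse frame $\bm{A}$ is diagonal by parity, and one is tempted to treat $\bm{b}$ analogously and discard the off-diagonal pairing between the components of $\bm{h}$ and the normalised direction $\bm{v}/\lVert\bm{v}\rVert$. However, the $s^2t^2$ integral is non-zero, and it couples the $\bm{\eta}$-component of $\bm{v}\bm{v}\transpose$ with the quadratic coefficient of $\bm{h}$ proportional to $\bm{u}_{\bm\xi\bm\eta}$ in the opposite channel (and symmetrically the other way around). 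Propagating this coupling correctly through the change of basis produces the asymmetric entries of $\bm{W}$ with their two distinct radical weights; treating $\bm{b}$ as if it were diagonal is precisely what suppresses $\bm{W}$. Once this bookkeeping is performed, the remaining steps are routine matching of elliptic integrals with $Q_1$ and $Q_2$ and inversion of $\bm{A}$ against $\bm{b}$.
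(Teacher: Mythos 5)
Your proposal is sound in outline and rests on the same analytical core as the paper's argument, but it is organised differently. The paper proves the proposition in two stages: it first reduces to the case of a diagonal Jacobian by simultaneous rotations $\bm{P}$ in the $x$-$y$ plane and $\bm{R}$ in the $u$-$v$ plane, and then invokes Lemma~\ref{lem-l1pde22-aligned}, whose proof (the linearised equilibrium condition $\int_{D_\varrho}(\bm{u}-\bm{u}^*)/\lVert\bm{u}-\bm{u}^*\rVert\,\mathrm{d}x\,\mathrm{d}y=\bm{0}$ evaluated via elliptic integrals) is not reproduced in this paper but deferred to \cite{Welk-Aiep14,Welk-tr15x}; the analogous computation appears here only for the three-dimensional unit-Jacobian case in Appendix~\ref{app-l1a33proof}. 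You instead run the whole computation at once, absorbing both rotations into the singular value decomposition of $\mathrm{D}\bm{u}$; your observation that $\bm{R}=(\mathrm{D}\bm{u}^{-1})\transpose\bm{P}\,\mathrm{diag}(\lvert\partial_{\bm\eta}\bm{u}\rvert,\lvert\partial_{\bm\xi}\bm{u}\rvert)$ is precisely the left orthogonal factor of that decomposition is correct and makes the provenance of $\bm{R}$ more transparent than the paper's formulation. Your normal equations $\bm{A}\,\delta\bm{m}=\bm{b}$, the diagonality of $\bm{A}$ by parity, and the identification of the $s^2t^2$ integral arising from $(\bm{v}\transpose\bm{h})\bm{v}/\lVert\bm{v}\rVert^3$ as the source of the cross-effect matrix $\bm{W}$ (the term dropped in \cite{Welk-Aiep14}) all check out.

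Two points need more care before this is a proof rather than a plan. First, linearising $(\delta\bm{m}-\bm{v}-\bm{h})/\lVert\cdot\rVert$ ``against the dominant $\bm{v}$'' fails near the centre of the structuring element, where $\lVert\bm{v}\rVert$ is comparable to $\lVert\delta\bm{m}\rVert=\LandauO(\varrho^2)$ rather than dominant; the integrals defining $\bm{A}$ and $\bm{b}$ do converge because the $1/\lVert\bm{v}\rVert$ singularity is integrable in two dimensions, but the quadratic remainder of the linearisation behaves like $\varrho^4/\lVert\bm{v}\rVert^2$ and must be controlled by splitting off a small neighbourhood of the origin and bounding its contribution separately, as the paper does with the radius $\varrho^{2/5}$ in Appendix~\ref{app-l1a33proof}. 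Second, the final ``routine matching'' is exactly where the bookkeeping is delicate: you still have to verify which of $\lambda$ and $\lambda^{-1}$ enters each $Q_i$, the prefactors $\lvert\partial_{\bm\eta}\bm{u}\rvert/\lvert\partial_{\bm\xi}\bm{u}\rvert$ and the overall factor $2$ in $\bm{W}$, and the calibration $\tau=\varrho^2/6$ (which is pinned down by the case $\mathrm{D}\bm{u}=\bm{I}$ with $Q_1(1)=1/4$, not by the area of $D_\varrho$ alone). Since these computations are precisely where the error you describe was originally made, they cannot be waved through as routine.
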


\begin{remark}
The vectors $\bm{\eta}$ and $\bm{\xi}$ used in
\eqref{l1pde22}--\eqref{l1pde22-W} are the directions of
greatest and least change of the bivariate function $\bm{u}$,
thus the closest analoga to gradient and level line
directions of univariate images, see \cite{Chung-SPL00}.
The use of these image-adaptive local coordinates characterises
\eqref{l1pde22} as a curvature-based PDE remotely similar to
the (mean) curvature motion PDE approximated by univariate
median filtering.
\end{remark}

The proof of the proposition
relies on the following statement which is corrected
from \cite{Welk-Aiep14}.

\begin{lemma}[from \cite{Welk-Aiep14}, corrected]\label{lem-l1pde22-aligned}
Let $\bm{u}$ be given as in Proposition~\ref{prop-l1pde22}, and the origin
$\bm{0}=(0,0)$ be an inner point of $\varOmega$. Assume that
the Jacobian $\mathrm{D}\bm{u}(\bm{0})$ is diagonal,
i.e.\ $u_y=v_x=0$, and $u_x\ge v_y>0$.
Then one step of $L^1$ median filtering with the structuring element
$D_{\varrho}$ at $\bm{0}$ approximates for $\varrho\to0$ an explicit time
step of size $\tau=\varrho^2/6$ of the PDE system
\begin{align}
u_t &= 
Q_1{\left(\frac{u_x}{v_y}\right)} u_{xx} 
+ Q_2{\left(\frac{v_y}{u_x}\right)} u_{yy}
- \frac{2u_x}{v_y}\,Q_1{\left(\frac{u_x}{v_y}\right)} v_{xy}
\label{l1pde22-aligned-u}
\;,\\
v_t &= 
Q_2{\left(\frac{u_x}{v_y}\right)} v_{xx} 
+ Q_1{\left(\frac{v_y}{u_x}\right)} v_{yy} 
- \frac{2v_y}{u_x}\,Q_1{\left(\frac{v_y}{u_x}\right)} u_{xy}
\label{l1pde22-aligned-v}
\;,
\end{align}
with the coefficient functions $Q_1$, $Q_2$ as stated in
Proposition~\ref{prop-l1pde22}.
\end{lemma}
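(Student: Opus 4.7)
My strategy is to characterise the $L^1$ median $\bm{m} = \bm{u}(\bm{0})+\bm{\mu}$ by the vanishing of the gradient of its objective, which in the space-continuous setting reads
\begin{equation*}
\iint_{D_\varrho}\frac{\bm{\mu}-\bm{w}(x,y)}{\lVert\bm{\mu}-\bm{w}(x,y)\rVert}\,\mathrm{d}x\,\mathrm{d}y = \bm{0},\qquad \bm{w}(x,y) := \bm{u}(x,y)-\bm{u}(\bm{0}).
\end{equation*}
A second-order Taylor expansion gives $\bm{w}=\bm{w}_1+\bm{w}_2+\mathcal{O}(\varrho^3)$, where the hypothesis yields $\bm{w}_1=(u_x x,v_y y)\transpose = \mathcal{O}(\varrho)$ while $\bm{w}_2$ collects the Hessian terms and is $\mathcal{O}(\varrho^2)$. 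Anticipating $\bm{\mu}=\mathcal{O}(\varrho^2)$ in line with $\tau=\varrho^2/6$, I treat $-\bm{w}_1$ as dominant and $\bm{\mu}-\bm{w}_2$ as small and expand the unit vector to first order in the latter,
\begin{equation*}
\frac{\bm{\mu}-\bm{w}_1-\bm{w}_2}{\lVert\bm{\mu}-\bm{w}_1-\bm{w}_2\rVert} = -\frac{\bm{w}_1}{\lVert\bm{w}_1\rVert} + \frac{\bm{P}(\bm{w}_1)\,(\bm{\mu}-\bm{w}_2)}{\lVert\bm{w}_1\rVert} + \mathcal{O}(\varrho^2),
\end{equation*}
with $\bm{P}(\bm{w}_1) = \bm{I}-\bm{w}_1\bm{w}_1\transpose/\lVert\bm{w}_1\rVert^2$ the orthogonal projector onto $\bm{w}_1^\perp$.

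The first summand is odd under $(x,y)\mapsto(-x,-y)$ and integrates to zero on the centrally symmetric disc, so the equation reduces to a linear system $\bm{A}\bm{\mu}=\bm{b}$ with coefficient matrix and right-hand side obtained by integrating $\bm{P}(\bm{w}_1)/\lVert\bm{w}_1\rVert$ against $\bm{1}$ and against $\bm{w}_2$ respectively. Because the off-diagonal entries of $\bm{A}$ contain $w_{1,1}w_{1,2}=u_x v_y\,xy$, they vanish by parity and $\bm{A}$ is diagonal; conversely, the cross derivatives $u_{xy}$ and $v_{xy}$ enter $\bm{b}$ exactly through the off-diagonal block of $\bm{P}(\bm{w}_1)$ acting on the $xy$-coefficient of $\bm{w}_2$, which is the mechanism producing the cross terms in \eqref{l1pde22-aligned-u}--\eqref{l1pde22-aligned-v}. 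After rescaling $(x,y)=(\varrho s,\varrho t)$, the $\varrho$-dependence becomes explicit ($\bm{A}\sim\varrho$, $\bm{b}\sim\varrho^3$), confirming $\bm{\mu}=\mathcal{O}(\varrho^2)$.

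It then remains to identify the dimensionless integrals $K_{mn}(\lambda) := \iint_{D_1(\bm{0})} s^m t^n/(s^2+\lambda^2 t^2)^{3/2}\,\mathrm{d}s\,\mathrm{d}t$, with $\lambda=u_x/v_y$, in terms of $Q_1$ and $Q_2$. Two elementary symmetries suffice: the swap $(s,t)\leftrightarrow(t,s)$ of the unit disc, which permits exchanging the two exponents, and the homogeneity relation $K_{mn}(\lambda)=\lambda^{-3}K_{nm}(1/\lambda)$ obtained by factoring $\lambda$ out of the denominator. Together they give $K_{22}/K_{20} = Q_1(\lambda)/3$ and $K_{40}/K_{20} = Q_2(1/\lambda)/3$ for the first component, with symmetric counterparts for the second; the factor $\tfrac13$ from the definitions of $Q_1,Q_2$ combines with the $\tfrac12$ from the Taylor expansion and the overall $\varrho^2$ to produce exactly $\tau=\varrho^2/6$. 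I expect the main source of friction to be the bookkeeping of the four natural arguments $\lambda$, $1/\lambda$, $u_x/v_y$, $v_y/u_x$ that populate the coefficients of \eqref{l1pde22-aligned-u}--\eqref{l1pde22-aligned-v}; by contrast, the expansion is analytically harmless, since the mild singularity of $1/\lVert\bm{w}_1\rVert$ at the origin is absorbed by the polar area element $r\,\mathrm{d}r\,\mathrm{d}\theta$, keeping all integrals finite.
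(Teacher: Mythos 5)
Your proposal is correct and follows essentially the same route the paper takes for this family of results: the proof of this particular lemma is deferred to \cite{Welk-Aiep14} and \cite{Welk-tr15x}, but the in-paper proof of the analogous Lemma~\ref{lem-l1apde33-I} in Appendix~\ref{app-l1a33proof} rests on exactly your ingredients — the vanishing-gradient (equilibrium) condition, linearisation of the unit vector around the dominant linear part of the Taylor expansion, parity cancellation, rescaling to the unit disc, and identification of the surviving ratios of elliptic integrals with $Q_1$ and $Q_2$ (your bookkeeping of $\lambda$ versus $1/\lambda$ and the resulting coefficients checks out against \eqref{l1pde22-aligned-u}--\eqref{l1pde22-aligned-v}). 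The one point to tighten is the pointwise $\mathcal{O}(\varrho^2)$ error claim for the expansion of the unit vector, which degenerates near the origin where $\lVert\bm{w}_1\rVert$ is comparable to $\bm{\mu}-\bm{w}_2$; as in the paper's Appendix~\ref{app-l1a33proof}, one should split off a small disc (of rescaled radius $\varrho^{2/3}$, say) whose contribution, together with the quadratic remainder $\mathcal{O}(\varrho^2/r^2)$ on the complement, is $o(\varrho)$ relative to the leading balance.
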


\begin{remark}\label{rm-q2q1i}
The elliptic integrals in the
coefficient expressions $Q_1(\lambda)$ and $Q_2(\lambda)$
can in general not be evaluated in closed form. However, they
are connected by 
\begin{equation}
Q_2(\lambda)=1-Q_1(\lambda^{-1})\;.
\label{l1-22-q1q2}
\end{equation}
\end{remark}

\begin{remark}\label{rm-l1pde22-I}
In the case $u_x=1$, $v_y=1$, the coefficients of \eqref{l1pde22-aligned-u},
\eqref{l1pde22-aligned-v} simplify via $Q_1(1)=1/4$, $Q_2(1)=3/4$ such that
one obtains
\begin{align}
u_t &= \tfrac14u_{xx} + \tfrac34u_{yy} - \tfrac12v_{xy}\;,
\label{l1pde22-I-1}
\\
v_t &= \tfrac34v_{xx} + \tfrac14v_{yy} - \tfrac12u_{xy}\;.
\label{l1pde22-I-2}
\end{align}
\end{remark}

\begin{remark}\label{rm-l1pde22-infty}
Note that for $\lambda\to\infty$, $\lambda\,Q_1(\lambda)$ goes to zero such
that the coefficients for $v_{xy}$ in \eqref{l1pde22-aligned-u}
and for $u_{xy}$ in \eqref{l1pde22-aligned-v} are globally bounded
for arbitrary $u_x$, $v_y$, and in the limit case $v_y=0$ one has the
decoupled PDEs $u_t=u_{yy}$, $v_t=v_{xx}$.
\end{remark}

\begin{remark}\label{rm-guichardmorellimit}
Univariate median filtering is contained in the statement of
Lemma~\ref{lem-l1pde22-aligned} when $v_y$ is sent to $0$. In this case,
the first PDE \eqref{l1pde22-aligned-u} becomes $u_t=u_{yy}$ by virtue
of $Q_1(\infty)=0$, $Q_2(0)=1$, and the previous remark.
This translates to $u_t=u_{\bm{\xi\xi}}$ in the general setting of
Proposition~\ref{prop-l1pde22}, i.e.\ the (mean) curvature motion
equation, thus reproducing exactly the result of \cite{Guichard-sana97}.
\end{remark}

\begin{proof}[Proof of Proposition~\ref{prop-l1pde22}]
Consider an arbitrary fixed location $(x^*,y^*)$.
By applying rotations with $\bm{P}$ in the $x$-$y$ plane and
with $\bm{R}$ in the $u$-$v$ plane,
$x$, $y$ can be aligned with the
(orthogonal) major and minor
eigenvector directions $\bm{\eta}$ and $\bm{\xi}$
of the structure tensor 
$\bm{J}(\bm{\nabla}u,\bm{\nabla}v)$
at $(x^*,y^*)$, 
and $u$, $v$ with the corresponding derivatives
$\partial_{\bm{\eta}}\bm{u}$, $\partial_{\bm{\xi}}\bm{u}$.
Then Lemma~\ref{lem-l1pde22-aligned} can be applied.
Reverting the rotations in the $x$-$y$ and $u$-$v$ planes,
the PDE system \eqref{l1pde22-aligned-u}--\eqref{l1pde22-aligned-v}
 turns into the system
\eqref{l1pde22}--\eqref{l1pde22-W} of the proposition.
\end{proof}

\begin{remark}
Equivariance of the PDE \eqref{l1pde22}
with regard to Euclidean transformations of the $u$-$v$ plane 
follows immediately from its derivation for a special
case and transfer to the general case by a Euclidean
transformation.
\end{remark}

%%%%%%%%%%%%%%%%%%%%%%%%%%%%%%%%%%%%%%%%%%%%%%%%%%%%%%%%
\subsubsection{Oja Median}\label{sssec-ojapde22}
%%%%%%%%%%%%%%%%%%%%%%%%%%%%%%%%%%%%%%%%%%%%%%%%%%%%%%%%

Next we turn to the Oja median, which in the bivariate case under 
consideration is defined as the minimiser of the total area of 
triangles each formed by two given data points and the median point.
The following result was proven in \cite{Welk-ssvm15}.

\begin{theorem}[from \cite{Welk-ssvm15}]\label{thm-ojapde22}
Let a bivariate image
$\bm{u}:\bbbr^2\supset\varOmega\to\bbbr^2$, $(x,y)\mapsto(u,v)$,
be given. At any location where $\mathrm{det}\,\mathrm{D}\bm{u}\ne0$,
one step of Oja median filtering of $\bm{u}$
with the structuring element $D_\varrho$
approximates for $\varrho\to0$ an explicit time step of size
$\tau=\varrho^2/24$ of the PDE system
\begin{align}
\begin{pmatrix}u_t\\v_t\end{pmatrix} =
2\begin{pmatrix}u_{xx}\!+\!u_{yy}\\v_{xx}\!+\!v_{yy}\end{pmatrix}
&-
\bm{A}(\mathrm{D}\bm{u})
\begin{pmatrix}u_{xx}\!-\!u_{yy}\\v_{yy}\!-\!v_{xx}\end{pmatrix}
-
\bm{B}(\mathrm{D}\bm{u})
\begin{pmatrix}u_{xy}\\v_{xy}\end{pmatrix}
\label{ojapde22}
\end{align}
with the coefficient matrices
\begin{align}
\bm{A}(\mathrm{D}\bm{u})
&:=
\frac{1}{u_xv_y-u_yv_x} 
\begin{pmatrix}u_xv_y+u_yv_x&2u_xu_y\\2v_xv_y&u_xv_y+u_yv_x\end{pmatrix}
\;,
\label{ojapde22-A}
\\
\bm{B}(\mathrm{D}\bm{u})
&:=
\frac{2}{u_xv_y-u_yv_x} 
\begin{pmatrix}-u_xv_x+u_yv_y&u_x^2-u_y^2\\-v_x^2+v_y^2&u_xv_x-u_yv_y
\end{pmatrix}
\;.
\label{ojapde22-B}
\end{align}
\end{theorem}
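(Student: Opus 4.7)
The plan is to exploit the affine equivariance of the Oja median to reduce to a canonical configuration, and then perform an asymptotic expansion of the first-order optimality condition in $\varrho$. Since $\det\mathrm{D}\bm{u}(\bm{0})\neq 0$, I first apply the linear change of $(u,v)$-coordinates $\bm{\tilde u}:=\mathrm{D}\bm{u}(\bm{0})^{-1}(\bm{u}-\bm{u}(\bm{0}))$, which gives $\mathrm{D}\bm{\tilde u}(\bm{0})=\bm{I}$. The Oja median commutes with this change, and the PDE right-hand side is designed (through the matrices $\bm{A}$ and $\bm{B}$) to transform covariantly, so it suffices to prove the claim in the canonical setting $\mathrm{D}\bm{u}(\bm{0})=\bm{I}$. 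In this setting I Taylor-expand $\bm{u}(\bm{p})=\bm{p}+\tfrac12\bm{H}[\bm{p},\bm{p}]+O(\varrho^3)$ and write the filtered value as $\bm{m}=\varrho^2\bm{\mu}$; the scaling is forced because for the purely linear unperturbed image the image of $D_\varrho$ is the disc itself, whose Oja median is $\bm{0}$ by the point-symmetry $\bm{p},\bm{q}\mapsto-\bm{p},-\bm{q}$ of the objective.

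Using the identity $2[\bm{m},\bm{a},\bm{b}]=(\bm{a}-\bm{m})\times(\bm{b}-\bm{m})$, which is affine in $\bm{m}$, the first-order optimality condition for the space-continuous Oja functional $F(\bm{m})=\iint_{D_\varrho\times D_\varrho}\lvert[\bm{m},\bm{u}(\bm{p}),\bm{u}(\bm{q})]\rvert\,d\bm{p}\,d\bm{q}$ reads
\begin{equation*}
\iint_{D_\varrho\times D_\varrho}\mathrm{sgn}\bigl([\bm{m},\bm{u}(\bm{p}),\bm{u}(\bm{q})]\bigr)\,\bigl(\bm{u}(\bm{p})-\bm{u}(\bm{q})\bigr)^{\perp}\,d\bm{p}\,d\bm{q}=\bm{0}.
\end{equation*}
Expanding this in powers of $\varrho^2$, the leading term vanishes by the same point-symmetry. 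The next term splits into a linear operator acting on $\bm{\mu}$ (the distributional Hessian of the convex Oja functional at the unperturbed minimiser) and a forcing term bilinear in the entries of $\bm{H}$. Both can be evaluated in polar coordinates $\bm{p}=r(\cos\theta,\sin\theta)$, $\bm{q}=s(\cos\phi,\sin\phi)$, in which the sign reduces to $\mathrm{sgn}(\sin(\phi-\theta))$ and the integrals factor into elementary radial integrals (powers of $r,s$ over $[0,\varrho]$) and trigonometric angular integrals over $[0,2\pi]^2$. Collecting the coefficients of the six second derivatives $u_{xx},u_{xy},u_{yy},v_{xx},v_{xy},v_{yy}$ and comparing with $\tau=\varrho^2/24$ should reproduce the canonical form of~\eqref{ojapde22}; transforming back via $\mathrm{D}\bm{u}(\bm{0})$ then yields the full coefficient matrices $\bm{A}$ and $\bm{B}$.

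The main obstacle I anticipate is the discontinuity of $\mathrm{sgn}([\bm{m},\bm{u}(\bm{p}),\bm{u}(\bm{q})])$ across the codimension-one set where $\bm{m}$ becomes collinear with $\bm{u}(\bm{p})$ and $\bm{u}(\bm{q})$. Both the shift of $\bm{m}$ and the quadratic correction to $\bm{u}$ deform this set, producing thin strips in $(\bm{p},\bm{q})$-space on which the sign has flipped relative to the unperturbed configuration. One must verify that these strips contribute exactly the distributional Hessian term that appears in the formal expansion, without generating spurious corrections at the same or higher order; this is what legitimises treating the Oja functional as if it were smooth enough for an implicit-function argument, despite the non-smoothness of $\lvert\cdot\rvert$.
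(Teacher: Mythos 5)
Your proposal follows essentially the same route as the paper: the reduction to the canonical case $\mathrm{D}\bm{u}(\bm{0})=\bm{I}$ via affine equivariance and conjugation by $\bm{D}$ is exactly the paper's proof of Theorem~\ref{thm-ojapde22}, and your expansion of the first-order optimality condition with careful accounting of the thin strips where $\mathrm{sgn}\,[\bm{m},\bm{u}(\bm{p}),\bm{u}(\bm{q})]$ flips is precisely the paper's second proof of Lemma~\ref{lem-ojapde22-I}, where those strips appear as the area difference $\lvert\mathcal{A}_+\rvert-\lvert\mathcal{A}_-\rvert$ evaluated in coordinates aligned with the line $MA$. One small correction: the forcing term at order $\varrho^6$ is linear, not bilinear, in the second-order Taylor coefficients (cross-effects between them only enter at higher order), which is what makes the resulting PDE linear in the second derivatives.
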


The proof of this theorem relies on the following lemma.

\begin{lemma}[from \cite{Welk-ssvm15}]\label{lem-ojapde22-I}
Let $\bm{u}$ be given as in Theorem~\ref{thm-ojapde22}, and 
$\bm{0}=(0,0)$ be an inner point of $\varOmega$.
Assume that $\mathrm{D}\bm{u}(\bm{0})$ is the $2\times2$ unit matrix $\bm{I}$.
At $\bm{x}=\bm{0}$, one step of Oja median filtering of $\bm{u}$
with the structuring element $D_\varrho$ then
approximates for $\varrho\to0$ an explicit time step of size
$\tau=\varrho^2/24$ of the PDE system
\begin{align}
u_t &= u_{xx}+3u_{yy}-2v_{xy}\;, \label{ojapde22-I1} \\
v_t &= 3v_{xx}+v_{yy}-2u_{xy}\;. \label{ojapde22-I2}
\end{align}
\end{lemma}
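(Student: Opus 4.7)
I would study the space-continuous Oja median of $\bm{u}$ on $D_\varrho$ as the minimiser of the objective
\begin{equation*}
F(\bm{m}) = \iint_{D_\varrho\times D_\varrho} \bigl\lvert[\bm{u}(\bm{x}_1)-\bm{m},\bm{u}(\bm{x}_2)-\bm{m}]\bigr\rvert\,\mathrm{d}\bm{x}_1\,\mathrm{d}\bm{x}_2,
\end{equation*}
and determine the shift $\bm{m}^*-\bm{u}(\bm{0})$ asymptotically as $\varrho\to0$. By translation invariance I may assume $\bm{u}(\bm{0})=\bm{0}$, and since $\mathrm{D}\bm{u}(\bm{0})=\bm{I}$ the Taylor expansion reads $\bm{u}(\bm{x})=\bm{x}+\tfrac12\bm{q}(\bm{x})+O(\lvert\bm{x}\rvert^3)$ with components $q_1(\bm{x})=u_{xx}x^2+2u_{xy}xy+u_{yy}y^2$ and $q_2(\bm{x})=v_{xx}x^2+2v_{xy}xy+v_{yy}y^2$. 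Using $\partial_{\bm{m}}[\bm{a}-\bm{m},\bm{b}-\bm{m}]=\bm{K}(\bm{a}-\bm{b})$ with $\bm{K}=\bigl(\begin{smallmatrix}0&1\\-1&0\end{smallmatrix}\bigr)$, the Oja optimality condition then reads
\begin{equation*}
\iint_{D_\varrho^2}\sgn\bigl([\bm{u}(\bm{x}_1)-\bm{m}^*,\bm{u}(\bm{x}_2)-\bm{m}^*]\bigr)\,\bm{K}(\bm{u}(\bm{x}_1)-\bm{u}(\bm{x}_2))\,\mathrm{d}\bm{x}_1\,\mathrm{d}\bm{x}_2=\bm{0}.
\end{equation*}

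Next, I would rescale $\bm{x}_i=\varrho\bm{y}_i$ with $\bm{y}_i\in D_1$ and anticipate $\bm{m}^*=\varrho^2\bm{\mu}+o(\varrho^2)$, then expand the optimality equation in powers of $\varrho$. The leading contribution, of order $\varrho^5$, equals $\iint_{D_1^2}\sgn([\bm{y}_1,\bm{y}_2])\,\bm{K}(\bm{y}_1-\bm{y}_2)\,\mathrm{d}^4\bm{y}$ and vanishes by the antipodal symmetry $\bm{y}_i\mapsto-\bm{y}_i$, under which $\sgn([\bm{y}_1,\bm{y}_2])$ is invariant while $\bm{K}(\bm{y}_1-\bm{y}_2)$ changes sign. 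The first non-trivial balance thus occurs at order $\varrho^6$ and splits naturally into two pieces: (i) a volume integral on $D_1^2$ arising from the Hessian correction $\tfrac12\bm{K}(\bm{q}(\bm{y}_1)-\bm{q}(\bm{y}_2))$ to the rotated difference; and (ii) a codimension-one piece supported on the collinear locus $\{[\bm{y}_1,\bm{y}_2]=0\}$, obtained by linearising $\sgn$ in the perturbations of its argument due to $\bm{\mu}$ and to the Hessian terms.

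Both pieces are computed in polar coordinates $\bm{y}_i=r_i(\cos\theta_i,\sin\theta_i)$, for which $[\bm{y}_1,\bm{y}_2]=r_1r_2\sin(\theta_2-\theta_1)$ and the collinear set is $\theta_2-\theta_1\in\{0,\pi\}$. Piece (ii) then reduces, via a coarea/delta-function argument applied to $\delta(r_1r_2\sin(\theta_2-\theta_1))$, to a three-dimensional integral in $(r_1,r_2,\theta_1)$ of a polynomial in $r_1,r_2$ times a trigonometric polynomial in $\theta_1$; piece (i) is a four-dimensional integral of similar type against $\sgn(\sin(\theta_2-\theta_1))$. The required angular averages reduce to elementary $\int_0^{2\pi}\cos^a\theta\sin^b\theta\,\mathrm{d}\theta$. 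Assembling everything produces a $2\times2$ linear system for $\bm{\mu}$, whose coefficient matrix is the leading-order Hessian $\nabla^2 F(\bm{0})$ (obtained by the same $\delta$-function reduction applied one order higher and equal to a scalar multiple of the identity), and whose right-hand side is linear in the six second partials of $u$ and $v$. Solving for $\bm{\mu}$ should yield $\bm{\mu}=\frac{1}{24}(u_{xx}+3u_{yy}-2v_{xy},\,3v_{xx}+v_{yy}-2u_{xy})^{\mathrm{T}}$, equivalently the claimed time step $\tau=\varrho^2/24$.

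The main obstacle I expect is the rigorous treatment of the $\sgn$ linearisation at the collinear locus. Formally the coarea computation immediately localises the sign-switch contribution to the three-dimensional set $\{[\bm{y}_1,\bm{y}_2]=0\}\subset D_1\times D_1$ and produces the correct leading behaviour, but justifying the $\varrho^6$-asymptotics rigorously requires careful control of the near-collinear strip where both $\Delta_0:=[\bm{y}_1,\bm{y}_2]$ and its perturbation are simultaneously small, to ensure the sign-correction contribution is captured at exactly the order matching the volume term~(i). Once this is in place, the remaining work is elementary polar-moment calculus; it is the non-trivial interplay between pieces (i) and (ii) that produces the asymmetric coefficients $1,3$ and the cross-term $-2$ in the final PDE, whereas either contribution alone would not account for them.
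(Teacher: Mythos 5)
Your proposal follows, in substance, the paper's second proof (Appendix~\ref{app-22proof2}): the same stationarity condition $\iint\iint\sgn(\cdot)\,\bm{K}(\bm{u}(\bm{x}_1)-\bm{u}(\bm{x}_2))\,\mathrm{d}\bm{x}_2\,\mathrm{d}\bm{x}_1=\bm{0}$, the same observation that the $O(\varrho^5)$ term dies by antipodal symmetry, and the same localisation of the $O(\varrho^6)$ balance at the collinear set; the paper merely packages the sign-switch contribution as the explicit area difference $\lvert\mathcal{A}_+\rvert-\lvert\mathcal{A}_-\rvert$ between the regions of positive and negative triangle orientation, computed by parametrising the dividing curve $t=t(s)$ and integrating -- which is your coarea reduction in different clothing, and is also how the paper supplies the uniform control of the near-collinear strip that you rightly flag as the delicate point. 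One correction to your bookkeeping, however: your piece (i) vanishes identically. For fixed $\bm{y}_1\ne\bm{0}$ the unperturbed collinear locus $\{[\bm{y}_1,\bm{y}_2]=0\}$ is a diameter of $D_1$, so $\iint_{D_1}\sgn([\bm{y}_1,\bm{y}_2])\,\mathrm{d}\bm{y}_2=0$; since the integrand of (i) is $\sgn([\bm{y}_1,\bm{y}_2])$ times a difference of two terms each depending on only one of $\bm{y}_1,\bm{y}_2$, both halves are annihilated by the inner integration. Consequently your closing claim that the coefficients $1,3$ and the cross-term $-2$ arise from an \emph{interplay} of pieces (i) and (ii) is wrong: the entire right-hand side, as well as the $\bm{\mu}$-dependent coefficient matrix, comes from the collinear-locus piece (ii) alone. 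This is visible in the paper's \eqref{22areadiffint}, where everything sits in $\Delta\mathcal{A}$ while the quadratic corrections to the prefactor $(v_1,-u_1)$ only enter at order $\varrho^7$. The error is harmless for the final result -- carrying out your plan still yields \eqref{ojapde22-I1}, \eqref{ojapde22-I2} -- but the computation is simpler than you anticipate, and had you relied on piece (i) to supply part of the coefficients you would have come up short.
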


\begin{remark}\label{rm-l1oja22-I}
Note that the PDE system \eqref{ojapde22-I1}, \eqref{ojapde22-I2}
coincides exactly with \eqref{l1pde22-I-1}, \eqref{l1pde22-I-2},
the $L^1$ result for the same case $\mathrm{D}\bm{u}(\bm{0})=\bm{I}$,
except for a rescaling of the time $t$ by a factor $4$ 
in compensation for the different time step size $\varrho^2/24$ in
Lemma~\ref{lem-ojapde22-I} opposed to $\varrho^2/6$ in 
Lemma~\ref{lem-l1pde22-aligned}.
\end{remark}

This lemma is proven in the appendix in two slightly different ways. 
The first proof, in Appendix~\ref{app-22proof1}, goes back to
\cite{Welk-ssvm15} and is presented here in slightly more detail.
The new proof in Appendix~\ref{app-22proof2} is more straightforward.
The reason why the first proof is also kept in this paper is that
it is the blueprint for subsequent proofs in this paper, whereas the
approach of the second proof would be more cumbersome to extend to
these cases.

Both proofs start from a Taylor expansion of $(u,v)\transpose$
within the structuring element, and express the gradient of the objective
function minimised by the Oja median in terms of the Taylor coefficients.
The median value is found as the point in the $u$-$v$ plane
for which this gradient vanishes. In both cases, the gradient itself
is linearised w.r.t.\ the Taylor coefficients.

In the first proof, Appendix~\ref{app-22proof1},
the calculation of the gradient is organised by
integration over directions in the $u$-$v$ plane, and the influences of
the individual Taylor coefficients are estimated separately by integrals
over the respective deformed structuring elements.

In contrast, the second proof in Appendix~\ref{app-22proof2}
calculates the gradient by
integration in the $x$-$y$ plane. The main idea here is to find
for each point $(x_1,y_1)$ a splitting of the 
structuring element into two regions: one region contains all points
$(x_2,y_2)$ for which the median candidate point and the images of
$(x_1,y_1)$ and $(x_2,y_2)$ form in this order a positively oriented
triangle in the $u$-$v$ plane whereas for $(x_2,y_2)$ in the other region
this triangle has negative orientation.
This approach allows to calculate the entire gradient with its dependencies
on all Taylor coefficients at once.

\begin{proof}[Proof of Theorem~\ref{thm-ojapde22}]
To prove the Theorem,
we consider the median of the values $\bm{u}(x,y)$ within the
Euclidean $\varrho$-neigh\-bour\-hood of $(0,0)$, and assume
that the Jacobian $\bm{D}:=\mathrm{D}\bm{u}(\bm{0})$ is regular as requested
by the hypothesis of the Theorem.

We transform the $u$-$v$ plane to variables 
$\hat{\bm{u}}$ via
\begin{equation}
\hat{\bm{u}} = \bm{D}^{-1}\bm{u} \;.
\label{ojapde22-Itfm}
\end{equation}
The affine equivariance of Oja's simplex median ensures that
also the median $\bm{u}^*$ of the values $\bm{u}$ within the structuring
element follows this transform.
The transformed data $\hat{\bm{u}}$ satisfy the hypothesis
$\mathrm{D}\hat{\bm{u}}(\bm{0})=\bm{I}$ of Lemma~\ref{lem-ojapde22-I},
thus the median filtering step for these values approximates the PDE
system \eqref{ojapde22-I1}, \eqref{ojapde22-I2}.

We transfer the result to the general situation of the theorem
by the inverse transform of \eqref{ojapde22-Itfm}.
Rewriting \eqref{ojapde22-I1}, \eqref{ojapde22-I2} as
\begin{align}
\hat{\bm{u}}_t 
&= 
\begin{pmatrix}
\hat{u}_{xx}+3\hat{u}_{yy}-2\hat{v}_{xy}\\
3\hat{v}_{xx}+\hat{v}_{yy}-2\hat{u}_{xy}
\end{pmatrix}
\notag\\*
&= 
2(\hat{\bm{u}}_{xx}+\hat{\bm{u}}_{yy}) 
+ \begin{pmatrix}1&0\\0&-1\end{pmatrix}
(\hat{\bm{u}}_{yy}-\hat{\bm{u}}_{xx})
-2 \begin{pmatrix}0&1\\1&0\end{pmatrix}\hat{\bm{u}}_{xy}
\notag\\*
&= 
2\,\bm{D}^{-1}
(\bm{u}_{xx}+\bm{u}_{yy})
+ \begin{pmatrix}1&0\\0&-1\end{pmatrix}
\bm{D}^{-1}
(\bm{u}_{yy}-\bm{u}_{xx})
-2 \begin{pmatrix}0&1\\1&0\end{pmatrix}
\bm{D}^{-1}
\bm{u}_{xy}
\end{align}
we obtain
\begin{align}
\bm{u}_t 
&= 
2\,\bm{D}\,\bm{D}^{-1}
(\bm{u}_{xx}+\bm{u}_{yy})
+ \bm{D}\,\begin{pmatrix}1&0\\0&-1\end{pmatrix} \bm{D}^{-1}
(\bm{u}_{yy}-\bm{u}_{xx})
-2\,\bm{D}\,\begin{pmatrix}0&1\\1&0\end{pmatrix} \bm{D}^{-1}
\bm{u}_{xy}
\end{align}
which expands to the PDE system
\eqref{ojapde22} with coefficient matrices
\eqref{ojapde22-A}, \eqref{ojapde22-B}
as stated in the theorem.
\end{proof}

\begin{remark}
The derivation of the PDE of Theorem~\ref{thm-ojapde22}
by affine transformation immediately implies its affine equivariance.
The final PDE itself is even equivariant under affine transformations of
the $x$-$y$ plane. Regarding the approximation of Oja median 
filtering, however, the Euclidean disc-shaped structuring element
allows only for Euclidean transformations of the $x$-$y$ plane.
\end{remark}

%%%%%%%%%%%%%%%%%%%%%%%%%%%%%%%%%%%%%%%%%%%%%%%%%%%%%%%%
\subsubsection{Interpretation of Bivariate Median Filter PDEs}
\label{sssec-interpret22}
%%%%%%%%%%%%%%%%%%%%%%%%%%%%%%%%%%%%%%%%%%%%%%%%%%%%%%%%

The geometric meaning of the PDE systems
from Sections~\ref{sssec-l1pde22} and~\ref{sssec-ojapde22} is best 
discussed by considering the principal components of the local variation
of the data. In the general setting of Proposition~\ref{prop-l1pde22}
and Theorem~\ref{thm-ojapde22} the channelwise evolutions $u_t$, $v_t$
are mixtures of these principal components, which obscures their
geometric significance. In the case of diagonal Jacobian $\mathrm{D}\bm{u}$ as 
in the hypothesis of Lemma~\ref{lem-l1pde22-aligned} the channels
are decorrelated and represent these principal components. 

We base our discussion therefore on the PDE system 
\eqref{l1pde22-aligned-u}, \eqref{l1pde22-aligned-v} from 
Lemma~\ref{lem-l1pde22-aligned} for the $L^1$ median, and 
\begin{align}
u_t &= u_{xx} + 3u_{yy} -2u_xv_{xy}/v_y\;,
\label{ojapde22-aligned-1}
\\
v_t &= 3v_{xx} + v_{yy} -2v_yu_{xy}/u_x
\label{ojapde22-aligned-2}
\end{align}
for the Oja median which is the straightforward adaptation of
the PDE system \eqref{ojapde22-I1}, \eqref{ojapde22-I2} from
Lemma~\ref{lem-ojapde22-I} to the situation of a general
diagonal Jacobian.

Comparing the two PDE systems, we see that in each of them
an isotropic linear diffusion contribution 
$(u_{xx}+u_{yy},v_{xx}+v_{yy})\transpose$
is combined with an additional directional diffusion 
$(u_{yy},v_{xx})\transpose$ and
a cross-effect contribution $(u_xv_{xy}/v_y,v_yu_{xy}/u_x)\transpose$ 
with some weights.

For the directional diffusion term it is worth noticing that the $y$ 
direction for $u$, and $x$ direction for $v$ are the level-line directions
of the individual components, i.e.\ this term represents 
independent (mean) curvature motion evolutions for the two principal
components.

The mixed second derivatives of the third term express the torsion of the 
graphs of the two principal components, and are multiplied with scaling 
factors that adapt between the componentwise gradients $u_x$ and $v_y$.

In the Oja median PDE, the weights of these terms are constant.
The first two terms act independent in the two components such that 
the torsion-based cross-effect term constitutes the only coupling between
principal components. 

In contrast, the coefficient functions $Q_1$ and $Q_2$ in the
$L^1$ case modulate also the diffusion and curvature terms and create
additional cross-effects between the principal components. 
This is due to the more rigid
Euclidean structure underlying the $L^1$ median definition, and
also makes it sensible to write the PDE for the general case using the
eigenvector directions $\bm{\eta}$ and $\bm{\xi}$ of the
structure tensor as done in Proposition~\ref{prop-l1pde22}. In the
decoupled setting of Theorem~\ref{thm-ojapde22} these directions have
no meaning. This is plausible because these eigenvectors are
strongly related with a Euclidean geometry concept of the $u$-$v$
plane, and are thereby inappropriate for an affine equivariant
process like Oja median filtering.

In detail, the effect of the coefficient functions $Q_1$ and $Q_2$ 
is steered by the relative weight of the principal components,
namely $u_x$ and $v_y$ in the aligned case under consideration.
Denoting the principal component with stronger gradient as dominant
component, and the other as non-dominant component, one sees that
the more pronounced the dominance of the first principal component is, 
the more does it steer the evolution also of the other principal component
(as the joint pseudo-gradient vector $\bm{\eta}$ aligns more and more
with the gradient vector of the dominant component).

%%%%%%%%%%%%%%%%%%%%%%%%%%%%%%%%%%%%%%%%%%%%%%%%%%%%%%%%
\subsubsection{Discussion of the Degenerate Case 
$\mathrm{det}\,\mathrm{D}\bm{u}=0$}
\label{sssec-ojapde22-deg}
%%%%%%%%%%%%%%%%%%%%%%%%%%%%%%%%%%%%%%%%%%%%%%%%%%%%%%%%

The right-hand side of equation \eqref{ojapde22} is undefined 
at locations where $\mathrm{det}\,\mathrm{D}\bm{u}=0$.
While the weights for the second derivatives
$\bm{u}_{xx}$ and $\bm{u}_{yy}$ remain bounded when
$\mathrm{det}\,\mathrm{D}\bm{u}$ goes to zero, the weights
of the mixed terms $u_{xy}$ and $v_{xy}$ can take arbitrarily large
values in this case. To see more precisely what is going on,
let us consider once more the case of a diagonal Jacobian 
$\mathrm{D}\bm{u}$, and keep $u_x=1$ fixed while
$v_y$ goes to zero. Then the weight of $v_{xy}$ in the PDE
\eqref{ojapde22-I1} for $u_t$ goes to infinity with $1/v_y$ whereas the
weight of $u_{xy}$ in the PDE \eqref{ojapde22-I2} for $v_t$ goes to
zero. This is different from the situation for the $L^1$ median where
the coefficients of the mixed terms $u_{xy}$ and $v_{xy}$ were bounded 
for all values of the gradient. However, it is easy to see that
for an affine equivariant median there is basically no way out:
As soon as there is a non-zero influence of $v_{xy}$ on $u_t$, it must
scale in this way by affine equivariance.

Keeping in mind, however, that $u_x$ and $v_y$ for diagonal $\mathrm{D}\bm{u}$
are the channelwise gradient directions, it becomes evident that 
divergent behaviour, such as $v_y$ going to zero while $u_xv_{xy}$ in the
numerator is nonzero, can affect only isolated points in the plane,
and can thereby be cured by using the concept of viscosity solutions.
Vanishing of $v_y$ in an extended region is only possible if the 
function $v$ is constant in this region such that also $v_{xy}$ vanishes,
allowing to fill this definition gap in the term $u_xv_{xy}/v_y$ with zero.

This is also in harmony with the behaviour of the median filter itself.
As the median of a set of data values is restricted to the convex hull of the
input data, infinite amplification of variations from the $v$ to the $u$
component and vice versa is impossible.
As the PDE is only approximated in the limit $\varrho\to0$, it can moreover
be expected that for positive $\varrho$, the sensitivity of the $u$ component
of the median filtering result to $v_{xy}$ will be dampened nonlinearly
which would be reflected in higher order terms neglected in the PDE derivation.

Structuring elements with varying radius $\varrho$ can be translated
to fixed radius by scaling the second-order Taylor coefficients
of the bivariate function, i.e.\ $u_{xx}(\bm{0})$, etc.,
with $\varrho$. Deviations from the PDE behaviour for positive $\varrho$ 
can therefore be studied equivalently by investigating 
nonlinearities in the response of the median to increasing values of
the derivatives $u_{xx}$ within a fixed structuring element.
In Section \ref{sssec-cr-degen} we will demonstrate this dampening
by a numerical experiment.

%%%%%%%%%%%%%%%%%%%%%%%%%%%%%%%%%%%%%%%%%%%%%%%%%%%%%%%%
\subsubsection{Affine Equivariant Transformed $L^1$ Median}
\label{sssec-l1apde22}
%%%%%%%%%%%%%%%%%%%%%%%%%%%%%%%%%%%%%%%%%%%%%%%%%%%%%%%%

As pointed out in Remark~\ref{rm-l1oja22-I}, 
the PDEs approximated by the bivariate $L^1$ and Oja median filters 
coincide when the Jacobian of the image being filtered is the
unit matrix. The difference between the $L^1$ case in 
Proposition~\ref{prop-l1pde22} and the Oja case in
Theorem~\ref{thm-ojapde22} is that the affine equivariance of
the latter allows to derive the general case by affine transformations
from the special case $\mathrm{D}\bm{u}=\bm{I}$, whereas
the $L^1$ median admits only Euclidean transformations such
that its general case needs to be derived from the wider setting
of Lemma~\ref{lem-l1pde22-aligned} where $\mathrm{D}\bm{u}$ can be
arbitrary diagonal. This is where the complicated coefficient
functions of Proposition~\ref{prop-l1pde22} have their origin.

On the other hand, one can combine the minimisation
principle of the $L^1$ median with the affine transformation
concept from the proof of Theorem~\ref{thm-ojapde22} to design
a bivariate space-continuous image filter as follows.

\begin{definition}
[Space-continuous affine equivariant transformed $L^1$ median filter.]
\label{def-l1a22}
Let a function $\bm{u}:\mathbb{R}^2\supset\varOmega\to\mathbb{R}^2$
and the structuring element $D_\varrho$ be given.
For each location $\bm{x}_0\in\varOmega$
with $\mathrm{det}\,\mathrm{D}\bm{u}(\bm{x}_0)\ne0$,
transform the function values $\bm{u}(\bm{x})$ for
$\bm{x}\in\bm{x}_0+D_\varrho$ via
$\hat{\bm{u}} = \mathrm{D}\bm{u}(\bm{x}_0)^{-1}\bm{u}$.
Determine the $L^1$ median $\hat{\bm{u}}^*$ of the data $\hat{\bm{u}}$.
Transform $\hat{\bm{u}}^*$ back to 
$\bm{u}^*(\bm{x}_0)=\mathrm{D}\bm{u}(\bm{x}_0)\hat{\bm{u}}^*$.
The image filter that transfers the input function 
$\bm{u}:\varOmega\to\mathbb{R}^2$ to the
function $\bm{u}^*:\varOmega\to\mathbb{R}^2$ is called affine equivariant
transformed $L^1$ median filter.
\end{definition}

Affine equivariance of this image filter is clear by construction.
By inheritance from the underlying $L^1$ median it approximates
in the case $\mathrm{D}\bm{u}=\bm{I}$ the same
PDEs for $\varrho\to0$ as the $L^1$ and Oja median filters.
Due to its construction from this special case via the affine transform
with $\mathrm{D}\bm{u}$ it finally approximates in the general
(non-degenerate) situation the same PDEs as the Oja median filter.
We have thus the following corollary.

\begin{corollary}
\label{cor-l1apde22}
Let a bivariate image
$\bm{u}:\bbbr^2\supset\varOmega\to\bbbr^2$, $(x,y)\mapsto(u,v)$,
be given. At any location where $\mathrm{det}\,\mathrm{D}\bm{u}\ne0$,
one step of affine equivariant transformed $L^1$ median filtering of $\bm{u}$
with the structuring element $D_\varrho$
approximates for $\varrho\to0$ an explicit time step of size
$\tau=\varrho^2/24$ of the PDE system \eqref{ojapde22} 
from Theorem~\ref{thm-ojapde22}.
\end{corollary}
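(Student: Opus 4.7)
The plan is to follow precisely the structure already used in the proof of Theorem~\ref{thm-ojapde22}, replacing the inner ingredient (Lemma~\ref{lem-ojapde22-I}) with its $L^1$ counterpart from the same identity-Jacobian situation, and then invoking the back-transformation step verbatim. Fix $\bm{x}_0$ with $\mathrm{det}\,\mathrm{D}\bm{u}(\bm{x}_0)\ne0$ and set $\bm{D}:=\mathrm{D}\bm{u}(\bm{x}_0)$. By Definition~\ref{def-l1a22}, one filter step at $\bm{x}_0$ is, by construction, the composition $\bm{u}\mapsto\hat{\bm{u}}=\bm{D}^{-1}\bm{u}$, followed by the standard $L^1$ median of $\hat{\bm{u}}$ over $\bm{x}_0+D_\varrho$, followed by the linear back-map $\hat{\bm{u}}^*\mapsto\bm{D}\hat{\bm{u}}^*$. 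Since $\bm{D}^{-1}$ is a constant matrix, the Taylor expansion of $\hat{\bm{u}}$ at $\bm{x}_0$ is simply the left-multiplication by $\bm{D}^{-1}$ of that of $\bm{u}$; in particular $\mathrm{D}\hat{\bm{u}}(\bm{x}_0)=\bm{I}$, so the inner $L^1$ median satisfies the hypotheses of Lemma~\ref{lem-l1pde22-aligned} in the special case $u_x=v_y=1$ covered by Remark~\ref{rm-l1pde22-I}.

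By that remark, the inner step produces a displacement $\hat{\bm{u}}^*-\hat{\bm{u}}(\bm{x}_0)$ which, up to higher-order corrections in $\varrho$, equals $\tfrac{\varrho^2}{6}$ times the right-hand side of the PDE system $\hat{u}_t=\tfrac14\hat{u}_{xx}+\tfrac34\hat{u}_{yy}-\tfrac12\hat{v}_{xy}$, $\hat{v}_t=\tfrac34\hat{v}_{xx}+\tfrac14\hat{v}_{yy}-\tfrac12\hat{u}_{xy}$. Rescaling the factor as $\tfrac{\varrho^2}{6}=\tfrac{\varrho^2}{24}\cdot4$, this is identically $\tfrac{\varrho^2}{24}$ times the right-hand side of \eqref{ojapde22-I1}--\eqref{ojapde22-I2}. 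In other words, the inner step is \emph{also} an explicit Euler step of size $\varrho^2/24$ of the Oja-form PDE from Lemma~\ref{lem-ojapde22-I}, applied to $\hat{\bm{u}}$.

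Finally, the outer back-transformation $\bm{u}^*(\bm{x}_0)=\bm{D}\hat{\bm{u}}^*$ acts on the displacement exactly as in the proof of Theorem~\ref{thm-ojapde22}: left-multiplying \eqref{ojapde22-I1}--\eqref{ojapde22-I2} by $\bm{D}$ and using $\hat{\bm{u}}_{x_ix_j}=\bm{D}^{-1}\bm{u}_{x_ix_j}$ for the spatial second derivatives reassembles the coefficient matrices $\bm{A}(\mathrm{D}\bm{u})$ and $\bm{B}(\mathrm{D}\bm{u})$ from \eqref{ojapde22-A}--\eqref{ojapde22-B}. Hence one step of the transformed $L^1$ filter approximates, to the same order in $\varrho$, a step of size $\varrho^2/24$ of the PDE system \eqref{ojapde22}, which is the claim.

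The only genuinely delicate point is the passage from a pointwise PDE identity at $\bm{x}_0$ to a filter statement: one must check that truncating $\hat{\bm{u}}$ at second-order Taylor coefficients and ignoring the $\bm{x}$-dependence of $\mathrm{D}\bm{u}$ inside the structuring element introduces only $o(\varrho^2)$ corrections to the $L^1$ median, so that the leading $\varrho^2$ behaviour extracted from Lemma~\ref{lem-l1pde22-aligned} is preserved. Because $\bm{D}$ in Definition~\ref{def-l1a22} is a constant (not a function of the integration variable within $D_\varrho$), this regularity argument is already contained in the derivation of Lemma~\ref{lem-l1pde22-aligned} and requires no new work, so the corollary follows with essentially no further calculation.
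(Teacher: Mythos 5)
Your proposal is correct and follows essentially the same route as the paper: reduce to the identity-Jacobian case via the defining transform, invoke the $u_x=v_y=1$ instance of Lemma~\ref{lem-l1pde22-aligned} (Remark~\ref{rm-l1pde22-I}), observe that after rescaling $\varrho^2/6=4\cdot\varrho^2/24$ this coincides with the Oja system \eqref{ojapde22-I1}--\eqref{ojapde22-I2} (Remark~\ref{rm-l1oja22-I}), and transport back with $\bm{D}$ exactly as in the proof of Theorem~\ref{thm-ojapde22}. The paper presents this as a short inheritance argument in the text preceding the corollary; your write-up just makes the same steps explicit.
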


Using this approach for practical, i.e.\ discrete image filtering,
requires to estimate from the discrete
image data within a structuring element the Jacobian $\mathrm{D}\bm{u}$.
But the space-continuous
data within $\bm{x}_0+D_\varrho$ represent a distribution whose
covariance matrix asymptotically approaches $\mathrm{D}\bm{u}$ as
$\varrho\to0$.
Thus, estimation of this covariance matrix from sampled data
as used in the transformation--retransformation $L^1$ median approaches
\cite{Chakraborty-PAMS96,Hettmansperger-Biomet02,Rao-Sankhya88} 
and as used in our experimental demonstration in Section~\ref{ssec-demo-rgb}
is exactly what is needed here. Hence, the filter from 
Definition~\ref{def-l1a22} is a space-continuous version of the discrete 
transformation--retransformation $L^1$ median filter.

Corollary~\ref{cor-l1apde22} therefore states that
as bivariate image filters, \emph{the 
affine equivariant transformed $L^1$ median is
asymptotically equivalent to the Oja median.}
Further analysis in this section as well as numerical evidence in
Section~\ref{ssec-cr} will reveal that this asymptotic equivalence
generalises beyond the bivariate case.

%%%%%%%%%%%%%%%%%%%%%%%%%%%%%%%%%%%%%%%%%%%%%%%%%%%%%%%%
\subsection{Three-Channel Volume Images}\label{ssec-l1ojapde33}
%%%%%%%%%%%%%%%%%%%%%%%%%%%%%%%%%%%%%%%%%%%%%%%%%%%%%%%%

As the next step in our theoretical investigation, we increase the
dimensions of image and value domain equally to three, thus arriving
at three-channel volume images. A possible application would be given
by 3D deformation fields as they arise in elastic registration of 
medical 3D data sets. We do, however, not aim at applications of this
setting within this work, and include it primarily for the
theoretical completeness. Our focus in this context will be on
affine equivariant median filters.

%%%%%%%%%%%%%%%%%%%%%%%%%%%%%%%%%%%%%%%%%%%%%%%%%%%%%%%%
\subsubsection{Oja Median}\label{sssec-ojapde33}
%%%%%%%%%%%%%%%%%%%%%%%%%%%%%%%%%%%%%%%%%%%%%%%%%%%%%%%%

The first three-channel volume filter we consider will be based on the
3D Oja median in the sense of \eqref{mOja} minimising a sum of volumes of
tetrahedra.

\begin{theorem}\label{thm-ojapde33}
Let a three-channel volume image
$\bm{u}:\bbbr^3\supset\varOmega\to\bbbr^3$, $(x,y,z)\mapsto(u,v,w)$,
be given. At any location where $\mathrm{det}\,\mathrm{D}\bm{u}\ne0$,
one step of Oja median filtering of $\bm{u}$
with the structuring element $B_\varrho$
approximates for $\varrho\to0$ an explicit time step of size
$\tau=\varrho^2/{60}$ of the PDE system
\begin{align}
\begin{pmatrix}u_t\\v_t\\w_t\end{pmatrix} &=
5 \begin{pmatrix}
u_{xx}+u_{yy}+u_{zz}\\v_{xx}+v_{yy}+v_{zz}\\w_{xx}+w_{yy}+w_{zz}
\end{pmatrix}
+ \bm{A}_1(\mathrm{D}\bm{u}) \begin{pmatrix}
u_{yy}\!-\!u_{xx}\\v_{yy}\!-\!v_{xx}\\w_{yy}\!-\!w_{xx}
\end{pmatrix}
+ \bm{A}_2(\mathrm{D}\bm{u}) \begin{pmatrix}
u_{zz}\!-\!u_{xx}\\v_{zz}\!-\!v_{xx}\\w_{zz}\!-\!w_{xx}
\end{pmatrix}
\notag\\*&\quad{}
-3\, \bm{B}_1(\mathrm{D}\bm{u}) \begin{pmatrix}
u_{xy}\\v_{xy}\\w_{xy}
\end{pmatrix}
-3\, \bm{B}_2(\mathrm{D}\bm{u}) \begin{pmatrix}
u_{xz}\\v_{xz}\\w_{xz}
\end{pmatrix}
-3\, \bm{B}_3(\mathrm{D}\bm{u}) \begin{pmatrix}
u_{yz}\\v_{yz}\\w_{yz}
\end{pmatrix}
\label{ojapde33}
\end{align}
where for $\bm{D}:=\mathrm{D}\bm{u}$ the coefficient matrices are
given by
\begin{align}
\bm{A}_1(\bm{D}) &:= 
\bm{I} - 3\,\bm{D}\,\mathrm{diag}(0,1,0)\,\bm{D}^{-1}\;,
\label{ojapde33-A1}
\\
\bm{A}_2(\bm{D}) &:= 
\bm{I} - 3\,\bm{D}\,\mathrm{diag}(0,0,1)\,\bm{D}^{-1}\;,
\label{ojapde33-A2}
\\
\bm{B}_1(\bm{D}) &:= 
\bm{D}\,\begin{pmatrix}0&1&0\\1&0&0\\0&0&0\end{pmatrix}\,\bm{D}^{-1}\;,
\label{ojapde33-B1}
\\
\bm{B}_2(\bm{D}) &:= 
\bm{D}\,\begin{pmatrix}0&0&1\\0&0&0\\1&0&0\end{pmatrix}\,\bm{D}^{-1}\;,
\label{ojapde33-B2}
\\
\bm{B}_3(\bm{D}) &:= 
\bm{D}\,\begin{pmatrix}0&0&0\\0&0&1\\0&1&0\end{pmatrix}\,\bm{D}^{-1}
\label{ojapde33-B3}
\;.
\end{align}
\end{theorem}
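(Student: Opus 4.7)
The plan is to mirror the proof of Theorem~\ref{thm-ojapde22}: first prove a special-case lemma under the hypothesis $\mathrm{D}\bm{u}(\bm{0})=\bm{I}$, then transfer to the general case via the affine equivariance of the three-dimensional Oja median. Substituting $\bm{D}=\bm{I}$ into \eqref{ojapde33-A1}--\eqref{ojapde33-B3} gives $\bm{A}_1(\bm{I})=\mathrm{diag}(1,-2,1)$, $\bm{A}_2(\bm{I})=\mathrm{diag}(1,1,-2)$, while each $\bm{B}_j(\bm{I})$ reduces to its middle matrix. A direct expansion of \eqref{ojapde33} shows that the target PDE in the special case reads
\begin{align*}
u_t &= 3u_{xx}+6u_{yy}+6u_{zz}-3v_{xy}-3w_{xz}\;,\\
v_t &= 6v_{xx}+3v_{yy}+6v_{zz}-3u_{xy}-3w_{yz}\;,\\
w_t &= 6w_{xx}+6w_{yy}+3w_{zz}-3u_{xz}-3v_{yz}\;,
\end{align*}
to be approximated with time step $\tau=\varrho^2/60$.

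To establish this inner lemma I would follow the blueprint from Appendix~\ref{app-22proof1}, which, as remarked in the main text, was designed precisely for such higher-dimensional extensions. Taylor-expand $(u,v,w)\transpose$ around $\bm{0}$ up to second order. In the space-continuous setting write the Oja objective at a candidate point $\bm{y}$ as the triple integral
\begin{align*}
f(\bm{y})=\int_{B_\varrho}\int_{B_\varrho}\int_{B_\varrho}
\bigl\lvert[\bm{y},\bm{u}(\bm{p}_1),\bm{u}(\bm{p}_2),\bm{u}(\bm{p}_3)]\bigr\rvert\,
\mathrm{d}\bm{p}_1\,\mathrm{d}\bm{p}_2\,\mathrm{d}\bm{p}_3\;.
\end{align*}
Compute $\bm{\nabla}f(\bm{y})$ by parametrising the $u$-$v$-$w$ space in directions and pairing each direction with the sub-region of $B_\varrho^3$ on which the oriented tetrahedron volume $[\bm{y},\bm{u}(\bm{p}_1),\bm{u}(\bm{p}_2),\bm{u}(\bm{p}_3)]$ carries the matching sign. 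Linearise the result in the Taylor coefficients and solve $\bm{\nabla}f(\bm{y})=\bm{0}$ for the median displacement $\bm{y}-\bm{u}(\bm{0})$ up to order $\varrho^2$. The numerical coefficients $3,6,-3$ and the time-step denominator $60$ should emerge from the moments of the unit ball in $\bbbr^3$, such as $\int_{B_1}x^2\,\mathrm{d}V=4\pi/15$ together with $\int_{B_1}x^4\,\mathrm{d}V$ and $\int_{B_1}x^2y^2\,\mathrm{d}V$.

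With the inner lemma in hand, the transfer to the general case is immediate: set $\hat{\bm{u}}:=\bm{D}^{-1}\bm{u}$ with $\bm{D}:=\mathrm{D}\bm{u}(\bm{0})$, so that $\mathrm{D}\hat{\bm{u}}(\bm{0})=\bm{I}$ and the inner lemma applies to $\hat{\bm{u}}$. By affine equivariance of the Oja median the filtered value of $\bm{u}$ equals $\bm{D}$ times the filtered value of $\hat{\bm{u}}$, so substituting the unit-Jacobian PDE for $\hat{\bm{u}}$ and re-expressing the Hessian entries of $\hat{\bm{u}}$ in terms of those of $\bm{u}$ produces exactly the conjugated matrices \eqref{ojapde33-A1}--\eqref{ojapde33-B3}, in full analogy with the last display of the proof of Theorem~\ref{thm-ojapde22}. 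The main obstacle lies in the inner lemma: compared with the planar case, the gradient integration runs over ordered triples in $B_\varrho$ instead of ordered pairs in $D_\varrho$, and the orientation partitioning of each tetrahedron is substantially more intricate than the bivariate triangle decomposition. Ensuring that all first-order Taylor contributions cancel by the rotational symmetry of $B_\varrho$, and that the surviving second-order contributions assemble into exactly the stated coefficients, is the computational heart of the argument.
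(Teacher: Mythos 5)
Your proposal follows exactly the paper's route: the general case is reduced via affine equivariance of the 3D Oja median to the unit-Jacobian case (your expanded special-case PDE agrees, after the factor-$3$ rescaling of $\tau$ from $\varrho^2/60$ to $\varrho^2/20$, with the paper's Lemma~\ref{lem-ojapde33-I}), and that inner lemma is established by extending the direction-integration argument of Appendix~\ref{app-22proof1} to the force vectors of tetrahedra, a Radon-like reparametrisation, and an integration over $\mathrm{S}^2$. The only part you leave unexecuted is the lengthy moment computation inside the inner lemma (the paper's Appendix~\ref{app-33proof}), but the method you sketch for it is the one the paper actually uses.
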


The proof of this theorem proceeds analogously to the proof
of Theorem~\ref{thm-ojapde22}, with the use of the following
lemma that is analogous to Lemma~\ref{lem-ojapde22-I}.

\begin{lemma}\label{lem-ojapde33-I}
Let $\bm{u}$ be given as in Theorem~\ref{thm-ojapde33}, with
$\bm{0}=(0,0,0)$ being in the interior of $\varOmega$.
Assume that $\mathrm{D}\bm{u}(\bm{0})$ is the $3\times3$ unit matrix $\bm{I}$.
At $\bm{x}=\bm{0}$, one step of Oja median filtering of $\bm{u}$
with the structuring element $B_\varrho$ then
approximates for $\varrho\to0$ an explicit time step of size
$\tau=\varrho^2/{20}$ of the PDE system
\begin{align}
u_t&=u_{xx}+2(u_{yy}+u_{zz})-(v_{xy}+w_{xz}) \label{ojapde33-I-1}\\
v_t&=v_{yy}+2(v_{xx}+v_{zz})-(u_{xy}+w_{yz}) \label{ojapde33-I-2}\\
w_t&=w_{zz}+2(w_{xx}+w_{yy})-(u_{xz}+v_{yz}) \label{ojapde33-I-3}\;.
\end{align}
\end{lemma}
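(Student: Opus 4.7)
The plan is to follow the blueprint of the first proof of Lemma~\ref{lem-ojapde22-I} from Appendix~\ref{app-22proof1}, adapted to the three-dimensional Oja setting where simplex volumes are volumes of tetrahedra. After translating so that $\bm{u}(\bm{0})=\bm{0}$, I would Taylor-expand
\begin{equation*}
\bm{u}(\bm{x}) = \bm{x} + \tfrac{1}{2}\bm{Q}(\bm{x}) + O(\varrho^3),
\end{equation*}
where $\bm{Q}(\bm{x})$ carries the Hessians of the three channels $u,v,w$ at $\bm{0}$, and look for the median in the form $\bm{m} = \bm{\mu}$ with $\bm{\mu} = O(\varrho^2)$, to be matched against the right-hand side of \eqref{ojapde33-I-1}--\eqref{ojapde33-I-3}.

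The space-continuous Oja objective is
\begin{equation*}
F(\bm{m}) = \tfrac{1}{6}\iiint_{B_\varrho^3}\bigl|\det\bigl(\bm{u}(\bm{x}_1)-\bm{m},\,\bm{u}(\bm{x}_2)-\bm{m},\,\bm{u}(\bm{x}_3)-\bm{m}\bigr)\bigr|\dd\bm{x}_1\dd\bm{x}_2\dd\bm{x}_3 ,
\end{equation*}
and, after differentiation and symmetrisation in the three $\bm{x}_i$, the first-order condition $\nabla_{\bm{m}}F=\bm{0}$ takes the form
\begin{equation*}
\iiint_{B_\varrho^3}\mathrm{sgn}\bigl(\bm{e}\cdot(\bm{u}(\bm{x}_1)-\bm{m})\bigr)\,\bm{n}\dd\bm{x}_1\dd\bm{x}_2\dd\bm{x}_3 = \bm{0},\qquad \bm{n}:=\bigl(\bm{u}(\bm{x}_2)-\bm{m}\bigr)\times\bigl(\bm{u}(\bm{x}_3)-\bm{m}\bigr),\;\bm{e}:=\bm{n}/|\bm{n}|.
\end{equation*}
Mirroring the blueprint, the idea is to parametrise by the unit direction $\bm{e}\in S^2$ in the $u$-$v$-$w$ space. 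For fixed $\bm{e}$, the inner integral over $\bm{x}_1$ is the signed difference of volumes of the two portions into which the surface $\bm{e}\cdot(\bm{u}(\bm{x}_1)-\bm{m})=0$ cuts the ball $B_\varrho$; to leading order this is a hemisphere separation that responds linearly to the projected shift $\bm{e}\cdot\bm{\mu}$ and to the quadratic correction $\bm{e}\cdot\bm{Q}(\bm{x}_1)$, each giving rise to an integral over a \emph{deformed structuring element}.

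The outer integration over $(\bm{x}_2,\bm{x}_3)\in B_\varrho^2$, weighted by the cross product $\bm{n}$, then combines with the induced spherical measure on $\bm{e}$ and factorises into elementary radial and angular moments of $B_\varrho$. The full octahedral symmetry of the setup (invariance of $B_\varrho$ and of $\mathrm{D}\bm{u}(\bm{0})=\bm{I}$ under signed permutations of $(x,y,z)$) should force all cross-couplings between second-order Taylor coefficients except those appearing in \eqref{ojapde33-I-1}--\eqref{ojapde33-I-3} to integrate to zero; the surviving moments (essentially $\int_{B_\varrho}x_i^2$, $\int_{B_\varrho}x_i^4$ and $\int_{B_\varrho}x_i^2x_j^2$) then deliver the coefficients $1$, $2$, $2$ in front of the diagonal Hessian entries, the coefficient $-1$ in front of each torsion term $v_{xy},w_{xz}$, and pin down the prefactor $\tau=\varrho^2/20$.

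The main obstacle will be the bookkeeping: the 3D Oja objective is sensitive to all $18$ second-order scalars, and each must be traced through a triple integral on $B_\varrho^3\times S^2$ with a signed cross-product integrand. Even more than in the bivariate case of Appendix~\ref{app-22proof1}, verifying that every off-diagonal Taylor coefficient not present in the claimed PDE really does cancel by symmetry, before evaluating the few surviving moments, is the delicate part — and it is precisely this systematic separation of Taylor contributions by direction in output space that makes the first-proof strategy preferable here to the more compact integration-in-the-input-plane approach of Appendix~\ref{app-22proof2}.
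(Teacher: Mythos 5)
Your plan is sound and, as far as I can check, produces the right coefficients, but it does not take the paper's route. Despite your stated intention to follow the first proof of Lemma~\ref{lem-ojapde22-I}, the decomposition you describe --- fix the base pair $(\bm{x}_2,\bm{x}_3)$, which determines the force direction $\bm{n}$, and integrate the orientation sign over the apex $\bm{x}_1$ to obtain a signed volume difference --- is the three-dimensional analogue of the \emph{second} proof (Appendix~\ref{app-22proof2}), precisely the strategy the paper declares more cumbersome to extend and therefore avoids. The paper's actual proof in Appendix~\ref{app-33proof} extends the first proof instead: it passes to Radon-like coordinates in the output space, so that $F(0,0,0,\bm{p})$ aggregates, for each normal direction $\bm{p}$, all triples lying in a common plane perpendicular to $\bm{p}$ with a squared-area weight, and each of the $18$ Taylor coefficients is traced through perturbed integration bounds and densities (Tables~\ref{tab-33anavals-1} and~\ref{tab-33anavals-2}), a $y\leftrightarrow z$ symmetry halving the bookkeeping, before the final integration over $S^2$. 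What your organisation buys is a marked reduction of that bookkeeping: because the unperturbed volume difference $\Delta V(\bm{e})$ vanishes identically (a plane through the origin bisects $B_\varrho$), the weight with which a normal direction $\bm{e}$ occurs among base pairs may be taken from the unperturbed, rotationally invariant configuration and factors out as a constant, leaving $\int_{S^2}\bm{e}\,\Delta V(\bm{e})\,\mathrm{d}\sigma(\bm{e})$ with $\Delta V(\bm{e})$ a surface integral of the height perturbation $\bm{e}\cdot\bm{\mu}-\tfrac12\bm{e}\cdot\bm{Q}(\bm{x})$ over the great disc orthogonal to $\bm{e}$; the fourth-order moments of $S^2$ and of that disc then yield the weights $1,2,2$ and $-1$ and the step size $\tau=\varrho^2/20$ (I verified the $u_{xx}$, $u_{yy}$ and $v_{xy}$ entries). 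The one point you must make explicit is this factorisation itself: the distribution of base-pair normals is also perturbed at first order by the Taylor coefficients, and only the identical vanishing of the zeroth-order $\Delta V$ entitles you to discard that perturbation --- without this remark the ``induced spherical measure'' does not simply factor. Your remaining symmetry claims (all other second-order coefficients cancel under signed coordinate permutations) are correct, and the coefficient computations you defer are exactly the elementary moments you name.
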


The proof of this lemma extends the first proof of
Lemma~\ref{lem-ojapde22-I} and is given in Appendix~\ref{app-33proof}.

\begin{remark}
In full analogy with the bivariate case, see Section~\ref{sssec-interpret22}, 
the PDE system can be interpreted in terms of the principal components of local
data variation, which appear decorrelated in Lemma~\ref{lem-ojapde33-I}.
Again, the PDEs combine isotropic diffusion with componentwise
mean curvature motion given by 
$(u_{yy}+u_{zz},v_{xx}+v_{zz},w_{xx}+w_{zz})\transpose$
and cross-effect terms. The latter couple each pair of principal components
by mutual influence based on the torsion of these components
in the plane spanned by both.
\end{remark}

%%%%%%%%%%%%%%%%%%%%%%%%%%%%%%%%%%%%%%%%%%%%%%%%%%%%%%%%
\subsubsection{Affine Equivariant Transformed $L^1$ Median}
\label{sssec-l1apde33}
%%%%%%%%%%%%%%%%%%%%%%%%%%%%%%%%%%%%%%%%%%%%%%%%%%%%%%%%

Definition~\ref{def-l1a22} can be applied verbatim to define an
affine equivariant transformed $L^1$ median filter for
functions $\bm{u}:\mathbb{R}^3\supset\varOmega\to\mathbb{R}^3$, which
we will consider now.

\begin{proposition}\label{prop-l1apde33}
Let a three-channel volume image
$\bm{u}:\bbbr^3\supset\varOmega\to\bbbr^3$, $(x,y,z)\mapsto(u,v,w)$,
be given. At any location where $\mathrm{det}\,\mathrm{D}\bm{u}\ne0$,
one step of affine equivariant transformed $L^1$ median filtering of $\bm{u}$
with the structuring element $B_\varrho$
approximates for $\varrho\to0$ an explicit time step of size
$\tau=\varrho^2/{60}$ of the PDE system
\eqref{ojapde33} with the coefficient matrices 
\eqref{ojapde33-A1}--\eqref{ojapde33-B3} as stated in 
Theorem~\ref{thm-ojapde33}.
\end{proposition}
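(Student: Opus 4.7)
The plan is to follow the strategy used for Corollary~\ref{cor-l1apde22} in the bivariate case, adapted verbatim to three dimensions. For a location $\bm{x}_0$ with regular Jacobian $\bm{D}:=\mathrm{D}\bm{u}(\bm{x}_0)$, I apply the three-dimensional analogue of Definition~\ref{def-l1a22}, transforming the function values inside $\bm{x}_0+B_\varrho$ via $\hat{\bm{u}}=\bm{D}^{-1}\bm{u}$. The transformed field has $\mathrm{D}\hat{\bm{u}}(\bm{x}_0)=\bm{I}$, so the task reduces to analysing a plain $L^1$ median step on data whose first-order local structure is isotropic, and then transferring the outcome back by the inverse transformation.

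The crucial intermediate step is to establish the three-dimensional analogue of Remark~\ref{rm-l1oja22-I}: for $\mathrm{D}\bm{u}(\bm{0})=\bm{I}$ and the ball-shaped structuring element $B_\varrho$, one step of $L^1$ median filtering approximates, up to a uniform rescaling of the time step, the same PDE system \eqref{ojapde33-I-1}--\eqref{ojapde33-I-3} as the Oja median filter does under Lemma~\ref{lem-ojapde33-I}. I would prove this by an asymptotic analysis analogous in spirit to the proof of Lemma~\ref{lem-ojapde33-I}: Taylor-expand $\hat{\bm{u}}$ at $\bm{0}$ through second order, write the $L^1$ objective and its gradient as integrals over $B_\varrho$, and solve for the zero of the gradient perturbatively in the second-order Taylor coefficients. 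Under the hypothesis $\mathrm{D}\hat{\bm{u}}(\bm{0})=\bm{I}$ the problem carries the full simultaneous orthogonal symmetry of the $(x,y,z)$ and $(u,v,w)$ spaces, which together with linearity in the Taylor coefficients forces the limit PDE to have the form $\hat{\bm{u}}_t=a\,\Delta\hat{\bm{u}}+b\,\bm{\nabla}(\mathrm{div}\,\hat{\bm{u}})$. The two scalar coefficients $a$ and $b$ are then pinned down by evaluating the gradient integrals on a single simple test field (for instance, $\hat{u}=x^2/2$ with $\hat{v},\hat{w}$ linear), and should match the Oja coefficients up to the expected time-step rescaling by a factor~$3$.

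Given this identification, the remainder is a coordinate change identical in structure to the proof of Theorem~\ref{thm-ojapde33}. One rewrites the unit-Jacobian PDE system in compact vector form, replaces each derivative of $\hat{\bm{u}}$ by $\bm{D}^{-1}$ times the corresponding derivative of $\bm{u}$, and left-multiplies both sides by $\bm{D}$. Sorting the resulting terms by which second-order derivative of $\bm{u}$ they multiply reproduces the coefficient matrices $\bm{A}_1,\bm{A}_2$ and $\bm{B}_1,\bm{B}_2,\bm{B}_3$ of \eqref{ojapde33-A1}--\eqref{ojapde33-B3}, and the time step rescales from $\varrho^2/20$ in Lemma~\ref{lem-ojapde33-I} to $\varrho^2/60$ in accord with the factor~$3$ already visible in the derivation of Theorem~\ref{thm-ojapde33}.

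The hard part will be the intermediate step identifying the $L^1$ limit PDE for the isotropic case with the Oja limit PDE. The bivariate counterpart of this fact is stated but not proved within the excerpt (it enters through Remark~\ref{rm-l1oja22-I}), and the three-dimensional analogue requires the general multivariate $L^1$ asymptotic analysis announced for the forthcoming technical report \cite{Welk-tr15x}. Once that symmetry-reduced computation is in hand, the rest of the argument is routine affine-equivariance bookkeeping inherited from the proof of Theorem~\ref{thm-ojapde33}.
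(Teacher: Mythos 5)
Your proposal has the same top-level architecture as the paper's proof: reduce to the unit-Jacobian case via the transformation built into the filter definition, show that the plain $L^1$ median with $\mathrm{D}\bm{u}=\bm{I}$ on $B_\varrho$ approximates the same PDE system \eqref{ojapde33-I-1}--\eqref{ojapde33-I-3} as the Oja median (this is exactly the paper's Lemma~\ref{lem-l1apde33-I}), and transport the result back by conjugation with $\bm{D}$ as in Theorem~\ref{thm-ojapde33}, with the time step rescaling from $\varrho^2/20$ to $\varrho^2/60$. Where you genuinely diverge is in how you would prove the key lemma. The paper (Appendix~\ref{app-l1a33proof}) computes the linearised response to three representative Taylor coefficients ($u_{xx}$, $u_{yy}$, $u_{xy}$) by explicit integration over $B_\varrho$ after splitting off a small ball $B_{\varrho^{2/5}}$, and obtains the remaining terms by permutation of variables. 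You instead invoke the simultaneous $O(3)$-equivariance of the isotropic problem to argue that the linearised limit PDE must be of the form $a\,\Delta\hat{\bm{u}}+b\,\bm{\nabla}(\mathrm{div}\,\hat{\bm{u}})$; this is valid (the space of $O(3)$-equivariant linear maps from symmetric-in-the-first-two-slots 3-tensors $\partial_i\partial_j\hat u_k$ to vectors is two-dimensional, spanned by the two contractions), and indeed \eqref{ojapde33-I-1}--\eqref{ojapde33-I-3} equal $2\Delta\hat{\bm{u}}-\bm{\nabla}\mathrm{div}\,\hat{\bm{u}}$. This buys a reduction from three integral computations to two, at the cost of the representation-theoretic preliminaries. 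Two corrections, though: a \emph{single} test field with only $\hat u_{xx}\neq0$ determines only the combination $a+b$, so you need a second test field (e.g.\ one exciting only $\hat u_{yy}$) to separate $a$ from $b$; and your closing concern is misplaced — the isotropic-case lemma does \emph{not} require the general multivariate $L^1$ analysis deferred to \cite{Welk-tr15x} (that is needed only for arbitrary diagonal Jacobians, where the elliptic-integral coefficients appear); the unit-Jacobian case is self-contained and is exactly what the paper proves in Appendix~\ref{app-l1a33proof}.
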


This proposition is a consequence of the following lemma.

\begin{lemma}\label{lem-l1apde33-I}
Let $\bm{u}$ be given as in Proposition~\ref{prop-l1apde33}.
Assume that $\mathrm{D}\bm{u}(\bm{0})$ is the $3\times3$ unit matrix $\bm{I}$.
At $\bm{x}=\bm{0}$, one step of $L^1$ median filtering of $\bm{u}$
with the structuring element $B_\varrho$ then
approximates for $\varrho\to0$ an explicit time step of size
$\tau=\varrho^2/{20}$ of the PDE system
\eqref{ojapde33-I-1}--\eqref{ojapde33-I-3} from Lemma~\ref{lem-ojapde33-I}.
\end{lemma}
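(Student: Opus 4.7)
The plan is to carry out an asymptotic analysis of the Euler--Lagrange condition for the space-continuous $L^1$ median, adapting the technique behind Lemma~\ref{lem-l1pde22-aligned} to three spatial dimensions but simplified by the isotropy that the hypothesis $\mathrm{D}\bm{u}(\bm{0})=\bm{I}$ enforces on the local data distribution. The complicated elliptic-integral coefficients of the bivariate $L^1$ analysis should collapse here to simple rationals. Concretely, the median $\bm{m}^*$ of $\bm{u}$ on $\bm{0}+B_\varrho$ satisfies
\begin{equation*}
\int_{B_\varrho}\frac{\bm{m}^*-\bm{u}(\bm{x})}{\lVert\bm{m}^*-\bm{u}(\bm{x})\rVert}\,\mathrm{d}\bm{x}=\bm{0}.
\end{equation*}
Set $\bm{\mu}:=\bm{m}^*-\bm{u}(\bm{0})$, expected to be of order $O(\varrho^2)$, and Taylor expand $\bm{u}(\bm{x})-\bm{u}(\bm{0})=\bm{x}+\bm{h}(\bm{x})+O(\varrho^3)$ with $\bm{h}$ the quadratic Hessian term. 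Linearising the integrand in the small quantities $\bm{\mu}$ and $\bm{h}$ about the leading value $-\bm{x}/\lVert\bm{x}\rVert$ produces one term proportional to $(\bm{\mu}-\bm{h})/\lVert\bm{x}\rVert$ and one proportional to $\bm{x}(\bm{x}\cdot(\bm{h}-\bm{\mu}))/\lVert\bm{x}\rVert^3$.

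Next I would integrate over $B_\varrho$ term by term; the $\lVert\bm{x}\rVert^{-1}$ and $\lVert\bm{x}\rVert^{-3}$ singularities at the origin are integrable in three dimensions against the relevant second moments, and the $O(\varrho^3)$ Taylor remainder produces only $O(\varrho^5)$ after integration, which is subleading. All odd spherical moments vanish by central symmetry, so the zeroth-order $-\bm{x}/\lVert\bm{x}\rVert$ drops out and only two moment tensors survive: the second moment $\int_{B_\varrho}x_ix_j\lVert\bm{x}\rVert^{-p}\,\mathrm{d}\bm{x}$ (proportional to $\delta_{ij}$, for $p\in\{1,3\}$) and the fourth moment $\int_{B_\varrho}x_ix_jx_kx_l\lVert\bm{x}\rVert^{-3}\,\mathrm{d}\bm{x}$ (proportional to $\delta_{ij}\delta_{kl}+\delta_{ik}\delta_{jl}+\delta_{il}\delta_{jk}$). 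Substituting these reduces the stationarity condition to a finite linear algebraic equation: the $\bm{\mu}$-part consolidates into an isotropic scalar multiple of $\bm{\mu}$, while the contraction of the fourth-moment tensor with the Hessian tensor $\bm{h}$ assembles into the combination $2\Delta\bm{u}-\bm{\nabla}(\bm{\nabla}\cdot\bm{u})$.

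Solving yields $\bm{\mu}=(\varrho^2/20)[2\Delta\bm{u}-\bm{\nabla}(\bm{\nabla}\cdot\bm{u})]$, and reading this off componentwise reproduces the system \eqref{ojapde33-I-1}--\eqref{ojapde33-I-3} with the claimed time step $\tau=\varrho^2/20$; e.g.\ the first coordinate expands to $u_{xx}+2u_{yy}+2u_{zz}-v_{xy}-w_{xz}$, exactly \eqref{ojapde33-I-1}. That this matches the Oja result from Lemma~\ref{lem-ojapde33-I} is not accidental: at $\mathrm{D}\bm{u}(\bm{0})=\bm{I}$ there is no local affine asymmetry that could separate the Euclidean $L^1$ median from the affine-equivariant Oja median. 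Proposition~\ref{prop-l1apde33} then follows by the same transformation--retransformation argument as in the proof of Theorem~\ref{thm-ojapde22}, since Definition~\ref{def-l1a22} reduces the general case to the normalised one $\mathrm{D}\bm{u}=\bm{I}$.

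The main obstacle is the bookkeeping in the fourth-moment contraction: the three index pairings $(ij,kl)$, $(ik,jl)$, $(il,jk)$ each contribute, with the first producing the Laplacian $\Delta\bm{u}$ and the other two combining into the divergence-gradient $\bm{\nabla}(\bm{\nabla}\cdot\bm{u})$; mis-tracking any of them would destroy the match to \eqref{ojapde33-I-1}--\eqref{ojapde33-I-3}. A subordinate technical point is to verify that the next-order remainders from the linearisation of $\lVert\bm{\mu}-\bm{x}-\bm{h}\rVert^{-1}$ near the origin remain subleading after integration, which rests on integrability of $\lVert\bm{x}\rVert^{-2}$ in 3D.
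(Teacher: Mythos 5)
Your proposal is correct and rests on the same core idea as the paper's proof in Appendix~\ref{app-l1a33proof}: linearise the equilibrium condition $\int_{B_\varrho}(\bm{u}-\bm{u}^*)/\lVert\bm{u}-\bm{u}^*\rVert\,\mathrm{d}\bm{x}=0$ in the $\mathcal{O}(\varrho^2)$ quantities and integrate over the ball. The execution differs in a way worth noting. The paper treats the Taylor coefficients one at a time, uses reflection symmetries to kill two of the three median components for each coefficient, and evaluates explicit ratios of radial integrals (obtaining $1/10$, $1/5$, $-1/20$ for the $\alpha_1$, $\delta_1$, $\beta_1$ cases and filling in the rest by permutation); your version handles all coefficients simultaneously via the isotropic second- and fourth-moment tensors of $B_\varrho$ weighted by $\lVert\bm{x}\rVert^{-1}$ and $\lVert\bm{x}\rVert^{-3}$, which collapses the whole right-hand side into the coordinate-free expression $2\Delta\bm{u}-\bm{\nabla}(\bm{\nabla}\cdot\bm{u})$ — a compact form the paper never states, and whose first component indeed expands to \eqref{ojapde33-I-1}. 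Your moment normalisations check out ($\int_{B_\varrho}x_ix_j\lVert\bm{x}\rVert^{-1}=\tfrac{\pi\varrho^4}{3}\delta_{ij}$, fourth moment with constant $\pi\varrho^4/15$, $\bm{\mu}$-coefficient $\tfrac{4\pi\varrho^2}{3}$), yielding exactly $\bm{\mu}=\tfrac{\varrho^2}{20}(2\Delta\bm{u}-\bm{\nabla}(\bm{\nabla}\cdot\bm{u}))$. The one place where the paper is more careful than you are is the neighbourhood of the origin, where $\lVert\bm{\mu}-\bm{h}\rVert/\lVert\bm{x}\rVert$ is not small and the linearisation of the norm is invalid: the paper splits the domain at radius $\varrho^{2/5}$ and bounds the inner integral by $\mathcal{O}(\varrho^{6/5})$ (after rescaling) using that the integrand is a unit vector. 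You only gesture at integrability of $\lVert\bm{x}\rVert^{-2}$; to close the argument you should add that on a ball of radius $o(\varrho)$ around the origin the integrand is absolutely bounded, so its contribution is negligible, and perform the expansion only on the complement.
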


The proof of this lemma is based on the same principle as the proof
of Lemma~\ref{lem-l1pde22-aligned} in \cite{Welk-Aiep14}, extended from
two to three dimensions but at the same time simplified by restricting
the Jacobian $\mathrm{D}\bm{u}$ to the unit matrix.
It is detailed in Appendix~\ref{app-l1a33proof}.

We remark that there is no serious technical obstacle to generalising
this proof even to arbitrary diagonal Jacobians, which would yield a PDE 
approximation result for the standard $L^1$ median in three dimensions. 
As in Proposition~\ref{prop-l1pde22}, quotients of elliptic integrals 
would appear as coefficient functions. With our focus on affine 
equivariant filters, we do not need this generality here.

%%%%%%%%%%%%%%%%%%%%%%%%%%%%%%%%%%%%%%%%%%%%%%%%%%%%%%%%
\subsection{Three-Channel Planar Images}\label{ssec-ojapde23}
%%%%%%%%%%%%%%%%%%%%%%%%%%%%%%%%%%%%%%%%%%%%%%%%%%%%%%%%

So far we have considered settings in which the number of dimensions of
the image domain $\varOmega$ equalled that of the data space.
There are, however, important classes of image data for which this
is not the case, with RGB colour images over planar domains being
the most prominent example. As our last dimensional setting, we
will therefore consider three-channel images over planar domains.
The Jacobian of such an image is a $3\times2$ matrix field.
The generic case is therefore no longer given by an invertible
Jacobian but just by the rank of the Jacobian being maximal ($2$),
which requires adjusting several arguments.
Our focus will again be on affine equivariant filters.

%%%%%%%%%%%%%%%%%%%%%%%%%%%%%%%%%%%%%%%%%%%%%%%%%%%%%%%%
\subsubsection{2D Oja Median}\label{sssec-ojapde23}
%%%%%%%%%%%%%%%%%%%%%%%%%%%%%%%%%%%%%%%%%%%%%%%%%%%%%%%%

With regard to the degeneracy of the 3D Oja median in the sense of
\eqref{mOja} in the case of
three-channel data over a planar domain that has already been 
discussed in Section~\ref{ssec-demo-rgb} we choose the 2D Oja median
in the sense of \eqref{mOja23}
for our theoretical analysis. 

\begin{theorem}\label{thm-ojapde23}
Let a three-channel planar image
$\bm{u}:\bbbr^2\supset\varOmega\to\bbbr^3$, $(x,y)\mapsto(u,v,w)$,
be given. At any location where the $3\times2$ matrix
$\mathrm{D}\bm{u}$ has rank 2,
one step of 2D Oja median filtering of $\bm{u}$
with the structuring element $D_\varrho$
approximates for $\varrho\to0$ an explicit time step of size
$\tau=\varrho^2/{24}$ of the PDE system
\begin{align}
\begin{pmatrix}u_t\\v_t\\w_t\end{pmatrix} &=
2 \begin{pmatrix}
u_{xx}+u_{yy}\\v_{xx}+v_{yy}\\w_{xx}+w_{yy}
\end{pmatrix}
+ \bm{A}(\mathrm{D}\bm{u}) \begin{pmatrix}
u_{yy}\!-\!u_{xx}\\v_{yy}\!-\!v_{xx}\\w_{yy}\!-\!w_{xx}
\end{pmatrix}
-2\, \bm{B}(\mathrm{D}\bm{u}) \begin{pmatrix}
u_{xy}\\v_{xy}\\w_{xy}
\end{pmatrix}
\label{ojapde23}
\end{align}
where for $\bm{D}:=\mathrm{D}\bm{u}=(\partial_x \bm{u}~|~\partial_y \bm{u})$ 
and 
\begin{equation}
\bm{D}_3:= \left(\partial_x \bm{u}~\Big|~\partial_y \bm{u}~\Big|~
\partial_x \bm{u}\times\partial_y\bm{u}\right)
\label{D3}
\end{equation}
the coefficient matrices are given by
\begin{align}
\bm{A}(\bm{D}) &:= 
\bm{D}_3\,\mathrm{diag}(1,-1,0)\,\bm{D}_3^{-1}\;,
\label{ojapde23-A}
\\
\bm{B}(\bm{D}) &:= 
\bm{D}_3\,\begin{pmatrix}0&1&0\\1&0&0\\0&0&0\end{pmatrix}\,\bm{D}_3^{-1}
\label{ojapde23-B}
\;.
\end{align}
\end{theorem}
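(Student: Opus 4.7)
The plan is to mirror the two-step strategy used for Theorems~\ref{thm-ojapde22} and~\ref{thm-ojapde33}: first establish a reduction lemma for a canonical form of $\mathrm{D}\bm{u}(\bm{0})$, then transfer to the general case by an affine change of coordinates in the value space. The new feature is that $\mathrm{D}\bm{u}$ is a $3\times2$ matrix; the augmented $3\times3$ matrix $\bm{D}_3$ from~\eqref{D3}, invertible precisely when $\mathrm{D}\bm{u}$ has rank~$2$, will play the role that $\bm{D}=\mathrm{D}\bm{u}$ played in Theorem~\ref{thm-ojapde22}.

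I would first state and prove a canonical-case lemma: if $\mathrm{D}\bm{u}(\bm{0}) = (\bm{e}_1\,|\,\bm{e}_2)$ (equivalently $\bm{D}_3(\bm{0}) = \bm{I}$), then one step of 2D Oja median filtering at $\bm{0}$ with $D_\varrho$ approximates a time step $\tau = \varrho^2/24$ of the decoupled system
\begin{align*}
u_t &= u_{xx}+3u_{yy}-2v_{xy},\\
v_t &= 3v_{xx}+v_{yy}-2u_{xy},\\
w_t &= 2(w_{xx}+w_{yy}).
\end{align*}
The first two equations copy Lemma~\ref{lem-ojapde22-I}; the third is pure isotropic diffusion. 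One checks directly that this is \eqref{ojapde23} evaluated at $\bm{D}_3=\bm{I}$ via \eqref{ojapde23-A}, \eqref{ojapde23-B}.

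For the proof of this lemma I would Taylor-expand $\bm{u}$ within $D_\varrho$ and note that, in the canonical case, the third component $w$ starts at second order. Writing the squared triangle area as $\tfrac14|(\bm{u}(\bm{x}_i)-\bm{x})\times(\bm{u}(\bm{x}_j)-\bm{x})|^2$ and splitting the relevant vectors into their planar and normal parts, the cross product decomposes into an $\bm{e}_3$-directed component (the $2\times 2$ determinant from the bivariate case) plus an in-plane component quadratic in the small deviations $w_i-w^*$. Expanding the square root around the planar limit yields, at leading order, the bivariate Oja objective handled in Appendix~\ref{app-22proof1}; its stationarity in $(u^*,v^*)$ reproduces the first two PDEs verbatim. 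The leading $w^*$-contribution decouples into a convex quadratic whose disc-integrated minimiser in $w^*$, after Taylor expansion of $w$ around $\bm{0}$ and exploiting the rotational symmetry of $D_\varrho$, gives the claimed isotropic diffusion equation for $w$ with the same time constant $\varrho^2/24$.

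For the transfer to the general case, I would set $\hat{\bm{u}} := \bm{D}_3(\bm{0})^{-1}(\bm{u}-\bm{u}(\bm{0}))$, so that $\partial_x\hat{\bm{u}}(\bm{0})=\bm{e}_1$ and $\partial_y\hat{\bm{u}}(\bm{0})=\bm{e}_2$, putting $\hat{\bm{u}}$ into the canonical form. Although the 2D Oja median for generic $3$D data is not equivariant under $3$D affine transformations, it is exactly equivariant for strictly coplanar data (all triangle areas scale by the common factor $|\det\bm{D}_3|\cdot|\bm{D}_3^{-\mathrm{T}}\bm{n}|/|\bm{n}|$ with $\bm{n}$ the plane normal), and the Taylor data here are coplanar up to $O(\varrho^2)$ normal deviations. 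Applying the canonical lemma to $\hat{\bm{u}}$ and then reading the identities $\hat{\bm{u}}_t = \bm{D}_3^{-1}\bm{u}_t$ and $\hat{\bm{u}}_{\alpha\beta} = \bm{D}_3^{-1}\bm{u}_{\alpha\beta}$ backward, exactly as in the last step of the proof of Theorem~\ref{thm-ojapde22}, produces \eqref{ojapde23} with the stated coefficient matrices. The main obstacle is justifying this last step quantitatively: one must show that the non-uniform $\bm{D}_3^{-1}$-distortion of triangle areas, uniform only for strictly coplanar data, perturbs the Oja minimiser by $o(\varrho^2)$ and so does not affect the PDE limit. A safer fallback, should this approximate-equivariance bookkeeping prove technically awkward, is to redo the canonical-case computation directly for an arbitrary rank-$2$ $\mathrm{D}\bm{u}(\bm{0})$ along the blueprint of the first proof of Lemma~\ref{lem-ojapde22-I} in Appendix~\ref{app-22proof1}: parametrise the Oja gradient by directions in the tangent plane $\mathrm{span}(\bm{u}_x,\bm{u}_y)$, integrate over the resulting deformed discs, and collect second-order Taylor contributions. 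This sidesteps the equivariance subtlety at the price of a longer calculation.
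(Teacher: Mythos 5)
Your overall architecture coincides with the paper's: the paper proves exactly your canonical-case lemma (Lemma~\ref{lem-ojapde23-I}, with the same three equations and the same $\tau=\varrho^2/24$) and then transfers to general rank-2 Jacobians via the augmented matrix $\bm{D}_3$, in analogy with the proof of Theorem~\ref{thm-ojapde22}; the approximate-equivariance caveat you raise for the transfer step is real, and the paper treats it no more rigorously than your sketch does (it leans on the observation from Section~\ref{ssec-demo-rgb} that the 2D Oja median is exactly affine equivariant for coplanar data and that the values here are coplanar up to $\mathcal{O}(\varrho^2)$ normal deviations). Where you genuinely diverge is the derivation of the $w$-equation in the canonical lemma. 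You propose expanding the 3D triangle area around its planar projection and minimising a disc-integrated convex quadratic in $w^*$; the paper (Appendix~\ref{app-23proof}) instead first kills $\beta_3$ by a mirror symmetry, reduces $\alpha_3,\delta_3$ to the rotationally symmetric case $w=\alpha(x^2+y^2)$, and rewrites the Oja stationarity condition as the equilibrium of a weighted $L^1$ median of the altitude feet $H$ with weights $\lvert AB\rvert$, evaluated by a Radon-type integration. The two routes produce the same integral at leading order, but your description hides the one genuinely delicate feature of this computation: the quadratic correction to the area carries a factor of one over the projected triangle area, so the integrals giving both the linear and the quadratic coefficient of your quadratic in $w^*$ diverge logarithmically as the projected triangles degenerate. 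In the paper this surfaces as the $\ln\varrho$ terms in the final quotient, and the constant $1/3$ (hence the coefficient $2$ in $w_t=2(w_{xx}+w_{yy})$) is the ratio of the coefficients of $\ln\varrho$, not of convergent integrals. To make your version sound you must excise the near-degenerate configurations, check that the excluded set contributes only $\mathcal{O}(\varrho^6)$ without the logarithmic enhancement (hence is asymptotically negligible), and read the answer off the log coefficients. This is doable --- and your fallback of following the blueprint of Appendix~\ref{app-22proof1} is essentially what the paper does --- but as written the ``disc-integrated minimiser of a convex quadratic'' step is the one place where the proposal needs an additional observation before it becomes a proof.
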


\begin{remark}\label{rm-oja23-D3}
Note that $\bm{D}_3$, the $3\times3$ matrix obtained by enlarging the
$2\times3$ Jacobian $\mathrm{D}\bm{u}$ with a third column orthogonal to the
first two ones, is regular if and only if $\mathrm{D}\bm{u}$ has rank 2
as required in the hypothesis of the theorem. The transformed
variables $\hat{\bm{u}}:=\bm{D}_3^{-1}\bm{u}$ have the Jacobian 
$\begin{pmatrix}1&0\\0&1\\0&0\end{pmatrix}$. Any scaling of the third
column of $\bm{D}_3$ is actually irrelevant for the statement
and proof of the theorem; 
it cancels out in the evaluation of \eqref{ojapde23-A} and \eqref{ojapde23-B}. 
It may, however, affect the scaling of deviations
from the PDE that occur for positive structuring element radius $\varrho$.
\end{remark}

With the coordinate transform $\bm{D}_3$, the proof of the theorem proceeds 
analogously to the proof of Theorem~\ref{thm-ojapde22} and relies on the 
following lemma.

\begin{lemma}\label{lem-ojapde23-I}
Let $\bm{u}$ be given as in Theorem~\ref{thm-ojapde23}, with the image
domain $\varOmega$ containing the origin $\bm{0}=(0,0)$ in its interior.
Assume that 
$\mathrm{D}\bm{u}(\bm{0}) = \begin{pmatrix}1&0\\0&1\\0&0\end{pmatrix}$.
At $\bm{x}=\bm{0}$, one step of Oja median filtering of $\bm{u}$
with the structuring element $D_\varrho$ then
approximates for $\varrho\to0$ an explicit time step of size
$\tau=\varrho^2/{24}$ of the PDE system
\begin{align}
u_t&=u_{xx}+3u_{yy}-2v_{xy} \label{ojapde23-I-1} \\
v_t&=3v_{xx}+v_{yy}-2u_{xy} \label{ojapde23-I-2} \\
w_t&=2w_{xx}+2w_{yy}        \label{ojapde23-I-3} \;.
\end{align}
\end{lemma}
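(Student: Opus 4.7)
The plan is to mirror the first proof of Lemma~\ref{lem-ojapde22-I} given in Appendix~\ref{app-22proof1}, adapting it to the present setting where the $3\times 2$ Jacobian at the origin is the $2\times 2$ identity augmented by a zero bottom row. After the harmless translation making $\bm{u}(\bm{0})=\bm{0}$, the Taylor expansion to second order takes the form
\[
 u=x+\tfrac12(u_{xx}x^2+2u_{xy}xy+u_{yy}y^2)+O(\varrho^3),\quad
 v=y+\tfrac12(v_{xx}x^2+2v_{xy}xy+v_{yy}y^2)+O(\varrho^3),
\]
and analogously for $w$ but without a linear part. The 2D Oja objective reads
\[
 f(\bm{m})=\iint_{D_\varrho\times D_\varrho}
 \bigl\lVert(\bm{u}(\bm{x}_1)-\bm{m})\times(\bm{u}(\bm{x}_2)-\bm{m})\bigr\rVert
 \,\mathrm{d}\bm{x}_1\,\mathrm{d}\bm{x}_2,
\]
and the median $\bm{m}^{*}$ is characterised by $\nabla f(\bm{m}^{*})=\bm{0}$; one expects $\bm{m}^{*}=O(\varrho^2)$. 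I would linearise the three components of $\nabla f$ jointly in the second-order Taylor coefficients of $\bm{u}$ and in $\bm{m}^{*}$.

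The crucial structural observation is a scale separation in the value space: the planar projections of $\bm{u}(\bm{x}_i)$ agree with $(x_i,y_i)$ only up to $O(\varrho^2)$, while the $w$-components are themselves $O(\varrho^2)$. Writing $\bm{u}(\bm{x}_i)-\bm{m}=\bm{p}_i+\bm{\varepsilon}_i$ with in-plane part $\bm{p}_i=(x_i-m_1,y_i-m_2,0)$ of order $\varrho$ and out-of-plane remainder $\bm{\varepsilon}_i$ of order $\varrho^2$, the cross product decomposes as $\bm{p}_1\times\bm{p}_2+\bm{p}_1\times\bm{\varepsilon}_2+\bm{\varepsilon}_1\times\bm{p}_2+O(\varrho^4)$, whose dominant contribution $\bm{p}_1\times\bm{p}_2=(0,0,\Delta_{12})$ with signed triangle area $\Delta_{12}=O(\varrho^2)$ sits entirely in the third component. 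To the order that determines the PDE, $\partial_{m_1}f=0$ and $\partial_{m_2}f=0$ are therefore governed by derivatives of $\lvert\Delta_{12}\rvert$ alone, reducing them to the bivariate balance already analysed in Lemma~\ref{lem-ojapde22-I} and yielding the PDEs \eqref{ojapde23-I-1} and \eqref{ojapde23-I-2} by transcription.

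For the third PDE I would use that $\partial_{m_3}(\bm{A}\times\bm{B})=(B_2-A_2,\,A_1-B_1,\,0)$ is independent of $\bm{u}$. A short computation gives
\[
 (\bm{A}\times\bm{B})\cdot\partial_{m_3}(\bm{A}\times\bm{B})
 =(a_3\bm{b}-b_3\bm{a})\cdot(\bm{a}-\bm{b}),
\]
with $\bm{a},\bm{b}$ the planar projections of $\bm{u}(\bm{x}_1)-\bm{m}$, $\bm{u}(\bm{x}_2)-\bm{m}$ and $a_3=w(\bm{x}_1)-m_3$, $b_3=w(\bm{x}_2)-m_3$. Dividing by $\lVert\bm{A}\times\bm{B}\rVert\approx\lvert\Delta_{12}\rvert$ and imposing $\partial_{m_3}f(\bm{m}^{*})=0$ produces an equation of the form
\[
 m_3^{*}\iint_{D_\varrho\times D_\varrho}\frac{\lvert\bm{a}-\bm{b}\rvert^2}{\lvert\Delta_{12}\rvert}\,\mathrm{d}\bm{x}_1\mathrm{d}\bm{x}_2
 =\iint_{D_\varrho\times D_\varrho}\frac{(w_2\bm{a}-w_1\bm{b})\cdot(\bm{a}-\bm{b})}{\lvert\Delta_{12}\rvert}\,\mathrm{d}\bm{x}_1\mathrm{d}\bm{x}_2+O(\varrho^5).
\]
Substituting the Taylor expansion of $w$ and invoking parity under $x_i\mapsto-x_i$ and $y_i\mapsto-y_i$ on the right-hand side eliminates the $w_{xy}$ contribution and keeps only the rotationally symmetric combination $w_{xx}+w_{yy}$. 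Evaluation of the remaining radial integrals then fixes $w_t=2(w_{xx}+w_{yy})$ at time step $\tau=\varrho^2/24$, completing the lemma.

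The main obstacle I expect is the book-keeping in the expansion of $\lVert\bm{A}\times\bm{B}\rVert$ around its leading piece $\lvert\Delta_{12}\rvert$: one has to ensure that all correction terms are controlled uniformly on $D_\varrho\times D_\varrho$, that the regime where $\lvert\Delta_{12}\rvert$ itself is small (near-collinear in-plane triples) still yields absolutely convergent integrals, and that no contribution at order $O(\varrho^6)$, which is the order producing the PDE, is lost between the three components of $\nabla f$. The analogous issues are already handled in Appendix~\ref{app-22proof1} for the bivariate case, so the actual work consists of carefully extending those parity arguments and radial integrations to the three-channel setting.
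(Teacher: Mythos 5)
Your treatment of the $u$- and $v$-components matches the paper's: both arguments rest on the observation that the $w$-values are only $O(\varrho^2)$, so the third component of each triangle normal dominates and the in-plane stationarity conditions reduce to the bivariate Lemma~\ref{lem-ojapde22-I}; likewise your parity argument killing $w_{xy}$ is the paper's reflection symmetry in disguise. For the $w$-component your route is formally equivalent to the paper's but packaged differently: you differentiate $\lVert\bm{A}\times\bm{B}\rVert$ with respect to $m_3$ directly, whereas the paper rewrites the stationarity condition as a weighted $L^1$ median of the feet of the altitudes (with weights $\lvert AB\rvert$), reduces to the rotationally symmetric case $w=\alpha(x^2+y^2)$ by symmetry, and integrates in Radon-like polar coordinates. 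Your identity $(\bm{A}\times\bm{B})\cdot\partial_{m_3}(\bm{A}\times\bm{B})=(a_3\bm{b}-b_3\bm{a})\cdot(\bm{a}-\bm{b})$ is correct, and the resulting balance equation for $m_3^{*}$ is the right one.

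The genuine gap is in the final step. The two integrals in your balance equation are \emph{not} absolutely convergent: $1/\lvert\Delta_{12}\rvert=1/\lvert x_1y_2-x_2y_1\rvert$ has a non-integrable (logarithmic) singularity on the codimension-one set of collinear pairs, so your hope that the near-collinear regime "still yields absolutely convergent integrals" cannot be realised. Nor is this issue "already handled in Appendix~\ref{app-22proof1}": in the bivariate case the force of a pair is $\lvert AB\rvert$ times a unit normal, with no division by an area, so no such singularity arises there --- this is a new difficulty specific to the present lemma. What actually happens is that the approximation $\lVert\bm{A}\times\bm{B}\rVert\approx\lvert\Delta_{12}\rvert$ breaks down exactly where $\Delta_{12}$ is small (the neglected components of the cross product, of size $O(\varrho^3)$, then dominate), which effectively cuts the singularity off at $\lvert\Delta_{12}\rvert\sim\varrho^3$. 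Both your numerator and denominator therefore behave like $C_i\ln(1/\varrho)+O(1)$, and the limit $m_3^{*}/\alpha\to1/3$ (whence the coefficient $2$ in \eqref{ojapde23-I-3}) is the ratio of the logarithm coefficients --- equivalently, a ratio of integrals of the two numerators restricted to the collinear configurations. This is precisely what the paper's explicit computation delivers via the cutoff at $\xi=\varrho^{2/3}$, yielding $\nu/\alpha=\bigl(-\ln\varrho\cdot\tfrac{16}{27}-\tfrac{10}{27}\bigr)/\bigl(-\ln\varrho\cdot\tfrac{16}{9}-\tfrac{16}{9}\bigr)\to\tfrac13$. Your proof cannot be completed by "evaluation of the remaining radial integrals" as stated; it needs this cutoff (or an equivalent regularisation) together with the identification of the answer as the ratio of the logarithmic coefficients.
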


The proof of this lemma is given in Appendix~\ref{app-23proof}.
It is based on the result from Lemma~\ref{lem-ojapde22-I} for the
bivariate Oja median
and extends it with a calculation of the $w_t$ component.
In doing the latter, we reformulate the Oja median
function into a weighted $L^1$ median for the feet of altitudes
in the triangles, and proceed then
analogously to the proof of the three-channel $L^1$ median result,
Lemma~\ref{lem-l1apde33-I},
where the minimisation condition was evaluated by explicit integration
over the structuring element. 
This approach has been avoided in the other proofs for the Oja
median results because in the general Oja median case it turns out
extremely tedious, but in the special case considered here it becomes
feasible by exploiting a rotational symmetry argument in combination
with an integration in polar coordinates similar to the first proof
of Lemma~\ref{lem-ojapde22-I}.

%%%%%%%%%%%%%%%%%%%%%%%%%%%%%%%%%%%%%%%%%%%%%%%%%%%%%%%%
\subsubsection{Affine Equivariant Transformed $L^1$ Median}
\label{sssec-l1apde23}
%%%%%%%%%%%%%%%%%%%%%%%%%%%%%%%%%%%%%%%%%%%%%%%%%%%%%%%%

Turning to the affine equivariant transformed $L^1$ median filter,
Definition~\ref{def-l1a22} of its space-continuous variant
in the bivariate case does not transfer straightforwardly to the
situation of three-channel planar images as it uses the inverse of
the Jacobian of the input function. 
For our analysis, we adopt the proceeding from
Section~\ref{sssec-ojapde23} and use the enlarged Jacobian
$\bm{D}_3$ with the same scaling convention as in \eqref{D3}.
We can then define the filter to be analysed as follows.

\begin{definition}[Space-continuous affine equivariant transformed 
$L^1$ median filter for three-channel planar images.]
\label{def-l1a23}
Let a function $\bm{u}:\mathbb{R}^2\supset\varOmega\to\mathbb{R}^3$
and the structuring element $D_\varrho$ be given.
For each location $\bm{x}_0\in\varOmega$
where $\mathrm{D}\bm{u}(\bm{x}_0)$ has rank $2$,
let $\bm{D}_3=\bm{D}_3(\bm{x}_0)$ be given as in \eqref{D3}, and
transform the function values $\bm{u}(\bm{x})$ for
$\bm{x}\in\bm{x}_0+D_\varrho$ via
$\hat{\bm{u}} = \bm{D}_3(\bm{x}_0)^{-1}\bm{u}$.
Determine the $L^1$ median $\hat{\bm{u}}^*$ of the data $\hat{\bm{u}}$.
Transform $\hat{\bm{u}}^*$ back to 
$\bm{u}^*(\bm{x}_0)=\bm{D}_3(\bm{x}_0)\hat{\bm{u}}^*$.
The image filter that transfers the input function 
$\bm{u}:\varOmega\to\mathbb{R}^3$ to the
function $\bm{u}^*:\varOmega\to\mathbb{R}^3$ is called affine equivariant
transformed $L^1$ median filter.
\end{definition}

\begin{remark}
As in the case of the Oja median, any possible scaling of the third
column of $\bm{D}_3$ will be irrelevant for the asymptotic analysis carried
out in the following. A caveat arises, however, when a discrete
filter based on the transformation--retransformation
approach \cite{Chakraborty-PAMS96,Hettmansperger-Biomet02,Rao-Sankhya88}
or as implemented in Section~\ref{ssec-demo-rgb} is used as discrete
approximation for variable positive structuring element radius $\varrho$.
As this discrete procedure just takes the $\mathbb{R}^3$ input data as samples 
of a 3D distribution and tries to normalise this distribution, it might
introduce a scaling factor that changes with $\varrho$. We leave analysis
of this difficulty for future work but remark that the results of our
numerical experiments in Section~\ref{sssec-cr23} support the validity of
the analysis given here.
\end{remark}

\begin{proposition}\label{prop-l1apde23}
Let a three-channel planar image
$\bm{u}:\bbbr^2\supset\varOmega\to\bbbr^3$, $(x,y)\mapsto(u,v,w)$,
be given. At any location where the $3\times2$ matrix
$\mathrm{D}\bm{u}$ has rank 2,
one step of affine equivariant transformed $L^1$ median filtering of $\bm{u}$
with the structuring element $D_\varrho$
approximates for $\varrho\to0$ an explicit time step of size
$\tau=\varrho^2/{24}$ of the PDE system
\eqref{ojapde23} with the coefficient matrices 
\eqref{ojapde23-A}--\eqref{ojapde23-B} as stated in Theorem~\ref{thm-ojapde23}.
\end{proposition}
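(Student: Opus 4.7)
The plan is to follow the template used for Corollary~\ref{cor-l1apde22} and Proposition~\ref{prop-l1apde33}: reduce to a special-case lemma via the affine transformation built into Definition~\ref{def-l1a23}, and then pass back to the general case by conjugation with $\bm{D}_3$, exactly as in the derivation of Theorem~\ref{thm-ojapde23} from Lemma~\ref{lem-ojapde23-I}. The construction in Definition~\ref{def-l1a23} together with the equivariance of the standard $L^1$ median means that it suffices to analyse standard $L^1$ median filtering for a function satisfying
\begin{equation*}
\mathrm{D}\bm{u}(\bm{0})=\begin{pmatrix}1&0\\0&1\\0&0\end{pmatrix}\;.
\end{equation*}

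The main step is therefore an analogue of Lemma~\ref{lem-l1apde33-I}: under the above normalisation, one step of $L^1$ median filtering with structuring element $D_\varrho$ approximates, for $\varrho\to0$, an explicit time step of size $\tau=\varrho^2/24$ of the PDE system \eqref{ojapde23-I-1}--\eqref{ojapde23-I-3} of Lemma~\ref{lem-ojapde23-I}. Granted this lemma, conjugation by $\bm{D}_3$ turns \eqref{ojapde23-I-1}--\eqref{ojapde23-I-3} into \eqref{ojapde23} with coefficient matrices \eqref{ojapde23-A}, \eqref{ojapde23-B}; the freedom in scaling the third column of $\bm{D}_3$ cancels in these formulas, as noted in Remark~\ref{rm-oja23-D3}.

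For the lemma itself I would expand $\bm{u}(\bm{x})=\bm{u}(\bm{0})+\mathrm{D}\bm{u}(\bm{0})\bm{x}+\tfrac12 \bm{H}[\bm{x},\bm{x}]+\LandauO(\lvert\bm{x}\rvert^3)$ on $D_\varrho$, parametrise the median candidate as $\bm{m}=\bm{u}(\bm{0})+\bm{\delta}$ with $\bm{\delta}=\LandauO(\varrho^2)$, and impose vanishing of the gradient of
\begin{equation*}
F(\bm{m})=\int_{D_\varrho}\lVert\bm{m}-\bm{u}(\bm{x})\rVert\dd\bm{x}\;.
\end{equation*}
Because the third row of $\mathrm{D}\bm{u}(\bm{0})$ vanishes, $\bm{u}(\bm{x})-\bm{u}(\bm{0})$ is in-plane at leading order, and so
\begin{equation*}
\lVert\bm{m}-\bm{u}(\bm{x})\rVert=\sqrt{(x-\delta_u+\LandauO(\varrho^2))^2+(y-\delta_v+\LandauO(\varrho^2))^2+\LandauO(\varrho^4)}\;.
\end{equation*}
Thus the $w$-channel enters the $u$- and $v$-equations only through subleading corrections. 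Consequently, the projection of the gradient condition onto the $(\delta_u,\delta_v)$-plane coincides, to the required order, with the bivariate $L^1$-median calculation of Lemma~\ref{lem-l1pde22-aligned} at $\mathrm{D}\bm{u}=\bm{I}$; by Remark~\ref{rm-l1pde22-I} and the time-rescaling from $\varrho^2/6$ to $\varrho^2/24$ (cf.\ Remark~\ref{rm-l1oja22-I}) this reproduces exactly \eqref{ojapde23-I-1}, \eqref{ojapde23-I-2}. The $\delta_w$-projection of the gradient condition reduces at leading order to
\begin{equation*}
\int_{D_\varrho}\frac{\delta_w-\tfrac12\bigl(w_{xx}x^2+2w_{xy}xy+w_{yy}y^2\bigr)}{\sqrt{x^2+y^2}}\dd x\dd y=0\;,
\end{equation*}
and evaluation in polar coordinates gives $\delta_w=(\varrho^2/24)\cdot 2(w_{xx}+w_{yy})$, which is \eqref{ojapde23-I-3} with $\tau=\varrho^2/24$.

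The principal obstacle is to justify the decoupling of the $w$-channel from the $(u,v)$-channels. Since both $\delta_w$ and the $w$-deviations of $\bm{u}(\bm{x})-\bm{u}(\bm{0})$ are $\LandauO(\varrho^2)$ whereas the in-plane deviations are $\LandauO(\varrho)$, one has to track which mixed terms in the Taylor expansion of $1/\lVert\bm{m}-\bm{u}(\bm{x})\rVert$ survive the angular integration over $D_\varrho$. Concretely, one must verify that corrections of the form $(w\text{-numerator})\times(\text{quadratic in-plane correction to the denominator})$ are either $\LandauO(\varrho^3)$ in $\bm{\delta}$ or have vanishing angular integral by symmetry. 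Once this order-of-magnitude bookkeeping is carried out, the remaining work reduces to elementary polar integrals entirely analogous to those in Appendix~\ref{app-l1a33proof}.
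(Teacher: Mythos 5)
Your proposal follows essentially the same route as the paper: reduce to the normalised case $\mathrm{D}\bm{u}(\bm{0})=\bigl(\begin{smallmatrix}1&0\\0&1\\0&0\end{smallmatrix}\bigr)$ via the $\bm{D}_3$-transform of Definition~\ref{def-l1a23}, obtain \eqref{ojapde23-I-1}--\eqref{ojapde23-I-2} from the bivariate $L^1$ result (Remark~\ref{rm-l1pde22-I}) since the $w$-channel perturbs the norm only at relative order $\LandauO(\varrho^2)$, compute the $w$-component by the polar integral (your $\delta_w=\tfrac{\varrho^2}{12}(w_{xx}+w_{yy})$ agrees with $Q_3(1)=\tfrac12$ in Remark~\ref{rm-l1pde23-I} after the time rescaling), and conjugate back by $\bm{D}_3$. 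The only cosmetic difference is that the paper packages the normalised case as the special case $u_x=v_y=1$ of the more general Lemma~\ref{lem-l1pde23-aligned}, whereas you prove exactly the special case needed.
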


The proof of the proposition is analogous to the proof of
Theorem~\ref{thm-ojapde23}, using the special case $u_x=v_y=1$ of the
following lemma. The lemma itself is corrected from \cite{Welk-Aiep14}
and rewritten for the three-channel case.

\begin{lemma}[from \cite{Welk-Aiep14}, corrected]\label{lem-l1pde23-aligned}
Let $\bm{u}$ be given as in Proposition~\ref{prop-l1apde23}, with the image
domain $\varOmega$ containing the origin $\bm{0}=(0,0)$ in its interior.
Assume that the Jacobian at $\bm{0}$ is of the form 
$\mathrm{D}\bm{u}(\bm{0}) = \begin{pmatrix}u_x&0\\0&v_y\\0&0\end{pmatrix}$
with $u_x\ge v_y>0$.
Then one step of $L^1$ median filtering with the structuring element
$D_{\varrho}$ approximates for $\varrho\to0$ at $(x,y)$ an explicit time
step of size $\tau=\varrho^2/6$ of the PDE system consisting of the
equations \eqref{l1pde22-aligned-u}, \eqref{l1pde22-aligned-v}, and
\begin{align}
w_t &= 
Q_3\left(\frac{u_x}{v_y}\right) w_{xx} 
+ Q_3\left(\frac{v_y}{u_x}\right) w_{yy}
\label{l1pde23-aligned-w}
\;,
\end{align}
where the coefficient function $Q_3$ is given by
\begin{align}
Q_3(\lambda) &= \frac{3 \iint_{D_1(\bm{0})} 
t^2/(s^2+\lambda^2t^2)^{3/2}\,\mathrm{d}s\,\mathrm{d}t}
{\iint_{D_1(\bm{0})}
1/(s^2+\lambda^2t^2)^{3/2}\,\mathrm{d}s\,\mathrm{d}t}\;.
\end{align}
\end{lemma}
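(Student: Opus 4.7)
The plan is to extend the bivariate $L^1$ analysis of Lemma~\ref{lem-l1pde22-aligned} to the three-channel setting, treating the extra $w$-channel as a higher-order perturbation. Under the hypothesis on the Jacobian, a Taylor expansion at $\bm{0}$ yields $u(x,y)=u_xx+\tfrac12(u_{xx}x^2+2u_{xy}xy+u_{yy}y^2)+O(\varrho^3)$, the analogous expression for $v$ with leading term $v_yy$, and $w(x,y)=\tfrac12(w_{xx}x^2+2w_{xy}xy+w_{yy}y^2)+O(\varrho^3)$, so the $w$-channel carries no linear term by the rank-$2$ structure of $\mathrm{D}\bm{u}(\bm{0})$. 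I write the $L^1$ median at $\bm{0}$ as $\bm{u}^*=(u^*,v^*,w^*)$ with all three entries of size $O(\varrho^2)$ and take as starting point the first-order optimality condition
\begin{equation*}
\iint_{D_\varrho}\frac{\bm{u}^*-\bm{u}(x,y)}{\lVert\bm{u}^*-\bm{u}(x,y)\rVert}\,\mathrm{d}x\,\mathrm{d}y=\bm{0}\;,
\end{equation*}
viewed as three coupled scalar equations.

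For the $u$- and $v$-components I would argue that the calculation from the proof of Lemma~\ref{lem-l1pde22-aligned} carries over verbatim. The norm squared $(u^*-u)^2+(v^*-v)^2+(w^*-w)^2$ is dominated by its first two summands, which reproduce the bivariate leading behaviour $u_x^2x^2+v_y^2y^2=O(\varrho^2)$, while the third summand is $O(\varrho^4)$ and therefore perturbs $\lVert\bm{u}^*-\bm{u}\rVert^{-1}$ only at an order too high to affect the leading $O(\varrho^3)$ integrals that fix $u_t$ and $v_t$. The same expansion of the integrand used in the bivariate proof then yields equations \eqref{l1pde22-aligned-u} and \eqref{l1pde22-aligned-v} unchanged, with no $w$-derivatives entering the cross terms.

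The genuinely new work lies in the $w$-component equation. Its key simplification is that $w^*-w(x,y)=O(\varrho^2)$ is already free of any term linear in $x,y$, so that, unlike in the bivariate case, no antisymmetric leading contribution needs to be cancelled by expanding the denominator to higher order. Replacing $\lVert\bm{u}^*-\bm{u}\rVert$ by its leading approximation $\sqrt{u_x^2x^2+v_y^2y^2}$ (all further denominator corrections generate at most $O(\varrho^4)$ contributions) and exploiting the odd symmetry of $D_\varrho$ to eliminate the $w_{xy}$ term, the $w$-equation reduces to
\begin{equation*}
w^*\iint_{D_\varrho}\frac{\mathrm{d}x\,\mathrm{d}y}{\sqrt{u_x^2x^2+v_y^2y^2}}=\frac{1}{2}\iint_{D_\varrho}\frac{w_{xx}x^2+w_{yy}y^2}{\sqrt{u_x^2x^2+v_y^2y^2}}\,\mathrm{d}x\,\mathrm{d}y\;.
\end{equation*}
Rescaling $x=\varrho s$, $y=\varrho t$ turns these into dimensionless integrals over $D_1(\bm{0})$; absorbing powers of $u_x$ or $v_y$ out of the square root via the substitution $\lambda=u_x/v_y$ for the $w_{xx}$ term and $\lambda=v_y/u_x$ for the $w_{yy}$ term, and identifying $w^*=\tau w_t$ with $\tau=\varrho^2/6$, produces the claimed $w_t$-equation in the announced $Q_3$-form.

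The main obstacle I expect is the careful bookkeeping of orders in $\varrho$. Concretely, one must verify that the $w$-channel perturbations of $\lVert\bm{u}^*-\bm{u}\rVert$ truly sit at a higher order than any integral contributing to the $u$- and $v$-equations, and symmetrically that the next correction in the denominator of the $w$-equation integrand is suppressed by at least one factor of $\varrho$ relative to the leading contribution. Once those order estimates are in place, the remainder is routine symmetric integration over $D_\varrho$ and recognition of the resulting ratios of disc integrals as the elliptic-integral quotient defining $Q_3$.
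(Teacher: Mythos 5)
Your strategy is exactly the one the paper uses for the sibling results (the paper prints no proof of this particular lemma, but Appendix~\ref{app-l1a33proof} is the template): write the first-order optimality condition of the $L^1$ objective as a vanishing integral, Taylor-expand $\bm{u}$, treat the Taylor coefficients separately since their cross-effects are higher order, and control the near-singular region and the denominator corrections by splitting off a small disc and expanding $\lVert\bm{u}-\bm{u}^*\rVert^{-1}$. Your order bookkeeping is right: the $(w-w^*)^2=\LandauO(\varrho^4)$ contribution to the norm perturbs the $u$- and $v$-equations only at $o(\varrho^3)$, so \eqref{l1pde22-aligned-u}--\eqref{l1pde22-aligned-v} survive unchanged, and in the $w$-equation the absence of a linear term in the numerator means the leading balance is indeed
\begin{equation*}
w^*\iint_{D_1}\frac{\mathrm{d}s\,\mathrm{d}t}{\sqrt{u_x^2s^2+v_y^2t^2}}
=\frac{\varrho^2}{2}\iint_{D_1}\frac{w_{xx}s^2+w_{yy}t^2}{\sqrt{u_x^2s^2+v_y^2t^2}}\,\mathrm{d}s\,\mathrm{d}t\;.
\end{equation*}

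The one place you go wrong is the final sentence claiming this "produces the announced $Q_3$-form". It does not: your balance yields $Q_3(\lambda)=3\iint_{D_1}t^2(s^2+\lambda^2t^2)^{-1/2}\,\mathrm{d}s\,\mathrm{d}t\,\big/\iint_{D_1}(s^2+\lambda^2t^2)^{-1/2}\,\mathrm{d}s\,\mathrm{d}t$, with exponent $1/2$, whereas the statement prints exponent $3/2$. The $3/2$ exponent arises in $Q_1$, $Q_2$ precisely because the linear part of the numerator interacts with the first-order correction of the denominator (producing $(R^2-\xi^2)/R^3$-type integrands, cf.\ Appendix~\ref{app-l1a33proof}); since $w-w^*$ has no linear part, that mechanism is absent here. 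Note that the printed $Q_3$ cannot be what is meant anyway: $\iint_{D_1}(s^2+\lambda^2t^2)^{-3/2}\,\mathrm{d}s\,\mathrm{d}t$ diverges at the origin, and the printed formula is inconsistent with $Q_3(1)=1/2$ from Remark~\ref{rm-l1pde23-I} (your $1/2$-power version gives $3\cdot\frac{\pi/3}{2\pi}=\frac12$, matching the remark, Lemma~\ref{lem-ojapde23-I} and Table~\ref{tab-l1oja23-coeffcomp}). So your derivation is correct and the statement's exponent is evidently a typo carried over from $Q_1,Q_2$; you should say so explicitly rather than assert agreement with the formula as printed.
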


\begin{remark}\label{rm-l1pde23-I}
In the case $u_x=1$, $v_y=1$, the coefficients of \eqref{l1pde23-aligned-w}
simplify via $Q_3(1)=1/2$ such that
one obtains
\begin{align}
w_t &= \tfrac12w_{xx} + \tfrac12w_{yy}\;,
\end{align}
which together with \eqref{l1pde22-I-1} and \eqref{l1pde22-I-2} and
after rescaling the time variable by $4$
yields \eqref{ojapde23-I-1}--\eqref{ojapde23-I-3}. This is the relevant
case for the proof of Proposition~\ref{prop-l1apde23}.
\end{remark}

%%%%%%%%%%%%%%%%%%%%%%%%%%%%%%%%%%%%%%%%%%%%%%%%%%%%%%%%
\section{Experimental Validation of the PDEs for Multivariate 
Median Filtering}
\label{sec-ex}
%%%%%%%%%%%%%%%%%%%%%%%%%%%%%%%%%%%%%%%%%%%%%%%%%%%%%%%%

This section is focussed at validating the PDE approximation
results from Section~\ref{sec-pde} by numerical experiments.

%%%%%%%%%%%%%%%%%%%%%%%%%%%%%%%%%%%%%%%%%%%%%%%%%%%%%%%%
\subsection{Simple Example Functions}
\label{ssec-cr}
%%%%%%%%%%%%%%%%%%%%%%%%%%%%%%%%%%%%%%%%%%%%%%%%%%%%%%%%

We start with several experiments on simple example functions
in which we compare the individual coefficients of the
PDEs for the different variants of multivariate median filters
derived in Section~\ref{sec-pde} with median filtering results
for the function values sampled at high resolution.

%%%%%%%%%%%%%%%%%%%%%%%%%%%%%%%%%%%%%%%%%%%%%%%%%%%%%%%%
\subsubsection{Bivariate Filters, Case $\mathrm{D}\bm{u}=\bm{I}$}
\label{sssec-cr22}
%%%%%%%%%%%%%%%%%%%%%%%%%%%%%%%%%%%%%%%%%%%%%%%%%%%%%%%%

\begin{table}[b!]
\caption{\label{tab-l1oja22-coeffcomp}
Validation of the PDE approximation of bivariate $L^1$ and Oja median
filtering in the case $u_x=v_y=1$, $u_y=v_x=0$, see 
Remark~\ref{rm-l1pde22-I} and Lemma~\ref{lem-ojapde22-I}.
Median values $(u^*,v^*)$ 
computed from functions sampled with resolution $0.01$
in a disc-shaped structuring element of radius $1$ are juxtaposed with
the time steps of size $\tau=1/24$ of the corresponding PDE system
\eqref{ojapde22-I1}, \eqref{ojapde22-I2}.
Medians and time steps are scaled by $10^6$ for more compact 
representation.
}
\medskip
\centering
\begin{tabular}
{lllllrrrrrr}
\hline
\multicolumn{2}{c}{\rule{0pt}{1.2em}Function}&
\multicolumn{3}{c}{Derivatives}&
\multicolumn{2}{c}{$L^1$ median}&
\multicolumn{2}{c}{Oja median}&
\multicolumn{2}{c}{PDE time step}\\
$u$&$v$&$u_{xx}$&$u_{xy}$&$u_{yy}$
&\multicolumn{1}{c}{$10^6u^*$}&\multicolumn{1}{c}{$10^6v^*$}
&\multicolumn{1}{c}{$10^6u^*$}&\multicolumn{1}{c}{$10^6v^*$}
&\multicolumn{1}{c}{$10^6\tau u_t$}&\multicolumn{1}{c}{$10^6\tau v_t$}\\\hline
\rule{0pt}{1.2em}%
$x+0.05x^2$&$y$        &$0.1$&$0$&$0$
&$     4\,167$&$          0$
&$     4\,181$&$          0$
&$     4\,167$&$          0$
\\
$x+0.1xy$  &$y$        &$0$&$0.1$&$0$
&$          0$&$-    8\,364$
&$          0$&$-    8\,372$
&$          0$&$-    8\,333$
\\
$x+0.05y^2$&$y$        &$0$&$0$&$0.1$
&$    12\,479$&$          0$
&$    12\,479$&$          0$
&$    12\,500$&$          0$
\\
\hline
\end{tabular}
\end{table}

To verify the results on bivariate $L^1$ and Oja median filtering,
we focus first on the case $\mathrm{D}\bm{u}(\bm{0})=\bm{I}$, see
Remark~\ref{rm-l1pde22-I} and Lemma~\ref{lem-ojapde22-I}.

We discretise sample functions $u(x,y)$ and $v(x,y)$ in the
structuring element $D_1(\bm{0})$, i.e.\ the disc of radius $1$
around the origin, with a grid resolution of $0.01$ in $x$ and
$y$ direction, which yields $31\,417$ sample points. 
For symmetry reasons, the sample functions are chosen to test
only the weights of $u_{xx}$, $u_{xy}$ and $u_{yy}$ while leaving
$v(x,y)\equiv y$. 
For these input data we compute the $L^1$ and Oja medians and compare 
these with the theoretical values given by the right-hand side of
\eqref{ojapde22-I1}, \eqref{ojapde22-I2} with the time
step size $1/24$. The results can be found in 
Table~\ref{tab-l1oja22-coeffcomp}. 
The observed deviations in the range of $1.5\times10^{-4}$
between the computed medians and PDE time steps are expectable given the
grid resolution.

%%%%%%%%%%%%%%%%%%%%%%%%%%%%%%%%%%%%%%%%%%%%%%%%%%%%%%%%
\subsubsection{Three-Channel Planar Image Filters, Isotropic Case}
\label{sssec-cr23}
%%%%%%%%%%%%%%%%%%%%%%%%%%%%%%%%%%%%%%%%%%%%%%%%%%%%%%%%

The tests for bivariate filters in the case $\mathrm{D}\bm{u}=\bm{I}$
from Section~\ref{sssec-cr22} can easily be extended to three-channel
planar 
image filtering by $L^1$ and 2D Oja medians. 
Using the same structuring element and sampling grid
as before, we sample now $\mathbb{R}^3$-valued functions $(x,y)\mapsto(u,v,w)$
with $u_x=v_y=1$, $u_y=v_x=w_x=w_y=0$, where we vary single second-order
Taylor coefficients away from zero.

The results shown in Table~\ref{tab-l1oja23-coeffcomp} indicate an
accuracy of approximation comparable to the previous case, and thereby
confirm the validity of the approximation results from 
Lemma~\ref{lem-ojapde23-I} and Lemma~\ref{lem-l1pde23-aligned} (with
$u_x=v_y=1$).
We have omitted test cases where only second derivatives of $u$ and $v$
were varied, because in these cases results were identical to the
pure bivariate case.

\begin{table}[b!]
\caption{\label{tab-l1oja23-coeffcomp}
Validation of the PDE approximation of $L^1$ and 2D Oja median
filtering of three-channel planar image data
in the case $u_x=v_y=1$, $u_y=v_x=w_x=w_y=0$.
Median values $(u^*,v^*,w^*)$
computed from functions sampled with resolution $0.01$
in a disc-shaped structuring element of radius $1$ are juxtaposed with
the time steps of size $\tau=1/24$ of the corresponding PDE system
\eqref{ojapde23-I-1}--\eqref{ojapde23-I-3}.
Medians and time steps are scaled by $10^6$ for more compact 
representation.
}
\medskip
\centering
\small
\begin{tabular}
{l@{~}l@{~}l@{~~~}cr@{~}r@{~}r@{~~~}r@{~}r@{~}r@{~~~}r@{~}r@{~}r}
\hline
\multicolumn{3}{c}{\rule{0pt}{1.2em}Function}&
\multicolumn{1}{c}{\kern-1emNonzero second\kern-1em}&
\multicolumn{3}{c}{$L^1$ median}&
\multicolumn{3}{c}{Oja median}&
\multicolumn{3}{c}{PDE time step}\\
$u$&$v$&$w$
&\multicolumn{1}{c}{derivatives}
&\multicolumn{1}{@{}c@{}}{$10^6u^*$}
&\multicolumn{1}{@{}c@{}}{$10^6v^*$}
&\multicolumn{1}{@{}c@{~~~}}{$10^6w^*$}
&\multicolumn{1}{@{}c@{}}{$10^6u^*$}
&\multicolumn{1}{@{}c@{}}{$10^6v^*$}
&\multicolumn{1}{@{}c@{~~~}}{$10^6w^*$}
&\multicolumn{1}{@{}c@{~}}{$10^6\tau u_t$}
&\multicolumn{1}{@{}c@{~}}{$10^6\tau v_t$}
&\multicolumn{1}{@{}c@{}}{$10^6\tau w_t$}\\\hline
\rule{0pt}{1.2em}%
$x$        &$y$        &$0.05x^2$
&$w_{xx}=0.1$
&$          0$&$          0$&$     8\,401$
&$          0$&$          0$&$     8\,390$
&$          0$&$          0$&$     8\,333$
\\
$x$        &$y$        &$0.1xy$
&$w_{xy}=0.1$
&$          0$&$          0$&$          0$
&$          0$&$          0$&$          0$
&$          0$&$          0$&$          0$
\\
$x$        &$y$        &$0.05y^2$
&$w_{yy}=0.1$
&$          0$&$          0$&$     8\,401$
&$          0$&$          0$&$     8\,390$
&$          0$&$          0$&$     8\,333$
\\
$x+0.05x^2$&$y$        &$0.05x^2$
&$\left\{\begin{array}{@{}c@{}}u_{xx}=0.1\\w_{xx}=0.1\end{array}\right\}$
&$     4\,180$&$          0$&$     8\,405$
&$     4\,197$&$          0$&$     8\,391$
&$     4\,167$&$          0$&$     8\,333$
\\
$x+0.05x^2$&$y$        &$0.1xy$
&$\left\{\begin{array}{@{}c@{}}u_{xx}=0.1\\w_{xy}=0.1\end{array}\right\}$
&$     4\,172$&$          0$&$          0$
&$     4\,150$&$          0$&$          0$
&$     4\,167$&$          0$&$          0$
\\
$x+0.05x^2$&$y$        &$0.05y^2$
&$\left\{\begin{array}{@{}c@{}}u_{xx}=0.1\\w_{yy}=0.1\end{array}\right\}$
&$     4\,175$&$          0$&$     8\,401$
&$     4\,195$&$          0$&$     8\,389$
&$     4\,167$&$          0$&$     8\,333$
\\
\hline
\end{tabular}
\end{table}

%%%%%%%%%%%%%%%%%%%%%%%%%%%%%%%%%%%%%%%%%%%%%%%%%%%%%%%%
\subsubsection{Nonlinear Dampening}
\label{sssec-cr-degen}
%%%%%%%%%%%%%%%%%%%%%%%%%%%%%%%%%%%%%%%%%%%%%%%%%%%%%%%%

Referring to our discussion in Section~\ref{sssec-ojapde22-deg}
regarding the behaviour of multivariate median filters and
the corresponding PDEs for structuring elements $D_\varrho$
of nonvanishing radius $\varrho$, we turn to check by an
additional numerical experiment how an increase of the
second derivatives away from zero effects the median.

To this end, we compute bivariate
Oja medians, again with a structuring
element of radius $\varrho=1$ discretised with grid resolution $0.01$,
for functions with 
increasing values of the three second partial derivatives
$u_{xx}$, $u_{yy}$, $v_{xy}$ occurring on the right-hand side
of \eqref{ojapde22-I1}. The underlying functions are 
\begin{itemize}
\item $u=x+\frac12sx^2$, $v=y$ for the test of $u_{xx}$,
\item $u=x+\frac12sy^2$, $v=y$ for the test of $u_{yy}$, and
\item $u=x$, $v=y-sxy$ for the test of $v_{xy}$,
\end{itemize}
where $s$ is varied from $0$ to $2.5$.

Figure~\ref{fig-nonlinresponse} shows the $u$ components of the resulting
Oja medians dependent on the values of $s$.
For $s$ close to zero they increase linearly with
the ascents predicted by \eqref{ojapde22-I1}. Regarding $u_{xx}$ and
$u_{yy}$, the median values follow this linear ascent closely, 
within $10\,\%$ tolerance, up to $s\approx1$,
after which the values grow rapidly faster in the $u_{xx}$ case, and
are dampened in the $u_{yy}$ case.

In contrast, in the $v_{xy}$ case the deviation from linear behaviour
starts much earlier, leading to more than $10\,\%$ deviation already
for $s\approx0.6$, with the growth of the median rapidly being dampened
above this level. For large $s$, the effect of the coefficient $v_{xy}$
on the median even starts to decrease. The response of the median value
to $v_{xy}$ confirms the inherent nonlinear dampening effect of the
median filter procedure.

\begin{figure}[t!]
\unitlength0.0125\textwidth
\begin{picture}(80,31)
\put( 0.0, 0.0){\includegraphics[width=48\unitlength]{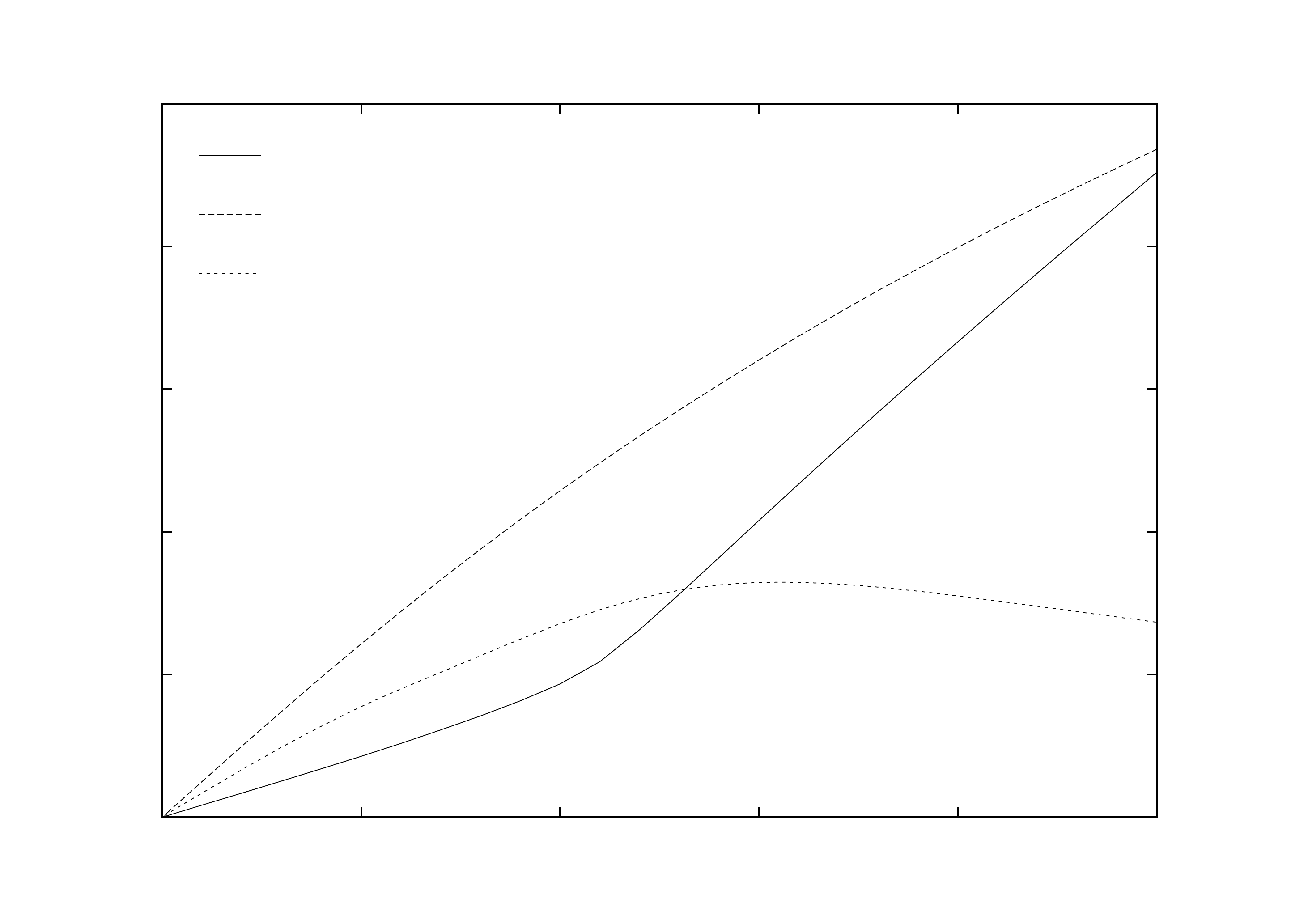}}
\put( 3.3, 3.6){\small$0.0$}
\put( 3.3, 8.6){\small$0.1$}
\put( 3.3,13.9){\small$0.2$}
\put( 3.3,19.1){\small$0.3$}
\put( 3.3,24.3){\small$0.4$}
\put( 3.3,29.5){\small$0.5$}
\put( 1.0, 6.0){\rotatebox{90}{Oja median ($u$ component)}}
\put( 5.6, 2.0){\small$0.0$}
\put(12.2, 2.0){\small$0.5$}
\put(19.5, 2.0){\small$1.0$}
\put(26.8, 2.0){\small$1.5$}
\put(34.1, 2.0){\small$2.0$}
\put(41.4, 2.0){\small$2.5$}
\put(18.0, 0.0){\small$u_{xx}$, $u_{yy}$, $-v_{xy}$}
\put(10.0,27.8){\small$u_{xx}$}
\put(10.0,25.7){\small$u_{yy}$}
\put(10.0,23.6){\small$v_{xy}$}
\put(48,0){\parbox[b]{32\unitlength}{%
\caption{\label{fig-nonlinresponse}Nonlinear response
of bivariate Oja median filter for fixed structuring element
radius $\varrho=1$ to Taylor coefficients $u_{xx}$, $u_{yy}$,
$v_{xy}$, where $\mathrm{D}\bm{u}=\bm{I}$.}}}
\end{picture}
\end{figure}

%%%%%%%%%%%%%%%%%%%%%%%%%%%%%%%%%%%%%%%%%%%%%%%%%%%%%%%%
\subsubsection{A More Complex Bivariate Example}
\label{sssec-cr-exfn}
%%%%%%%%%%%%%%%%%%%%%%%%%%%%%%%%%%%%%%%%%%%%%%%%%%%%%%%%

To demonstrate the validity of the PDE approximation results
of Proposition~\ref{prop-l1pde22}, Theorem~\ref{thm-ojapde22}, 
and Corollary~\ref{cor-l1apde22} also away from the
special case $\mathrm{D}\bm{u}=\bm{I}$, 
we consider a simple bivariate example function given by
\begin{align}
u(x,y)&= x^2\;, & v(x,y) &= \sqrt{x^2+y^2}\;.
\end{align}
Level sets of $u$ and $v$ for this function in the range
$[0,1]\times[0,1]$ are depicted in Figure~\ref{fig-exfn2}.
In this figure, also seven test locations a--g are depicted
together with structuring elements of radius $\varrho=0.1$
for which we compare in the following median filter values
with time steps $(\tau u_1,\tau v_t)$ of the respective
PDE counterparts.

We start with the $L^1$ median and the PDE 
\eqref{l1pde22}--\eqref{l1pde22-W} from 
Proposition~\ref{prop-l1pde22}.
Time steps $(\tau u_t,\tau v_t)$ of the PDE \eqref{l1pde22}
at the locations a--g were computed analytically, using
numeric integration for the integral values $Q_1(\lambda)$ and
$Q_2(\lambda)$. The time step size for \eqref{l1pde22} was
$\tau=\varrho^2/6=0.001\,667$.
For the computation of $L^1$ medians,
the structuring elements of radius $\varrho$ around
locations a--g were sampled at grid resolution $0.001$
resulting in approx.~$31\,000$ sample points for each location.
From their function values $(u,v)$ the $L^1$ median $(u^*,v^*)$
was computed by the gradient descent method. 
For comparison with the PDE time step the input
function value of the midpoint was subtracted. 
Table~\ref{tab-l1demo} shows PDE time steps, the corresponding
median filter updates $u^*-u$, $v^*-v$ and the relative errors
(in Euclidean norm) with respect to the PDE time steps, i.e.\
$\lvert(u^*-u-\tau u_t,v^*-v-\tau v_t)\rvert/\lvert\tau u_t,\tau v_t\rvert$.

For the Oja median, we proceed analogously, with the 
analytically computed PDE time steps of \eqref{ojapde22}, 
Oja median filter updates and their relative errors being shown in 
Table~\ref{tab-ojademo}.
The time step size for \eqref{ojapde22} was
$\tau=\varrho^2/24=0.000\,417$.
The Oja median values were computed using the gradient descent
method.
Moreover, Table~\ref{tab-ojademo} contains results of the affine
equivariant transformed $L^1$ median which according to
Corollary~\ref{cor-l1apde22} approximates the same PDE.

In Table~\ref{tab-l1demo}, the results for locations c--g
show relative errors 
below $3\,\%$, 
which are reasonable given the structuring element radius
$\varrho=0.1$ and the grid resolution.
The approximation at locations a and b is
less accurate. At these locations, the gradients of $u$ and $v$ are almost
aligned and not close to zero, making the Jacobian $\mathrm{D}\bm{u}$ 
ill-conditioned. Locations e and f where the gradient $\bm{\nabla}u$
is small and $\mathrm{D}\bm{u}$ therefore also ill-conditioned, create less
problems for the approximation. 

The results in Table~\ref{tab-ojademo} show that the approximation of
the PDE \eqref{ojapde22} by both the Oja median and the transformed
$L^1$ median is fairly accurate, with relative errors of less than $2\,\%$,
at locations b, c, d and g
where $\mathrm{det}\,\mathrm{D}\bm{u}$ is sufficiently different from
zero. 
Larger discrepancies are observed for locations a, e, and f
which are closer to the coordinate axes. Note that on the $x$
axis, $\mathrm{D}\bm{u}$ becomes singular due to
coinciding gradient directions for $u$ and $v$, while on the
$y$ axis it does so due to the vanishing of $\bm{\nabla} u$.

A comparison of Tables~\ref{tab-l1demo} and~\ref{tab-ojademo}
underlines that the standard $L^1$ median on one hand
and the Oja median and transformed $L^1$ median on the other hand
indeed differ substantially. For their very similar
results in the tests of Section~\ref{sssec-cr22} it was decisive that
the case $\mathrm{D}\bm{u}=\bm{I}$ was tested there. In contrast,
for our test function here the Jacobian is far away from the
unit matrix, not only in locations a, e, f where it is near the 
degenerate case but also in the fairly regular locations b--d and g.

A close look at Table~\ref{tab-ojademo} also makes clear that,
although they approximate the same PDE, the Oja median filter and
affine equivariant transformed $L^1$ median filter are not identical.
An analysis of the higher order terms neglected in the PDE approximation
analysis of Section~\ref{sec-pde} could shed more light on these
differences.

\begin{figure}[t!]
\unitlength0.003\textwidth
\centering
\begin{picture}(333,130)
\put(0,2){\includegraphics[width=120\unitlength]{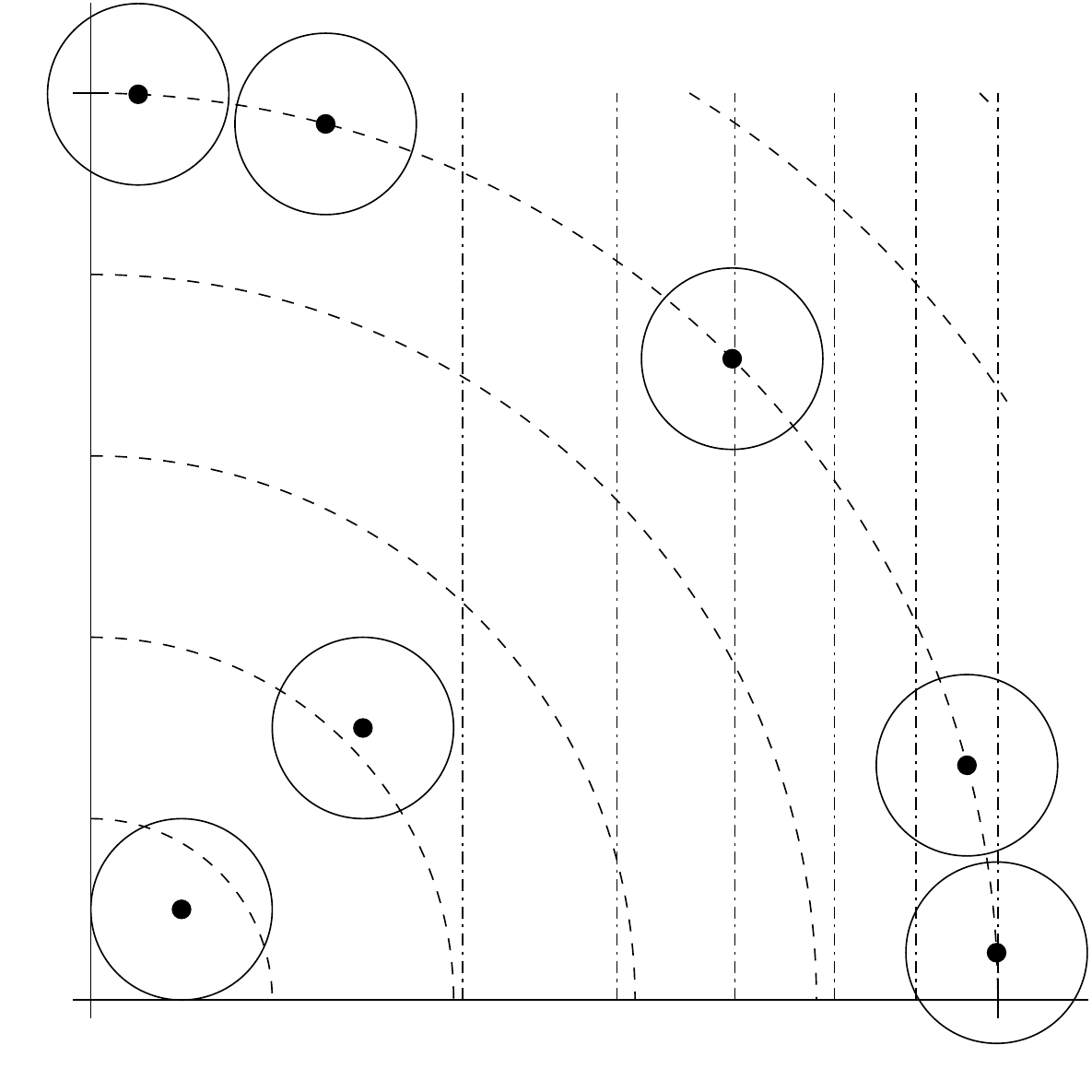}}
\put(111,17){a}
\put(110,36){b}
\put(82,83){c}
\put(37,109){d}
\put(16,113){e}
\put(22,17){f}
\put(42,42){g}
\put(7,2){$0$}
\put(0,109){$1$}
\put(108,0){$1$}
\put(122,9){$x$}
\put(7,125){$y$}
\put(140,0){\parbox[b]{193\unitlength}{%
\caption{\label{fig-exfn2}
Example function $(u,v)=(x^2,\sqrt{x^2+y^2})$ used to demonstrate
PDE approximation of bivariate median filters. Dot-dashed lines are level
lines $u=\mathrm{const}$, dashed lines are level lines
$v=\mathrm{const}$. Points a--g are the sample locations for
which numerical results are given in Table~\ref{tab-ojademo},
surrounded by their structuring elements as solid circles.
}}}
\end{picture}
\end{figure}

\begin{table}[b!]
\caption{\label{tab-l1demo}
Comparison of analytically computed time steps
$(\tau u_t,\tau v_t)$ of the PDE \eqref{l1pde22}--\eqref{l1pde22-W}
with numerical computation of the $L^1$ median $(u^*,v^*)$
for the function $(u,v)=(x^2,\sqrt{x^2+y^2})$.
To compute $(u^*,v^*)$, the structuring element of radius
$\varrho=0.1$ was sampled using a grid with spatial mesh size
$h=0.001$, generating about $31\,000$ data points.
The time step size for \eqref{l1pde22} was chosen as
$\tau=\varrho^2/6=0.001\,667$.
Medians and time steps are scaled by $10^6$ for more compact 
representation.
}
\medskip
\centering
\begin{tabular}{r@{~~}l@{~~}ll@{~~}lr@{~~}rr@{~~}r@{~~}r}
\hline
&
\multicolumn{2}{c}{\rule{0pt}{1.2em}Location}&
\multicolumn{2}{c}{Function value}&
\multicolumn{2}{c}{PDE time step}&
\multicolumn{3}{c}{$L^1$ median}\\
&\multicolumn{1}{c}{$x_0$}         
  &\multicolumn{1}{c}{$y_0$}
&\multicolumn{1}{c}{$u$}           
  &\multicolumn{1}{c}{$v$}
&\multicolumn{1}{c@{~}}{$10^6\tau u_t$}    
  &\multicolumn{1}{c}{$10^6\tau v_t$}
&\multicolumn{1}{c@{~}}{$10^6(u^*\!-\!u)$} 
  &\multicolumn{1}{c@{~}}{$10^6(v^*\!-\!v)$}
  &\multicolumn{1}{c}{rel. error}
%\\
%&&&&
%&\multicolumn{1}{c}{$\times10^6$} &\multicolumn{1}{c}{$\times10^6$}
%&\multicolumn{1}{c}{$\times10^6$} &\multicolumn{1}{c}{$\times10^6$} &
\\ \hline
\rule{0pt}{1.2em}%
a)&$0.9986$&$0.0523$&$0.9973$&$1.0000$&
  $       320$&$    1\,045$&
  $       232$&$    1\,026$&$ 8.24\,\%$
\\
b)&$0.9659$&$0.2588$&$0.9330$&$1.0000$&
  $       574$&$       754$&
  $       539$&$       738$&$ 4.06\,\%$
\\
c)&$0.7071$&$0.7071$&$0.5000$&$1.0000$&
  $       942$&$       628$&
  $       926$&$       613$&$ 1.94\,\%$
\\
d)&$0.2588$&$0.9659$&$0.0670$&$1.0000$&
  $    1\,338$&$    1\,095$&
  $    1\,337$&$    1\,083$&$ 0.70\,\%$
\\
e)&$0.0523$&$0.9986$&$0.0027$&$1.0000$&
  $    2\,120$&$    1\,579$&
  $    2\,197$&$    1\,562$&$ 2.98\,\%$
\\
f)&$0.1000$&$0.1000$&$0.0100$&$0.1414$&
  $    2\,098$&$   10\,715$&
  $    2\,071$&$   10\,446$&$ 2.48\,\%$
\\
g)&$0.3000$&$0.3000$&$0.0900$&$0.4243$&
  $    1\,657$&$    2\,593$&
  $    1\,647$&$    2\,577$&$ 0.61\,\%$
\\
\hline
\end{tabular}
\end{table}

\begin{table}[b!]
\caption{\label{tab-ojademo}
Comparison of 
numerical computed Oja median and transformed $L^1$ median
with
analytically computed time steps
$(\tau u_t,\tau v_t)$ of the PDE \eqref{ojapde22}--\eqref{ojapde22-B}
for the function $(u,v)=(x^2,\sqrt{x^2+y^2})$.
To compute medians $(u^*,v^*)$, the structuring element of radius
$\varrho=0.1$ was sampled using a grid with spatial mesh size
$h=0.001$, generating about $31\,000$ data points.
The time step size for \eqref{ojapde22} was chosen as
$\tau=\varrho^2/24=0.000\,417$.
Medians and time steps are scaled by $10^6$ for more compact 
representation.
}
\bigskip
\centering
\small
\begin{tabular}{r@{~~}l@{~}l@{~~~}l@{~}l@{~~~}r@{~}r@{~~~}r@{~}r@{~~}r@{~~~}r@{~}r@{~}r}
\hline
&
\multicolumn{2}{@{}c@{~~~}}{\rule{0pt}{1.2em}Location}&
\multicolumn{2}{@{}c@{~~~}}{Function val.}&
\multicolumn{2}{@{}c@{}}{PDE\,time\,step}&
\multicolumn{3}{@{}c@{~~~}}{Oja median}&
\multicolumn{3}{@{}c}{Transformed $L^1$ median}\\
&\multicolumn{1}{@{}c@{~}}{$x_0$}        
&\multicolumn{1}{@{}c@{~~~}}{$y_0$}
&\multicolumn{1}{@{}c@{~}}{$u$}          
&\multicolumn{1}{@{}c@{~~~}}{$v$}
&\multicolumn{1}{@{}c@{}}{$\tau u_t$}   
&\multicolumn{1}{@{}c@{}}{$\tau v_t$}
&\multicolumn{1}{@{}c@{~}}{$(u^*\!-\!u)$}
&\multicolumn{1}{@{~}c@{}}{$(v^*\!-\!v)$}
&\multicolumn{1}{@{}c@{~~~}}{rel.}
&\multicolumn{1}{@{}c@{~}}{$(u^*\!-\!u)$}
&\multicolumn{1}{@{}c@{}}{$(v^*\!-\!v)$}
&\multicolumn{1}{@{}c@{}}{rel.}
\\
&&&&
&\multicolumn{1}{@{}c@{}}{$\times10^6$} 
&\multicolumn{1}{@{}c@{}}{$\times10^6$}
&\multicolumn{1}{@{}c@{}}{$\times10^6$} 
&\multicolumn{1}{@{}c@{}}{$\times10^6$} 
&\multicolumn{1}{@{}c@{~~~}}{error}
&\multicolumn{1}{@{}c@{}}{$\times10^6$} 
&\multicolumn{1}{@{}c@{}}{$\times10^6$} 
&\multicolumn{1}{@{}c@{}}{error}
\\ \hline
a)&$0.9986$&$0.0523$&$0.9973$&$1.0000$&
  $    2\,495$&$       417$&
  $    1\,896$&$       538$& $24.16\,\%$ &
  $    2\,138$&$       637$& $16.58\,\%$
\\
b)&$0.9659$&$0.2588$&$0.9330$&$1.0000$&
  $    2\,388$&$       417$&
  $    2\,355$&$       417$& $ 1.36\,\%$ &
  $    2\,355$&$       413$& $ 1.37\,\%$
\\
c)&$0.7071$&$0.7071$&$0.5000$&$1.0000$&
  $    1\,667$&$       417$&
  $    1\,650$&$       404$& $ 1.25\,\%$ &
  $    1\,652$&$       403$& $ 1.19\,\%$
\\
d)&$0.2588$&$0.9659$&$0.0670$&$1.0000$&
  $       945$&$       417$&
  $       943$&$       407$& $ 0.99\,\%$ &
  $       948$&$       409$& $ 0.83\,\%$
\\
e)&$0.0523$&$0.9986$&$0.0027$&$1.0000$&
  $       838$&$       417$&
  $       920$&$       448$& $ 9.37\,\%$ &
  $    1\,056$&$       512$& $25.41\,\%$
\\
f)&$0.1000$&$0.1000$&$0.0100$&$0.1414$&
  $    1\,667$&$    2\,946$&
  $    1\,587$&$    3\,668$& $21.46\,\%$ &
  $    1\,689$&$    3\,751$& $23.79\,\%$
\\
g)&$0.3000$&$0.3000$&$0.0900$&$0.4243$&
  $    1\,667$&$       982$&
  $    1\,654$&$    1\,009$& $ 1.55\,\%$ &
  $    1\,666$&$    1\,003$& $ 1.09\,\%$
\\
\hline
\end{tabular}
\end{table}

%%%%%%%%%%%%%%%%%%%%%%%%%%%%%%%%%%%%%%%%%%%%%%%%%%%%%%%%
\subsubsection{Three-Channel Volume Images, Case $\mathrm{D}\bm{u}=\bm{I}$}
\label{sssec-cr33}
%%%%%%%%%%%%%%%%%%%%%%%%%%%%%%%%%%%%%%%%%%%%%%%%%%%%%%%%

For the three-channel case, we consider the case 
$\mathrm{D}\bm{u}(\bm{0})=\bm{I}$ as treated in 
Lemmas~\ref{lem-ojapde33-I} and \ref{lem-l1apde33-I}.

We discretise sample functions $u(x,y,z)$, $v(x,y,z)$, $w(x,y,z)$ in the
structuring element $B_1(\bm{0})$, i.e.\ the ball of radius $1$
around the origin, with a grid resolution of $0.15$ in the $x$, $y$ and
$z$ directions, which yields $1237$ sample points. The coarser
resolution compared to Section~\ref{sssec-cr22} is a tribute to the
unfavourable computational complexity of our three-channel Oja
median computation.

Again, it suffices for symmetry reasons to consider sample functions 
that test only the weights of the second derivatives of $u$
while leaving $v(x,y,z)\equiv y$ and $w(x,y,z)\equiv z$. 
For these input data we compute the Oja and $L^1$ medians and compare 
these with the theoretical values given by the right-hand side of
\eqref{ojapde33-I-1}--\eqref{ojapde33-I-3} with the time
step size $1/20$. 
The results can be found in 
Table~\ref{tab-l1oja33-coeffcomp}. 
The observed deviations in the range of $2\times10^{-4}$
between the computed Oja medians and PDE time steps are expectable given the
grid resolution.
For the $L^1$ median, larger deviations up to $8\times10^{-4}$ are observed.
However, doing the same computation with a finer sampling grid
-- which is computationally feasible with our implementation of the $L^1$
median but not for the Oja median -- yields values also for the $L^1$ median
that match the time step of \eqref{ojapde33-I-1}--\eqref{ojapde33-I-3} closely,
thereby confirming also the asymptotic equivalence of the Oja and
affine equivariant transformed $L^1$ filter for three-channel volume
images.

\begin{table}[b!]
\caption{\label{tab-l1oja33-coeffcomp}
Validation of the PDE approximation of three-channel Oja median
filtering in the case $\mathrm{D}\bm{u}=\bm{I}$,
see Lemma~\ref{lem-ojapde33-I}, including for comparison also 
the $L^1$ median.
Median values $(u^*,v^*,w^*)$
computed from functions sampled with resolution $0.15$
in a ball-shaped structuring element of radius $1$ are juxtaposed with
the time steps of size $\tau=1/20$ of the corresponding PDE system.
Medians and time steps are scaled by $10^4$ for more compact 
representation.
}
\bigskip
\centering
\small
\begin{tabular}
{l@{~}l@{~}l@{~~~}r@{\;}l@{~~~}r@{~}r@{~}r@{~~~}r@{~}r@{~}r@{~~~}r@{~}r@{~}r}
\hline
\multicolumn{3}{@{}c@{~~~}}{\rule{0pt}{1.2em}Function}&
\multicolumn{2}{@{}c@{~~~}}{Nonzero 2\textsuperscript{nd}}&
\multicolumn{3}{@{}c@{~~~}}{$L^1$ median}&
\multicolumn{3}{@{}c@{~~~}}{Oja median}&
\multicolumn{3}{@{}c@{~~~}}{PDE time step}\\
$u$&$v$&$w$&\multicolumn{2}{@{}c@{~~~}}{derivatives}
&\multicolumn{1}{@{}c@{~}}{$10^4u^*$}
&\multicolumn{1}{@{}c@{~}}{$10^4v^*$}
&\multicolumn{1}{@{}c@{~~~}}{$10^4w^*$}
&\multicolumn{1}{@{}c@{~}}{$10^4u^*$}
&\multicolumn{1}{@{}c@{~}}{$10^4v^*$}
&\multicolumn{1}{@{}c@{~~~}}{$10^4w^*$}
&\multicolumn{1}{@{}c@{~}}{$10^4\tau u_t$}
&\multicolumn{1}{@{}c@{~}}{$10^4\tau v_t$}
&\multicolumn{1}{@{}c}{$10^4\tau w_t$}\\\hline
$x+0.05x^2$&$y$&$z$&$\quad u_{xx}$&$=0.1$
&$    42$&$     0$&$     0$
&$    48$&$     0$&$     0$
&$    50$&$     0$&$     0$
\\
$x+0.1xy$&$y$&$z$&$\quad u_{xy}$&$=0.1$
&$     0$&$-    42$&$     0$
&$     1$&$-    48$&$     0$
&$     0$&$-    50$&$     0$
\\
$x+0.1xz$&$y$&$z$&$\quad u_{xz}$&$=0.1$
&$     0$&$     0$&$-    42$
&$     1$&$     0$&$-    48$
&$     0$&$     0$&$-    50$
\\
$x+0.05y^2$&$y$&$z$&$\quad u_{yy}$&$=0.1$
&$    93$&$     0$&$     0$
&$    99$&$     0$&$     0$
&$   100$&$     0$&$     0$
\\
$x+0.1yz$&$y$&$z$&$\quad u_{yz}$&$=0.1$
&$     0$&$     0$&$     0$
&$     0$&$     0$&$     0$
&$     0$&$     0$&$     0$
\\
$x+0.05z^2$&$y$&$z$&$\quad u_{zz}$&$=0.1$
&$    93$&$     0$&$     0$
&$    99$&$     0$&$     0$
&$   100$&$     0$&$     0$
\\
\hline
\end{tabular}
\end{table}

%%%%%%%%%%%%%%%%%%%%%%%%%%%%%%%%%%%%%%%%%%%%%%%%%%%%%%%%
\subsection{Iterated Median Filters and PDE Evolution}
\label{ssec-evo}
%%%%%%%%%%%%%%%%%%%%%%%%%%%%%%%%%%%%%%%%%%%%%%%%%%%%%%%%

In our final experiment, we return to the filtering of RGB images
and make now the transition to iterated median filtering.
In these experiments, a numerical scheme for the PDE
\eqref{ojapde23}--\eqref{ojapde23-B} is used. We start therefore
with a brief description of this scheme.

%%%%%%%%%%%%%%%%%%%%%%%%%%%%%%%%%%%%%%%%%%%%%%%%%%%%%%%%
\subsubsection{Numerical Approximation of the Affine Equivariant
Median PDE}
\label{ssec-pdenum}
%%%%%%%%%%%%%%%%%%%%%%%%%%%%%%%%%%%%%%%%%%%%%%%%%%%%%%%%

We assume that the three-channel input image $\bm{f}$ for the PDE
\eqref{ojapde23}--\eqref{ojapde23-B} is sampled on an isotropic regular
grid with spatial step size $h$ in the $x$ and $y$ directions, and denote
by $\bm{f}_{i,j}$ the intensity triple at pixel $(i,j)$.
We will compute by an explicit finite-difference scheme a sequence
$(\bm{u}^k)$ of filtered images that approximate the PDE at evolution times
$k\tau$ with time step size $\tau$, with $\bm{u}^0\equiv\bm{f}$.
By $\bm{u}_{i,j}^k=(u_{i,j}^k,v_{i,j}^k,w_{i,j}^k)\transpose$ 
we denote the value of pixel $(i,j)$ in the $k$-th iteration.

In computing $\bm{u}_{i,j}^{k+1}$ from the previous image $\bm{u}^k$
we use the pixels $\bm{u}_{i',j'}^k$ from the $3\times3$ patch 
$\mathcal{P}_{i,j}$ given by $i'\in\{i-1,i,i+1\}$ and $j'\in\{j-1,j,j+1\}$.

The numerical scheme transforms the input data $\bm{u}$ within each patch
$\mathcal{P}_{i,j}$ by an orthogonal transform 
$\bm{u}=(u,v,w)\mapsto\hat{\bm{u}}=(\hat{u},\hat{v},\hat{w})$ of the values
and determining a new orthogonal basis $(\bm{\eta},\bm{\xi})$ in the
$(x,y)$ plane such that $\hat{u}_{\bm{\eta}}$ and $\hat{v}_{\bm{\xi}}$ are
the only nonzero entries of the Jacobian 
$\mathrm{D}_{\bm{\eta}\bm{\xi}}\hat{\bm{u}}$ w.r.t.\ the new coordinates
at pixel $(i,j)$. The PDE to be approximated then reads
\begin{align}
\begin{pmatrix}\hat{u}_t\\\hat{v}_t\\\hat{w}_t\end{pmatrix}
&= 
\underbrace{\begin{pmatrix}\hat{u}_{xx}+\hat{u}_{yy}\\
\hat{v}_{xx}+\hat{v}_{yy}\\\hat{w}_{xx}+\hat{w}_{yy}\end{pmatrix}}_{\bm{z}_1}
+ \underbrace{2\begin{pmatrix}\hat{u}_{\bm{\xi\xi}}\\\hat{v}_{\bm{\eta\eta}}\\0
\end{pmatrix}}_{\bm{z}_2}
- \underbrace{2\begin{pmatrix}
\hat{u}_{\bm{\eta}}\hat{v}_{\bm{\eta\xi}}/\hat{v}_{\bm{\xi}}\\
\hat{v}_{\bm{\xi}}\hat{u}_{\bm{\eta\xi}}/\hat{u}_{\bm{\eta}}\\0
\end{pmatrix}}_{\bm{z}_3}
\;.
\label{ojapde23-euctfmd}
\end{align}
Herein, the first contribution $\bm{z}_1$ is approximated by
central differences even in the original $(x,y)$ coordinates.
The second contribution $\bm{z}_2$ is approximated by central 
differences in the $(\bm{\eta},\bm{\xi})$ basis.
For the third contribution $\bm{z}_3$, such a discretisation would be
unstable and also unable to cope with locations where $\hat{u}_{\bm{\eta}}$
or $\hat{v}_{\bm{\xi}}$ vanishes. Therefore, two stabilisations are used.
First, the weight factor $\hat{v}_{\bm{\eta\xi}}/\hat{v}_{\bm{\xi}}$
is approximated by the regularised expression 
$R_v:=\hat{v}_{\bm{\eta\xi}}\hat{v}_{\bm{\xi}}/(\hat{v}_{\bm{\xi}}^2
+\varepsilon)$ with a fixed numerical regularisation parameter $\varepsilon$
using central differences in the numerator and
a combination of minmod-stabilised 
one-sided differences in the denominator; a similar
expression $R_u$ is used for $\hat{u}_{\bm{\eta\xi}}/\hat{u}_{\bm{\eta}}$.
Second, the factor $u_{\bm{\eta}}$ in the first component is discretised
in an upwind way by choosing a one-sided difference according to the
sign of $R_v$; analogously for $v_{\bm{\xi}}$ in the second component.

For utmost explicitness, the scheme is stated as a detailed algorithm 
in Appendix~\ref{app-pdealgo}.

%%%%%%%%%%%%%%%%%%%%%%%%%%%%%%%%%%%%%%%%%%%%%%%%%%%%%%%%
\subsubsection{Image Filtering Experiment}
\label{ssec-exp-itmed}
%%%%%%%%%%%%%%%%%%%%%%%%%%%%%%%%%%%%%%%%%%%%%%%%%%%%%%%%

\begin{figure}[t!]
\unitlength0.0048\textwidth
\begin{picture}(208,126)
\put(  0.0,94)
{\includegraphics[width=32\unitlength]{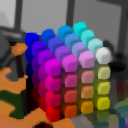}}
\put( 34.0,94)
{\includegraphics[width=32\unitlength]{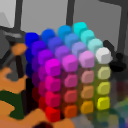}}
\put( 68.0,94)
{\includegraphics[width=32\unitlength]{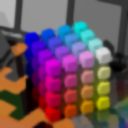}}
\put(  1.1,96)
{\colorbox{white}{\rule{0pt}{.6em}\hbox to.6em{\kern.1em\smash{a}}}}
\put( 35.1,96)
{\colorbox{white}{\rule{0pt}{.6em}\hbox to.6em{\kern.1em\smash{b}}}}
\put( 69.1,96)
{\colorbox{white}{\rule{0pt}{.6em}\hbox to.6em{\kern.1em\smash{c}}}}
\put(  0.0,60)
{\includegraphics[width=32\unitlength]{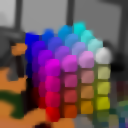}}
\put( 34.0,60)
{\includegraphics[width=32\unitlength]{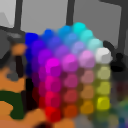}}
\put( 68.0,60)
{\includegraphics[width=32\unitlength]{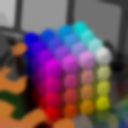}}
\put(  1.1,62)
{\colorbox{white}{\rule{0pt}{.6em}\hbox to.6em{\kern.1em\smash{d}}}}
\put( 35.1,62)
{\colorbox{white}{\rule{0pt}{.6em}\hbox to.6em{\kern.1em\smash{e}}}}
\put( 69.1,62)
{\colorbox{white}{\rule{0pt}{.6em}\hbox to.6em{\kern.1em\smash{f}}}}
\put(  0.0,26)
{\includegraphics[width=32\unitlength]{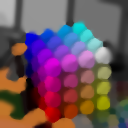}}
\put( 34.0,26)
{\includegraphics[width=32\unitlength]{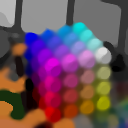}}
\put( 68.0,26)
{\includegraphics[width=32\unitlength]{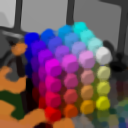}}
\put(  1.1,28)
{\colorbox{white}{\rule{0pt}{.6em}\hbox to.6em{\kern.1em\smash{g}}}}
\put( 35.1,28)
{\colorbox{white}{\rule{0pt}{.6em}\hbox to.6em{\kern.1em\smash{h}}}}
\put( 69.1,28)
{\colorbox{white}{\rule{0pt}{.6em}\hbox to.6em{\kern.1em\smash{i}}}}
\put(108.0,18){\parbox[b]{100\unitlength}{%
\caption{\label{fig-co01-evo}
Comparison of iterated median filtering of the \emph{Colors} test image
from Figure~\ref{fig-co01} (a)
using affine equivariant multivariate medians and
PDE filtering of the same image.
\textbf{Top row, left to right:}
\textbf{(a)} 2D Oja median, $\varrho=2$, with input regularisation, 
$3$ iterations. --
\textbf{(b)} Affine equivariant transformed $L^1$ median, $\varrho=2$,
$3$ iterations. --
\textbf{(c)} PDE \eqref{ojapde23}, explicit finite difference scheme,
$10$ iterations with time step size $\tau=0.05$.~~--~~
\textbf{Middle row, left to right:}
\textbf{(d)} 2D Oja median, $\varrho=2$, with input regularisation, 
$9$ iterations. --
\textbf{(e)} Affine equivariant transformed $L^1$ median, $\varrho=2$,
$9$ iterations. --
\textbf{(f)} PDE \eqref{ojapde23}, explicit finite difference scheme,
$30$ iterations with time step size $\tau=0.05$.~~--~~
\textbf{Bottom}
\textbf{row, left to right:}~~
\textbf{(g)}~~ 2D~~ Oja~~ median,
}}}
\put(0,0){\parbox[b]{208\unitlength}{%
$\varrho=3$, 
with input regularisation, 
$4$ iterations. --
\textbf{(h)} Affine equivariant transformed $L^1$ median, $\varrho=3$,
$4$ iterations. --
\textbf{(i)} PDE \eqref{ojapde23}, explicit finite difference scheme,
$30$ iterations with time step size $\tau=0.05$, with heuristic
anti-diffusion to compensate for numerical dissipation.
}}
\end{picture}
\end{figure}

Using the algorithms described so far in the paper, we compute
iterated median filters of the RGB image from Figure~\ref{fig-co01} (a)
and their supposed PDE evolution counterpart.
Results are shown in Figure~\ref{fig-co01-evo}.

In the first row, the filter parameters of the 2D Oja median filter
and the affine equivariant transformed $L^1$ median filter are adjusted
such as to correspond to an evolution time $T=0.5$ of the PDE system
\eqref{ojapde23}. To this end, a structuring element of radius $\varrho=2$
is used, and $3$ iterations of both median filters carried out, see the
results in Figure~\ref{fig-co01-evo} (a) and (b).
In frame (c), the result from the numerical evaluation of the PDE
is shown. With time step size $\tau=0.05$ and $10$ iterations this 
represents also the evolution time $T=0.5$.

In the second row of Figure~\ref{fig-co01-evo}, frames (d)--(f),
the same filters are
shown for an evolution time of $T=1.5$, i.e.\ $9$ median iterations and 
$30$ time steps, respectively.
Regarding the structure simplification by rounding contours etc.,
the results for the same evolution time are largely comparable,
with the transformed $L^1$ median featuring the sharpest
preservation of edges (with exception of a few structures where the
Oja median result appears sharper). The PDE results are visibly more
blurred. It can be conjectured that this blur is not intrinsic to the
PDE but to the numerical dissipation that usually comes with explicit
finite difference schemes for curvature-based PDEs.

Figure~\ref{fig-co01-evo} (g) and (h) show median filtering results
for the same evolution time $T=1.5$, but this time realised with 
structuring element radius $\varrho=3$ and $4$ iterations. 
The sharpness and 
overall degree of structure simplification is fairly comparable
with frames (d) and (e), which confirms that indeed the progress of the
filtering process scales with $\varrho^2$ as suggested
by the approximation theorem. Some corners are being rounded more
pronouncedly with the larger structuring element 
(see for example the grey tiles in the background).

Based on the assumption that the higher amount of blur in the PDE results
so far is caused by numerical dissipation inherent to the finite-difference
discretisation, one might think of modifying the numerical scheme by 
sharpening terms that compensate for this dissipation, see the
\emph{flux-corrected transport} approach established in \cite{Boris-JCP73}
and used in image processing e.g.\ in \cite{Breuss-ss05} for hyperbolic
PDEs. Of course, a well-founded modification of the numerical scheme from
Section~\ref{ssec-pdenum} would require a detailed analysis of its
approximation errors, which we cannot provide at this point.
However, the PDE under consideration offers a simple way to test this
idea on a heuristic level. To see this, note that the PDE 
\eqref{ojapde23-euctfmd} includes the isotropic (forward) diffusion term
$\bm{z}_1=\hat{\bm{u}}_{xx}+\hat{\bm{u}}_{yy}$. Let us therefore introduce 
inverse linear diffusion $-\lambda\bm{z}_1$ with an anti-diffusion weight 
$\lambda>0$ as a heuristic flux correction. 
This is tantamount to just reducing the weight of $\bm{z}_1$ in
\eqref{ojapde23-euctfmd} from $1$ to $1-\lambda$.
As long as $\lambda\le1$, the net linear diffusion $(1-\lambda)\bm{z}_1$
is forward diffusion, thus not harming the stability of the numerical 
scheme.
In Figure~\ref{fig-co01-evo} (i) we present the result of filtering
the test image with the so-modified scheme with $\lambda=1$, i.e.\
completely suppressing $\bm{z}_1$. The filtered image is fairly similar
to the median filtering results in frames (e) and (h) regarding sharpness
and contour simplification. Regarding those details which are filtered
more pronouncedly in frames (g) and (h) than in (d) and (e), visual
inspection places the modified PDE result (i) closer to (d) and (e),
which is natural given that the approximation of the PDE by median
filtering is asymptotic for $\varrho\to0$.

%%%%%%%%%%%%%%%%%%%%%%%%%%%%%%%%%%%%%%%%%%%%%%%%%%%%%%%%
\section{Summary and Outlook}\label{sec-summ}
%%%%%%%%%%%%%%%%%%%%%%%%%%%%%%%%%%%%%%%%%%%%%%%%%%%%%%%%

In this paper, we have analysed multivariate median filters in
a space-continous setting with emphasis on their asymptotic behaviour.
We have considered $L^1$,
2D and 3D Oja median filters and affine equivariant transformed 
(transformation--retransformation) $L^1$ median filters for 
bivariate planar images, three-channel volume images and three-channel
planar images. In all these cases, we have derived PDEs approximated
by multivariate median filters in the limit for vanishing radius of
the structuring element. We have verified these PDE approximation
statements by numerical experiments. 

An important outcome of our
analysis is that the Oja median filter and the affine equivariant transformed
$L^1$ median filter are asymptotically equivalent in relevant settings.
The iterated Oja median filter, the transformed $L^1$ median filter and the 
corresponding PDE can therefore be considered as different approximations to 
the same kind of ideal \emph{affine equivariant median filter}.

Future work on the theoretical side might be directed at obtaining a more
general form of the approximation statements, such as uniform representations
of PDEs for median filtering of $n$-dimensional data over $m$-dimensional 
domains, including affine equivariant transformed $L^1$ and different
$k$-dimensional Oja medians.
The numerical scheme from Section~\ref{ssec-pdenum}, while working in 
experiments, still lacks a detailed stability analysis. As pointed out
in Section~\ref{ssec-exp-itmed}, it would also be of interest to analyse the
numerical dissipation in this scheme by studying the approximation errors
of its finite difference approximations, in order to formulate a 
theoretically well-founded corrected scheme instead of the heuristic
anti-diffusion approach used in Figure~\ref{fig-co01-evo} (i).

Regarding the implementation of multivariate median filtering, more efficient
algorithms for Oja median filtering should be investigated along the lines
sketched in Section~\ref{ssec-demo-num}. In the light of the above-mentioned
asymptotic equivalence of affine equivariant multivariate medians, however, 
using the transformation--retransformation $L^1$ median appears as a viable
alternative.

Finally, the results of the present paper may open different avenues to a 
broader application of multivariate median filters in image processing.
On one hand, based on a proper theoretical understanding of its effect,
affine equivariant (Oja or transformed $L^1$) median filtering can be studied
in practical image processing applications to find out more about its
practical advantages or disadvantages. On the other hand, although the PDE
approximated by affine equivariant median filters is not quite as simple as
the mean curvature motion equation approximated by univariate median filtering,
its geometric contributions are also explicit enough to raise the expectation
that medians can be used as a building block in nonstandard numerical 
approximations of multivariate curvature-based PDEs.

On a wider horizon, a further topic of interest for future research is whether
also other multivariate median concepts from statistical literature, which
generalise other properties of the univariate median than the distance sum 
minimisation, can be incorporated into the theoretical framework and made
useful for image processing.

%%%%%%%%%%%%%%%%%%%%%%%%%%%%%%%%%%%%%%%%%%%%%%%%%%%%%%%%
\appendix
%%%%%%%%%%%%%%%%%%%%%%%%%%%%%%%%%%%%%%%%%%%%%%%%%%%%%%%%

%%%%%%%%%%%%%%%%%%%%%%%%%%%%%%%%%%%%%%%%%%%%%%%%%%%%%%%%
\section{First Proof of Lemma~\ref{lem-ojapde22-I}}
\label{app-22proof1}
%%%%%%%%%%%%%%%%%%%%%%%%%%%%%%%%%%%%%%%%%%%%%%%%%%%%%%%%

We restate here the proof from \cite{Welk-ssvm15} with slight
modifications and additional details.

The Taylor expansion of $(u,v)$ up to second order around $(0,0)$
reads as
\begin{align}
\begin{pmatrix}u(x,y)\\v(x,y)\end{pmatrix}
&= \begin{pmatrix}x\\y\end{pmatrix} + 
\begin{pmatrix}
\alpha_1 x^2+\beta_1 xy+\delta_1 y^2\\
\alpha_2 x^2+\beta_2 xy+\delta_2 y^2
\end{pmatrix}\;,
\label{ojataylor22}
\end{align}
where the coefficients are given by derivatives of $u$, $v$ at $(x,y)=(0,0)$
as
\begin{align}
\alpha_1&=\tfrac12u_{xx}(0,0)\;,&
\beta_1&=u_{xy}(0,0)\;,&
\delta_1&=\tfrac12u_{yy}(0,0)\;,
\label{abg}\\
\alpha_2&=\tfrac12v_{xx}(0,0)\;,&
\beta_2&=v_{xy}(0,0)\;,&
\delta_2&=\tfrac12v_{yy}(0,0)\;.
\label{dez}
\end{align}

Restating the definition of Oja's simplex median for continuous data sets
with density function $f(u,v)$, we seek the point $M:=(u^*,v^*)$ which
minimises the integral over all areas of triangles $MAB$ with
$A=(u_1,v_1)$ and $B=(u_2,v_2)$ with
$(u_1,v_1)=\bigl(u(x_1,y_1),v(x_1,y_1)\bigr),
\bigl(u_2,v_2)=(u(x_2,y_2),v(x_2,y_2)\bigr)$,
$(x_1,y_1),(x_2,y_2)\in D_{\varrho}(0,0)$, weighted with the
density $f(u_1,v_1)f(u_2,v_2)$.

%%%%%%%%%%%%%%%%%%%%%%%%%%%%%%%%%%%%%%%%%%%%%%%%%%%%%
\begin{figure}[t!]
\centering
\unitlength0.004\textwidth
\begin{picture}(250,61)
\put(5,4){\includegraphics[width=90\unitlength]{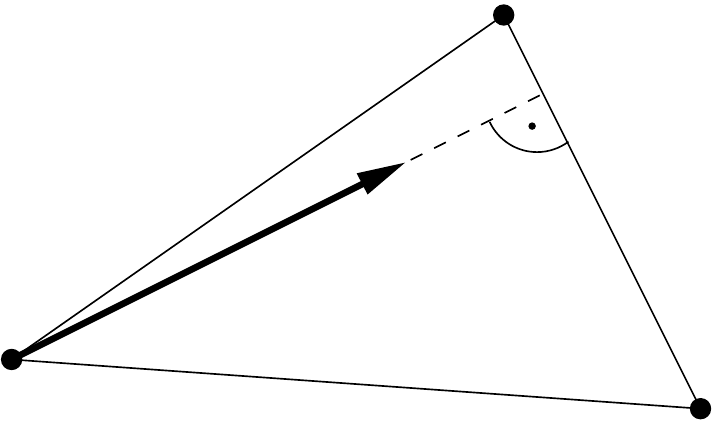}}
\put(0,3){$M$}
\put(94,0){$A$}
\put(71,54){$B$}
\put(36,19){$F_{M;AB}$}
\put(106,0){\parbox[b]{144\unitlength}{%
\caption{
\label{fig-ojatriangle}
Anti-gradient vector $F_{M;AB}$ for the area of a triangle $MAB$ with variable
point $M$. From~\cite{Welk-ssvm15}.
}}}
\end{picture}
\end{figure}
%%%%%%%%%%%%%%%%%%%%%%%%%%%%%%%%%%%%%%%%%%%%%%%%%%%%%%

For each triangle $MAB$, the negative gradient of
its area as function of $M$ is a force vector $\tfrac12 F_{M;AB}$
where $F_{M;AB}$ is
perpendicular to $AB$ with a
length proportional to the length $\lvert AB\rvert$,
see Figure~\ref{fig-ojatriangle}. Assuming that $MAB$ is
positively oriented, this vector equals $(v_2-v_1,-u_2+u_1)$.

Sorting the pairs $(A,B)$ by the orientation angles $\varphi$ of the lines
$F_{M;AB}$, we see that the minimisation condition for the Oja median
can be expressed as
\begin{equation}
\bm{\varPhi}(u^*,v^*) =
\frac14
\int_{0}^{2\pi} 
\begin{pmatrix}\cos\varphi\\\sin\varphi\end{pmatrix}
F(u^*,v^*,\varphi) 
\,\mathrm{d}\varphi = 0\;.
\label{Phiint}
\end{equation}
Here, $F(\varphi)$ is essentially the resultant of all forces $F_{M;AB}$
for which the line $AB$ intersects the ray from $M$ in direction
$(\cos\varphi,\sin\varphi)$ perpendicularly. Each force $F_{M;AB}$
is weighted with the combined density $f(A)f(B)=f(u_1,v_1)f(u_2,v_2)$.

The factor $1/4$ in front of the integral \eqref{Phiint} combines the
factor $1/2$ from the force vector mentioned above with another
factor $1/2$ to compensate
that each triangle $MAB$ enters the integral twice
(once as $MAB$ and once as $MBA$, where the orientation factor
cancels by squaring). Note that in \cite{Welk-ssvm15} the integral
was stated differently, integrating only over the triangles with
positive orientation.

Moreover, $u^*,v^*$ will be of
order $\mathcal{O}(\varrho)$ (in fact, even $\mathcal{O}(\varrho^2)$).
Thus, $(u^*,v^*)$ can be expressed up to higher order terms via
linearisation as
\begin{align}
\begin{pmatrix}u^*\\v^*\end{pmatrix} &=
-\bigl(\mathrm{D}\bm{\varPhi}(0,0)\bigr)^{-1}\bm{\varPhi}(0,0)\;.
\label{Philin}
\end{align}

We therefore turn now to derive an expression for $F(0,0,\varphi)$.
Considering first $\varphi=0$, this means that all point pairs
$(A,B)$ in the $u$-$v$ right half-plane with $u_1=u_2$ contribute to
$F(0,0,0)$,
yielding
\begin{align}
F(0,0,0) &=
\int_{0}^{+\infty}
\int_{-\infty}^{+\infty}
\int_{-\infty}^{+\infty} f(u,v_1) f(u,v_2) 
(v_2-v_1)^2
\,\mathrm{d}v_2
\,\mathrm{d}v_1
\,\mathrm{d}u \;.
\label{F0}
\end{align}
Note that the factor $(v_2-v_1)$ occurs squared in the integrand. One factor
$\lvert v_2-v_1\rvert$ originates from the length of the triangle baseline $AB$.
The second factor $\lvert v_2-v_1\rvert$ 
results from the fact that we have organised
in \eqref{Phiint}, \eqref{F0} an integration over point pairs $(A,B)$
in the plane using a polar coordinate system similar to a Radon transform;
$v_2-v_1$ arises as the Jacobian of the corresponding coordinate transform
from Cartesian to Radon coordinates.
The derivatives of $F(u^*,v^*,0)$ with regard to the coordinates
of $M$ are
\begin{align}
F_{u^*}(0,0,0) &= 
-\int_{-\infty}^{+\infty}
\int_{-\infty}^{+\infty}
f(0,v_1) f(0,v_2) 
(v_2-v_1)^2
\,\mathrm{d}v_2
\,\mathrm{d}v_1 \;,
\label{F0u}
\\
F_{v^*}(0,0,0)&=0\;.
\label{F0v}
\end{align}
Forces $F(0,0,\varphi)$ and their derivatives
for arbitrary angles $\varphi$ will later be obtained
from \eqref{F0}, \eqref{F0u}, \eqref{F0v}
by rotating the $u$, $v$ coordinates accordingly.

For the median of the values $(u,v)$ within 
a $\varrho$-neighbourhood of
$(x,y)=(0,0)$, the density $f(u,v)$ is zero outside of an
$\mathcal{O}(\varrho)$-neighbourhood of $(0,0)$,
allowing to limit the indefinite integrals from \eqref{F0} to the intervals 
$u\in[0,\bar{u}]$, $v_1,v_2\in\bigl[\ubar{v}(u),\bar{v}(u)\bigr]$
such that
\begin{align}
F(0,0,0) &= 
\int_{0}^{\bar{u}}
\int_{\ubar{v}(u)}^{\bar{v}(u)}
\int_{\ubar{v}(u)}^{\bar{v}(u)}
f(u,v_1) f(u,v_2) 
(v_2-v_1)^2
\,\mathrm{d}v_2
\,\mathrm{d}v_1
\,\mathrm{d}u \;.
\label{F0bound}
\end{align}
Expanding $(v_2-v_1)^2=v_2^2-2v_1v_2+v_1^2$, \eqref{F0bound} can be
further decomposed into
\begin{align}
F(0,0,0)
&= \int_{0}^{\bar{u}} \bigl(2J_2(u)J_0(u)-2J_1(u)^2\bigr) \,\mathrm{d}u
\label{F0boundJ}
\end{align}
where
\begin{align}
J_k(u)&:=\int_{\ubar{v}(u)}^{\bar{v}(u)}f(u,v)\,v^k\,\mathrm{d}v
\label{Jk}
\end{align}
for $k=0,1,2$.
Similarly, \eqref{F0u} yields
\begin{align}
F_{u^*}(0,0,0) &= -\bigl(2J_2(0)J_0(0)-2J_1(0)^2\bigr) \;.
\label{F0uJ}
\end{align}

To compute $F(0,0,0)$ and $F_{u^*}(0,0,0)$, we write them as functions
of the coefficients of \eqref{ojataylor22}, i.e.\
$F(0,0,0)=:G(\alpha_1,\beta_1,\delta_1,\alpha_2,\beta_2,\delta_2)$ and
$F_{u^*}(0,0,0)=:H(\alpha_1,\beta_1,\delta_1,\alpha_2,\beta_2,\delta_2)$.

We will linearise $G$ and $H$
around the point $(\alpha_1,\beta_1,\delta_1,\alpha_2,\beta_2,\delta_2)=\bm{0}$
that represents the linear function $\bigl(u(x,y),v(x,y)\bigr)=(x,y)$.
To justify this linearisation, remember that we are interested in the limit
$\varrho\to0$, such that only the terms of lowest order in $\varrho$ matter.
Cross-effects between the different coefficients occur only in higher order
terms.
Denoting from now on by $\doteq$ equality up to higher order terms, we
have therefore
\begin{align}
G\doteq G^0
&+G^0_{\alpha_1}\alpha_1+G^0_{\beta_1}\beta_1+G^0_{\delta_1}\delta_1
+G^0_{\alpha_2}\alpha_2+G^0_{\beta_2}\beta_2+G^0_{\delta_2}\delta_2
\label{Glin}
\;,\\
H\doteq H^0
&+H^0_{\alpha_1}\alpha_1+H^0_{\beta_1}\beta_1+H^0_{\delta_1}\delta_1
+H^0_{\alpha_2}\alpha_2+H^0_{\beta_2}\beta_2+H^0_{\delta_2}\delta_2
\label{Hlin}
\end{align}
where $G^0$, $G^0_{\alpha_1}$ etc.\ are short for $G(\bm{0})$,
$G_{\alpha_1}(\bm{0})$ etc.

To compute $G^0$ and $H^0$, we insert into \eqref{F0} the bounds
$\bar{u}=\varrho$,
$\bar{v}(u)=\sqrt{\varrho^2-u^2}$, $\ubar{v}(u)=-\bar{v}(u)$.
The density becomes constant within the region defined by $\bar{u}$,
$\ubar{v}(u)$ and $\bar{v}(u)$, with $f(u,v)=1$.
Thus we have 
\begin{align}
J_2(u)&=\tfrac23(\varrho^2-u^2)^{3/2}\;,\label{J2uG0}\\
J_1(u)&=0\;, \\
J_0(u)&=2(\varrho^2-u^2)^{1/2}\label{J0uG0}
\end{align}
and via \eqref{F0boundJ} and \eqref{F0uJ} finally
\begin{align}
G^0 &= 
\tfrac{64}{45}\varrho^5\;, &
H^0 &=
-\tfrac83\varrho^4\;.
\label{GH0}
\end{align}
For $G^0_{\alpha_1}$ and $H^0_{\alpha_1}$, one has to vary $\alpha_1$
to obtain the bounds $\bar{u}=\varrho+\alpha_1 \varrho^2$,
$\bar{v}(u)=\sqrt{\varrho^2-u^2+2\alpha_1 u^3}$,
$\ubar{v}(u)\doteq-\bar{v}(u)$.
The density $f(u,v)$ within the
so-given bounds is $1/\mathrm{det}(\mathrm{D}\bm{u})$ at the location
$(x(u,v),y(u,v))$ with $x=u-\alpha_1 u^2+\mathcal{O}(\varrho^3)$,
$y=v$, i.e.\ $f(u,v)=1-2\alpha_1 u+\mathcal{O}(\varrho^2)$. Thus we have
\begin{align}
J_2(u) &\doteq \tfrac23(1-2\alpha_1 u)(\varrho^2-u^2+2\alpha_1 u^3)^{3/2}\;,
\label{J2udG0}
\\
J_1(u) &= 0\;, \\
J_0(u) &\doteq 2(1-2\alpha_1 u)(\varrho^2-u^2+2\alpha_1 u^3)^{1/2}
\label{J0udG0}
\end{align}
and therefore by \eqref{F0boundJ}, \eqref{F0uJ}
\begin{align}
G^0_{\alpha_1} &\doteq 
\frac{\mathrm{d}}{\mathrm{d}\alpha_1}
\int_0^{\bar{u}} \frac83(1-2\alpha_1 u)^2(\varrho^2-u^2+2\alpha_1 u^3)^2
\,\mathrm{d}u
\,\bigg|_{\alpha_1=0}
= -\frac8{9}\varrho^6\;,
\label{G0a}
\\
H^0_{\alpha_1} 
&\doteq
-\frac{\mathrm{d}}{\mathrm{d}\alpha_1}
\,\frac83\varrho^4
\,\bigg|_{\alpha_1=0}
= 0\;.
\label{H0a}
\end{align}
Proceeding analogously for the other coefficients, we find
the values of $\bar{u}$, $\bar{v}$, $\ubar{v}$ and $f(u,v)$ and the
resulting coefficients compiled
in Table~\ref{tab-22anavals}.
\begin{table}[b!]
\caption{\label{tab-22anavals}
Integration bounds, densities, integrals $J_k(u)$ and resulting coefficients
$G^0_{\omega}$, $H^0_{\omega}$ of the expansions
\eqref{Glin}, \eqref{Hlin} for 
$\omega\in\{\alpha_1,\beta_1,\delta_1,\alpha_2,\beta_2,\delta_2\}$.
$J_1(u)$ and $H^0_\omega$ are always zero and therefore omitted.
All values are approximated up to higher order terms.}
\medskip
\centering
\small
\begin{tabular}{c@{~~}l@{~~}l@{~~}l@{~~}l@{~~}l@{~~}c}
\hline
\rule{0pt}{1.2em}$\omega$&$\bar{u}$&$\bar{v}(u)$, $\ubar{v}(u)$&$f(u,v)$&
$J_2(u)$&
$J_0(u)$&
$G^0_\omega$
\\
\hline
\rule{0pt}{1.2em}%
$\alpha_1$&$\varrho\!+\!\alpha_1\varrho^2$
&$\pm\sqrt{\varrho^2-u^2+2\alpha_1u^3}$&
  $1\!-\!2\alpha_1u$&
  $\frac23(1-2\alpha_1u)\bar{v}(u)^3$&
  $2(1-2\alpha_1u)\bar{v}(u)$&
  $-\frac89\varrho^6$
  \\
\rule{0pt}{1.1em}%
$\beta_1$&$\varrho$&$\pm\sqrt{\varrho^2\!-\!u^2\!+\!\beta_1^2u^4}\!+\!\beta_1u^2$&
  $1-\beta_1v$&
  $\frac23(\varrho^2\!-\!u^2\!+\!\beta_1^2u^4)^{3/2}$&
  $2(\varrho^2\!-\!u^2\!+\!\beta_1^2u^4)^{1/2}$&
  $0$
  \\
\rule{0pt}{1.1em}%
$\delta_1$&$\varrho$&$\pm\sqrt{(\varrho^2-u^2)(1+2\delta_1u)}$&
  $1$&
  $\frac23\bar{v}(u)^3$&
  $2\bar{v}(u)$&
  $\frac{16}9\varrho^6$
  \\
\rule{0pt}{1.1em}%
$\alpha_2$&$\varrho$&$\pm\sqrt{\varrho^2\!-\!u^2\!+\!\alpha_2u^4}\!+\!\alpha_2u^2$&
  $1$&
  $\frac23(\varrho^2\!-\!u^2\!+\!\alpha_2^2u^4)^{3/2}$&
  $2(\varrho^2\!-\!u^2\!+\!\alpha_2^2u^4)^{1/2}$&
  $0$
  \\
\rule{0pt}{1.1em}%
$\beta_2$&$\varrho$&$\pm\sqrt{(\varrho^2-u^2)(1+2\beta_2u)}$&
  $1-\beta_2u$&
  $\frac23(1-\beta_2u)\bar{v}^3$&
  $2(1-\beta_2u)\bar{v}$&
  $\frac89\varrho^6$
  \\
\rule{0pt}{1.1em}%
$\delta_2$&$\varrho$&$\pm\sqrt{\varrho^2\!-\!u^2}\!+\!\delta_2(\varrho^2\!-\!u^2)$&
  $1-2\delta_2v$&
  $\frac23(\varrho^2-u^2)^{3/2}$&
  $2(\varrho^2-u^2)^{1/2}$&
  $0$
\\
\hline
\end{tabular}
\end{table}

Inserting the values from Table~\ref{tab-22anavals}
into \eqref{Glin} and \eqref{Hlin},
we have
\begin{align} 
F(0,0,0) &= \tfrac{64}{45}\varrho^5
+\tfrac{8}{9}\varrho^6(-\alpha_1+2\delta_1+\beta_2)\;,
\label{F-expans}
\\
F_{u^*}(0,0,0) &= \tfrac83\varrho^4\;,
\label{dF-expans}
\end{align}
and by orthogonal transform in the $u$-$v$ plane
\begin{align}
F(0,0,\varphi) &= \tfrac{64}{45}\varrho^5
+\tfrac{8}{9}\varrho^6\Bigl(
-(\alpha_1\cos\varphi+\alpha_2\sin\varphi)\cos^2\varphi
\notag\\*&\qquad{}
-(\beta_1\cos\varphi+\beta_2\sin\varphi)\cos\varphi\sin\varphi
%\notag\\*&\qquad{}
-(\delta_1\cos\varphi+\delta_2\sin\varphi)\sin^2\varphi
\notag\\*&\qquad{}
+2(\alpha_1\cos\varphi+\alpha_2\sin\varphi)\sin^2\varphi
%\notag\\*&\qquad{}
-2(\beta_1\cos\varphi+\beta_2\sin\varphi)\cos\varphi\sin\varphi
\notag\\*&\qquad{}
+2(\delta_1\cos\varphi+\delta_2\sin\varphi)\cos^2\varphi
%\notag\\*&\qquad{}
-2(-\alpha_1\sin\varphi+\alpha_2\cos\varphi)\cos\varphi\sin\varphi
\notag\\*&\qquad{}
+(-\beta_1\sin\varphi+\beta_2\cos\varphi)(\cos^2\varphi-\sin^2\varphi)
\notag\\*&\qquad{}
+2(-\delta_1\sin\varphi+\delta_2\cos\varphi)\cos\varphi\sin\varphi
\Bigr)
\;,\\
F_{u^*}(0,0,\varphi) &= \tfrac83\varrho^4\cos\varphi
\;, \\
F_{v^*}(0,0,\varphi) &= \tfrac83\varrho^4\sin\varphi
\;.
\label{Fv00phi}
\end{align}
Integration \eqref{Phiint} then yields
\begin{align}
\bm{\varPhi}(0,0) &= \frac{\pi}{18}\varrho^6
\begin{pmatrix}
\alpha_1+3\delta_1-\beta_2\\
3\alpha_2-\delta_2-\beta_1
\end{pmatrix}\;,
\label{22Phi}\\
\mathrm{D}\bm{\varPhi}(0,0) &= 
-\frac23\pi\varrho^4\begin{pmatrix}1&0\\0&1\end{pmatrix}
\label{22DPhi}
\end{align}
and via \eqref{Philin} eventually
\begin{align}
\begin{pmatrix}u^*\\v^*\end{pmatrix} &=
\frac{\varrho^2}{12}
\begin{pmatrix}
\alpha_1+3\delta_1-\beta_2\\
3\alpha_2+\delta_2-\beta_1
\end{pmatrix}\;.
\label{ojapde-unitjaco-abgdez}
\end{align}
Inserting \eqref{abg}, \eqref{dez} into \eqref{ojapde-unitjaco-abgdez},
we see that for $\mathrm{D}\bm{u}=\mathrm{diag}(1,1)$ the Oja median
filtering step approximates an explicit time step of size
$\tau=\varrho^2/24$ of the PDE system 
\eqref{ojapde22-I1}--\eqref{ojapde22-I2}.
\hfill$\Box$

%%%%%%%%%%%%%%%%%%%%%%%%%%%%%%%%%%%%%%%%%%%%%%%%%%%%%%%%
\section{Second Proof of Lemma~\ref{lem-ojapde22-I}}
\label{app-22proof2}
%%%%%%%%%%%%%%%%%%%%%%%%%%%%%%%%%%%%%%%%%%%%%%%%%%%%%%%%

As in the previous proof, we express the minimisation condition as
$\bm{\varPhi}(u^*,v^*)=0$ where $\bm{\varPhi}(u^*,v^*)$ expresses
an anti-gradient of the objective function of the Oja median
(the sum of triangle areas) at the median candidate point $M=(u^*,v^*)$.

Let $M=(u^*,v^*)$ with $u^*,v^*=\mathcal{O}(\varrho^2)$.
For two points $A=(u_1,v_1)$, $B=(u_2,v_2)$ in the $u$-$v$ plane, the
force exercised on $M$ by the negative gradient of the area of triangle
$MAB$ is $\frac12F_{M;AB}$ where
\begin{equation}
F_{M;AB} = \begin{pmatrix}v_2-v_1\\u_1-u_2\end{pmatrix}
= \begin{pmatrix}v_2\\-u_2\end{pmatrix} 
- \begin{pmatrix}v_1\\-u_1\end{pmatrix}
+\mathcal{O}(\varrho^2)
\label{force22MAB}
\end{equation}
provided the triangle $MAB$ is positively oriented. If $MAB$ is negatively
oriented, the sign of $F_{M;AB}$ changes.

Let now $A$ and $B$ given by
\begin{align}
A &= (u(x_1,y_1),v(x_1,y_1))\;,\\
B &= (u(x_2,y_2),v(x_2,y_2))
\end{align}
with $(x_1,y_1),(x_2,y_2)\in D_\varrho(\bm{0})$.

Aggregating the forces $F_{M;AB}$ directly by integration over
$x_1$, $y_1$, $x_2$, $y_2$, and denoting again by $\doteq$ equality up
to higher order terms, one sees that the resulting force can be
stated as
\begin{align}
\bm{\varPhi}&:=
\frac12
\iint_{D_\varrho}\iint_{D_\varrho}F_{M;AB} 
\,\mathrm{d}x_2\,\mathrm{d}y_2\,\mathrm{d}x_1\,\mathrm{d}y_1
\notag\\*
&\doteq \frac12\iint_{D_\varrho}\iint_{\mathcal{A}_+(x_1,y_1)}
\begin{pmatrix}v_2-v_1\\u_1-u_2\end{pmatrix}
\,\mathrm{d}x_2\,\mathrm{d}y_2\,\mathrm{d}x_1\,\mathrm{d}y_1
\notag\\*&\quad{}
-\frac12\iint_{D_\varrho}\iint_{\mathcal{A}_-(x_1,y_1)}
\begin{pmatrix}v_2-v_1\\u_1-u_2\end{pmatrix}
\,\mathrm{d}x_2\,\mathrm{d}y_2\,\mathrm{d}x_1\,\mathrm{d}y_1
\notag\\*
&= \frac12\iint_{D_\varrho}\iint_{\mathcal{A}_+(x_1,y_1)}
\begin{pmatrix}v_2\\-u_2\end{pmatrix}
\,\mathrm{d}x_2\,\mathrm{d}y_2\,\mathrm{d}x_1\,\mathrm{d}y_1
\notag\\*&\quad{}
- \frac12\iint_{D_\varrho}\iint_{\mathcal{A}_+(x_1,y_1)}
\begin{pmatrix}v_1\\-u_1\end{pmatrix}
\,\mathrm{d}x_2\,\mathrm{d}y_2\,\mathrm{d}x_1\,\mathrm{d}y_1
\notag\\*&\quad{}
- \frac12\iint_{D_\varrho}\iint_{\mathcal{A}_-(x_1,y_1)}
\begin{pmatrix}v_2\\-u_2\end{pmatrix}
\,\mathrm{d}x_2\,\mathrm{d}y_2\,\mathrm{d}x_1\,\mathrm{d}y_1
\notag\\*&\quad{}
+ \frac12\iint_{D_\varrho}\iint_{\mathcal{A}_-(x_1,y_1)}
\begin{pmatrix}v_1\\-u_1\end{pmatrix}
\,\mathrm{d}x_2\,\mathrm{d}y_2\,\mathrm{d}x_1\,\mathrm{d}y_1
\;.
\end{align}
Here, $\mathcal{A}_{\pm}(x_1,y_1)$ denote the regions for 
$(x_2,y_2)\in D_\varrho$
for which $MAB$ is positively or negatively oriented, respectively.
Since $B\in\mathcal{A}_+(x_1,y_1)$ if and only if $A\in\mathcal{A}_-(x_2,y_2)$
and vice versa, we can switch the roles of $(x_1,y_1)$ and $(x_2,y_2)$
in two of the integrals to combine the previous expressions into
\begin{align}
\bm{\varPhi}
&\doteq -\iint_{D_\varrho}\iint_{\mathcal{A}_+(x_1,y_1)}
\begin{pmatrix}v_1\\-u_1\end{pmatrix}
\,\mathrm{d}x_2\,\mathrm{d}y_2\,\mathrm{d}x_1\,\mathrm{d}y_1
\notag\\*&\quad{}
+ \iint_{D_\varrho}\iint_{\mathcal{A}_-(x_1,y_1)}
\begin{pmatrix}v_1\\-u_1\end{pmatrix}
\,\mathrm{d}x_2\,\mathrm{d}y_2\,\mathrm{d}x_1\,\mathrm{d}y_1
\notag\\*
&=-\iint_{D_\varrho}
\begin{pmatrix}v_1\\-u_1\end{pmatrix}
\left(\lvert\mathcal{A}_+(x_1,y_1)\rvert-\lvert\mathcal{A}_-(x_1,y_1)\rvert
\right)
\,\mathrm{d}x_1\,\mathrm{d}y_1
\label{22areadiffint}
\end{align}
where $\lvert\mathcal{A}_{\pm}(x_1,y_1)\rvert$ denote the areas of the
respective regions.

It remains to determine
the area differences
\begin{equation}
\Delta\mathcal{A}(x_1,y_1):=
\lvert\mathcal{A}_+(x_1,y_1)\rvert-\lvert\mathcal{A}_-(x_1,y_1)\rvert
\label{22bdeltaa}
\end{equation}
for all $(x_1,y_1)\in D_\varrho$.

To this end, we 
use again the Taylor expansion \eqref{ojataylor22}.
For $u^*=v^*=0$ and
$\alpha_1=\beta_1=\delta_1=\alpha_2=\beta_1=\delta_2=0$ we have
$u(x,y)=x$, $v(x,y)=y$, and 
$\mathcal{A}_+(x_1,y_1)$ and $\mathcal{A}_-(x_1,y_1)$ are
half-discs separated by the diameter of $D_\varrho$ through $M$ and $A$.
Generally, the two regions are separated by the curve 
$(u_1-u^*)(v_2-v^*)-(u_2-u^*)(v_1-v^*)=0$, 
which after inserting \eqref{ojataylor22} and dropping
higher order terms becomes
\begin{align}
0&=x_1y_2-x_2y_1
-u^*(y_2-y_1)
-v^*(x_1-x_2)
\notag\\*&\quad{}
+\alpha_1(x_1^2y_2-x_2^2y_1)
+\beta_1(x_1y_1y_2-x_2y_1y_2)
+\delta_1(y_1^2y_2-y_1y_2^2)
\notag\\*&\quad{}
+\alpha_2(x_1x_2^2-x_1^2x_2)
+\beta_2(x_1x_2y_2-x_1x_2y_1)
+\delta_2(x_1y_2^2-x_2y_1^2)
\;.
\label{22bcondline}
\end{align}
To determine the deviation of this line from the bisecting diameter
mentioned above, we introduce coordinates aligned to the line $MA$
by $x_1=r\cos\varphi$, $y_1=r\sin\varphi$ and 
$x_2=s\cos\varphi-t\sin\varphi$, $y_2=s\sin\varphi+t\cos\varphi$.
We can then write \eqref{22bcondline} up to higher order terms as
\begin{align}
t=t(s) &\doteq s^2(
\alpha_1\cos^2\varphi\sin\varphi
+\beta_1\cos\varphi\sin^2\varphi
+\delta_1\sin^3\varphi
\notag\\*&\qquad{}
-\alpha_2\cos^3\varphi
-\beta_2\cos^2\varphi\sin\varphi
-\delta_2\cos\varphi\sin^2\varphi)
\notag\\&\quad{}
+s\Bigl(
\frac{u^*}r\sin\varphi-\frac{v^*}r\cos\varphi
\notag\\*&\qquad{}
-r\alpha_1\cos^2\varphi\sin\varphi
-r\beta_1\cos\varphi\sin^2\varphi
-r\delta_1\sin^3\varphi
\notag\\*&\qquad{}
+r\alpha_2\cos^3\varphi
+r\beta_2\cos^2\varphi\sin\varphi
+r\delta_2\cos\varphi\sin^2\varphi\Bigr)
\notag\\*&\quad{}
-\frac{u^*}r\sin\varphi+\frac{v^*}r\cos\varphi\;.
\end{align}
Up to higher order terms, 
the area difference $\Delta\mathcal{A}(x_1,y_1)$ is minus double the
area between this line and the $s$-axis in the interval 
$s\in[-\varrho,\varrho]$, i.e.\
\begin{align}
\Delta\mathcal{A}(x_1,y_1) 
&\doteq -2\int_{-\varrho}^\varrho t(s)\,\mathrm{d}s
\notag\\*
&\doteq
4\varrho(u^*\sin\varphi-v^*\cos\varphi)
\notag\\*&\quad{}
-\frac43\varrho^3(
\alpha_1\cos^2\varphi\sin\varphi
+\beta_1\cos\varphi\sin^2\varphi
+\delta_1\sin^3\varphi
\notag\\*&\qquad{}
-\alpha_2\cos^3\varphi
-\beta_2\cos^2\varphi\sin\varphi
-\delta_2\cos\varphi\sin^2\varphi)
\;.
\label{22areadiff}
\end{align}
Inserting \eqref{ojataylor22}, \eqref{22bdeltaa} and \eqref{22areadiff}
into \eqref{22areadiffint} yields
\begin{align}
\bm{\varPhi}
&\doteq-\iint_{D_\varrho}
\begin{pmatrix}-y-\alpha_2x^2-\beta_2xy-\delta^2y^2\\
x+\alpha_1x^2+\beta_1xy+\delta_1y^2\end{pmatrix}
\Delta\mathcal{A}(x,y)
\,\mathrm{d}x\,\mathrm{d}y
\notag\\*
&\doteq
\frac12\int_0^\varrho\int_0^{2\pi}r^2
\begin{pmatrix}\sin\varphi\\-\cos\varphi\end{pmatrix}
\Delta\mathcal{A}(x,y)
\,\mathrm{d}\varphi\,\mathrm{d}r
\notag\\
&=
\int_0^\varrho r^2\,\mathrm{d}r
\Biggl(2\varrho
\biggl(u^*
\int_0^{2\pi}\begin{pmatrix}\sin^2\varphi\\-\cos\varphi\sin\varphi
\end{pmatrix}\,\mathrm{d}\varphi
%\notag\\*&\qquad\qquad\qquad\qquad{}
-v^*
\int_0^{2\pi}\begin{pmatrix}\cos\varphi\sin\varphi\\-\cos^2\varphi
\end{pmatrix}\,\mathrm{d}\varphi\biggr)
\notag\\&\qquad\qquad\qquad{}
-\frac23\varrho^3
\biggl(
\alpha_1\int_0^{2\pi}\begin{pmatrix}
\cos^2\varphi\sin^2\varphi\\-\cos^3\varphi\sin\varphi
\end{pmatrix}\,\mathrm{d}\varphi
%\notag\\&\qquad\qquad\qquad\qquad{}
+\beta_1\int_0^{2\pi}\begin{pmatrix}
\cos\varphi\sin^3\varphi\\-\cos^2\varphi\sin^2\varphi
\end{pmatrix}\,\mathrm{d}\varphi
\notag\\&\qquad\qquad\qquad\qquad{}
+\delta_1\int_0^{2\pi}\begin{pmatrix}
\sin^4\varphi\\-\cos\varphi\sin^3\varphi
\end{pmatrix}\,\mathrm{d}\varphi
%\notag\\&\qquad\qquad\qquad\qquad{}
-\alpha_2\int_0^{2\pi}\begin{pmatrix}
\cos^3\varphi\sin\varphi\\-\cos^4\varphi
\end{pmatrix}\,\mathrm{d}\varphi
\notag\\&\qquad\qquad\qquad\qquad{}
-\beta_2\int_0^{2\pi}\begin{pmatrix}
\cos^2\varphi\sin^2\varphi\\-\cos^3\varphi\sin\varphi
\end{pmatrix}\,\mathrm{d}\varphi
%\notag\\&\qquad\qquad\qquad\qquad{}
-\delta_2\int_0^{2\pi}\begin{pmatrix}
\cos\varphi\sin^3\varphi\\-\cos^2\varphi\sin^2\varphi
\end{pmatrix}\,\mathrm{d}\varphi
\biggr)\Biggr)
\notag\\
&=\frac23\pi\varrho^4\begin{pmatrix}u^*\\v^*\end{pmatrix}
-\frac1{18}\pi\varrho^6\begin{pmatrix}\alpha_1+3\delta_1-\beta_2\\
3\alpha_2+\delta_2-\beta_1\end{pmatrix} \;,
\end{align}
which reproduces the result \eqref{22Phi}, \eqref{22DPhi} from
the first proof such that one can again infer 
\eqref{ojapde-unitjaco-abgdez} and thereby 
\eqref{ojapde22-I1}, \eqref{ojapde22-I2}.
\hfill$\Box$

%%%%%%%%%%%%%%%%%%%%%%%%%%%%%%%%%%%%%%%%%%%%%%%%%%%%%%%%
\section{Proof of Lemma~\ref{lem-ojapde33-I}}
\label{app-33proof}
%%%%%%%%%%%%%%%%%%%%%%%%%%%%%%%%%%%%%%%%%%%%%%%%%%%%%%%%

We start with the Taylor expansion of $\bm{u}(x,y,z)$ around
$(0,0,0)$ up to second order given as
\begin{align}
u(x,y,z) = x 
&+ \alpha_1 x^2 + \beta_1 xy + \gamma_1 xz
+ \delta_1 y^2 + \varepsilon_1 yz + \zeta_1 z^2 \;,
\label{33tayloru}
\\
v(x,y,z) = y 
&+ \alpha_2 x^2 + \beta_2 xy + \gamma_2 xz
+ \delta_2 y^2 + \varepsilon_2 yz + \zeta_2 z^2 \;,
\label{33taylorv}
\\
w(x,y,z) = z 
&+ \alpha_3 x^2 + \beta_3 xy + \gamma_3 xz
+ \delta_3 y^2 + \varepsilon_3 yz + \zeta_3 z^2 
\label{33taylorw}
\end{align}
where $\alpha_1=\frac12u_{xx}$, $\beta_1=u_{xy}$ etc.

Similarly as in Appendix~\ref{app-22proof1} for the bivariate
planar case, we seek the point $M:=(u^*,v^*,w^*)$ that minimises
the integral over all volumes of tetrahedra $MABC$ with
$A=(u_1,v_1,w_1)$, $B=(u_2,v_2,w_2)$, $C=(u_3,v_3,w_3)$ where
$(u_i,v_i,w_i)=(u(x_i,y_i,z_i),v(x_i,y_i,z_i),w(x_i,y_i,z_i))$
with $(x_i,y_i,z_i)\in B_\varrho(0,0,0)$, weighted with the
density $f(u_1,v_1,w_1)f(u_2,v_2,w_2)f(u_3,v_3,w_3)$.

For each tetrahedron $MABC$, the negative gradient of its volume
as a function of $M$ is a force vector $\frac16F_{M;ABC}$ perpendicular
to the plane $ABC$ with a length proportional to the area
$\lvert ABC\rvert$ of the triangle $ABC$. Assuming positive
orientation of that triangle, $F_{M;ABC}$ equals the vector
(cross) product
$-(u_2-u_1,v_2-v_1,w_2-w_1)\times(u_3-u_1,v_3-v_1,w_3-w_1)$.

Organising the integration over point triples $(A,B,C)$ again
by orientations of the force vectors, we consider
the resultant $F(u^*,v^*,w^*,\bm{p})$ of all forces in direction
of any given unit vector $\bm{p}\in\mathrm{S}^2$. Linearising for
$(u^*,v^*,w^*)$ around $\bm{0}$,
\begin{align}
\begin{pmatrix}u^*\\v^*\\w^*\end{pmatrix}
&= - \bigl(\mathrm{D}\bm{\varPhi}(0,0,0)\bigr)^{-1}\bm{\varPhi}(0,0,0)
\label{Philin33}
\end{align}
(compare \eqref{Philin}), 
and considering first the
case where $\bm{p}=\mathbf{e}_1=(1,0,0)$ is the first coordinate
vector, we can state the analogue of \eqref{F0} as
\begin{align}
F(0,0,0,\mathbf{e}_1) 
&=
\int_0^{+\infty}\!\!
\int_{-\infty}^{+\infty}\!\!
\int_{-\infty}^{+\infty}\!\!
\int_{-\infty}^{+\infty}\!\!
\int_{-\infty}^{+\infty}\!\!
\int_{-\infty}^{+\infty}\!\!
\int_{-\infty}^{+\infty}\!\!
f(u,v_1,w_1)f(u,v_2,w_2)f(u,v_3,w_3)
\notag\\*&\qquad\qquad{}\times
\bigl((v_2-v_1)(w_3-w_1)-(v_3-v_1)(w_2-w_1)\bigr)^2
\notag\\*&\qquad\qquad{}
\,\mathrm{d}w_3
\,\mathrm{d}v_3
\,\mathrm{d}w_2
\,\mathrm{d}v_2
\,\mathrm{d}w_1
\,\mathrm{d}v_1
\,\mathrm{d}u
\;.
\end{align}
The appearance of the square of the area
$(v_2-v_1)(w_3-w_1)-(v_3-v_1)(w_2-w_1)$
is again due to the Radon-like polar coordinates underlying the
integration over directions.

As in Appendix~\ref{app-22proof1}, the
indefinite integrals can be limited to finite intervals
$u\in[0,\bar{u}]$, $v_i\in\bigl[\ubar{v}(u),\bar{v}(u)\bigr]$,
$w_i\in\bigl[\ubar{w}(u,v_i),\bar{w}(u,v_i)\bigr]$ for $i=1,2,3$.
Expanding 
\begin{align}
\kern1em&\kern-1em
\bigl((v_2-v_1)(w_3-w_1)-(v_3-v_1)(w_2-w_1)\bigr)^2
\notag\\*
&=\sum_{\substack{i,j\in\{1,2,3\}\\i\ne j}}v_i^2w_j^2
-2\sum_{\substack{i,j\in\{1,2,3\}\\i<j}}v_iw_iv_jw_j
-2\sum_{\substack{i,j,k\in\{1,2,3\}\\i<j;k\ne i,j}}v_iv_jw_k^2
\notag\\*&\quad{}
-2\sum_{\substack{i,j,k\in\{1,2,3\}\\j<k;i\ne j,k}}v_i^2w_jw_k
+2\sum_{\substack{i,j,k\in\{1,2,3\}\\i\ne j\ne k\ne i}}v_iw_jv_kw_k
\end{align}
then leads to
\begin{align}
F(0,0,0,\mathbf{e}_1) &=
\int_0^{\bar{u}} \bigl(6J_{20}(u)J_{02}(u)J_{00}(u)
-6J_{11}(u)^2J_{00}(u)
-6J_{02}(u)J_{10}(u)^2
\notag\\*&\qquad\qquad{}
-6J_{20}(u)J_{01}(u)^2
+12J_{10}(u)J_{01}(u)J_{00}(u)
\bigr)
\,\mathrm{d}u
\label{F0J33}
\end{align}
with
\begin{align}
J_{kl}(u)&:=
\int_{\ubar{v}(u)}^{\bar{v}(u)}
\int_{\ubar{w}(u,v)}^{\bar{w}(u,v)}
f(u,v,w)v^kw^l
\,\mathrm{d}w
\,\mathrm{d}v
\end{align}
for $k,l=0,1,2$.

We linearise $F(0,0,0,\mathbf{e}_1)$ with regard to
the 18 coefficients
$\omega\in\{\alpha_i,\beta_i,\gamma_i,\delta_i,
\varepsilon_i,\zeta_i~|~i=1,2,3\}$
of the Taylor expansion
\eqref{33tayloru}--\eqref{33taylorw} 
\begin{align}
F(0,0,0,\mathbf{e}_1) =
G^0
&+ \sum\limits_{\omega} G^0_\omega \omega\;.
\label{Glin33}
\end{align}
Like in the bivariate case of Appendix~\ref{app-22proof1},
cross-effects between the coefficients $\omega$ take effect
only in higher-order terms that can be neglected for our
purpose. 
Moreover, $G^0$ is again a constant that vanishes in the 
integration over directions, so we refrain from explicitly
calculating it. 

\begin{table}[b!]
\caption{\label{tab-33anavals-1}
Integration bounds, densities 
and resulting coefficients
$G^0_{\omega}$ of the expansion
\eqref{Glin33} for 
$\omega\in\{\alpha_i,\beta_i,\gamma_i,\delta_i,\varepsilon_i,\zeta_i
~|~i=1,2,3\}$.
Coefficients $H^0_\omega$ are always zero and therefore omitted.
All values are approximated up to higher order terms.
The integrals $J_{kl}(u)$ are found in Table~\ref{tab-33anavals-2}.
Coefficients listed in the second column are inferred from the ones
in the first column by symmetry.
}
\medskip
\centering
\small
\begin{tabular}{c@{~~}c@{~~}l@{~~}l@{~~}l@{~~}l@{~~}c}
\hline
\rule{0pt}{1.2em}%
$\omega$&
$\omega$\,(symm.)&
$\bar{u}$&
$\bar{v}(u)$, $\ubar{v}(u)$&
$\bar{w}(u,v)$, $\ubar{w}(u,v)$&
$f(u,v,w)$&
$G^0_\omega$
\\ \hline
\rule{0pt}{1.2em}%
%%%%%%%%%%%%%%%
$\alpha_1$&&
$\varrho\!+\!\alpha_1\varrho^2$&
$\pm\sqrt{\varrho^2-u^2+2\alpha_1u^3}$&
$\pm\sqrt{\varrho^2-u^2+2\alpha_1u^3-v^2}$&
$1-2\alpha_1u$&
$-\frac18\pi^3\varrho^{12}$
\\
\rule{0pt}{1.1em}%
%%%%%%%%%%%%%%%
$\gamma_1$&$\beta_1$&
$\varrho$&
$\pm\sqrt{\varrho^2-u^2}$&
$\pm\sqrt{\varrho^2-u^2-v^2}+\gamma_1u^2$&
$1-\gamma_1w$&
$0$
\\
\rule{0pt}{1.1em}%
%%%%%%%%%%%%%%%
$\delta_1$&$\zeta_1$&
$\varrho$&
$\pm\sqrt{(\varrho^2-u^2)(1+2\delta_1u)}$&
$\pm\sqrt{\varrho^2-u^2+2\delta_1uv^2-v^2}$&
$1$&
$\frac5{32}\pi^3\varrho^{12}$
\\
\rule{0pt}{1.1em}%
%%%%%%%%%%%%%%%
$\varepsilon_1$&&
$\varrho$&
$\pm\sqrt{\varrho^2-u^2}$&
$\pm\sqrt{\varrho^2-u^2-v^2}+\varepsilon_1uv$&
$1$&
$0$
\\
\rule{0pt}{1.1em}%
%%%%%%%%%%%%%%%
$\beta_2$&$\gamma_3$&
$\varrho$&
$\pm\sqrt{(\varrho^2-u^2)(1+2\beta_2u)}$&
$\pm\sqrt{\varrho^2-u^2+2\beta_2uv^2-v^2}$&
$1-\beta_2u$&
$\frac1{16}\pi^3\varrho^{12}$
\\
\rule{0pt}{1.1em}%
%%%%%%%%%%%%%%%
$\gamma_2$&$\beta_3$&
$\varrho$&
$\pm\sqrt{\varrho^2-u^2}$&
$\pm\sqrt{\varrho^2-u^2-v^2}+\gamma_2uv$&
$1$&
$0$
\\
\rule{0pt}{1.1em}%
%%%%%%%%%%%%%%%
$\delta_2$&$\zeta_3$&
$\varrho$&
$\pm\sqrt{\varrho^2\!-\!u^2}\!+\!\delta_2(\varrho^2\!-\!u^2)$&
$\pm\sqrt{\varrho^2-u^2-v^2+2\delta_2v^3}$&
$1-2\delta_2v$&
$0$
\\
\rule{0pt}{1.1em}%
%%%%%%%%%%%%%%%
$\varepsilon_2$&$\varepsilon_3$&
$\varrho$&
$\pm\sqrt{\varrho^2-u^2}$&
$\pm\sqrt{\varrho^2-u^2-v^2}+\varepsilon_2v^2$&
$1-\varepsilon_2w$&
$0$
\\
\rule{0pt}{1.1em}%
%%%%%%%%%%%%%%%
$\zeta_2$&$\delta_3$&
$\varrho$&
$\pm\sqrt{\varrho^2-u^2}$&
$\pm\sqrt{\varrho^2-u^2-v^2}(1+\zeta_2 v)$&
$1$&
$0$
\\
\rule{0pt}{1.1em}%
%%%%%%%%%%%%%%%
$\alpha_3$&$\alpha_2$&
$\varrho$&
$\pm\sqrt{\varrho^2-u^2}$&
$\pm\sqrt{\varrho^2-u^2-v^2}+\alpha_3u^2$&
$1$&
$0$
\\
%%%%%%%%%%%%%%%
\hline
\end{tabular}
\end{table}

\begin{table}[b!]
\caption{\label{tab-33anavals-2}
Integrals $J_{kl}(u)$ from the computation of the
coefficients
$G^0_{\omega}$ from Table~\ref{tab-33anavals-1}.
All values are approximated up to higher order terms.
For abbreviation, $U:=\varrho^2-u^2$ is used.
}
\medskip
\centering
\small
\begin{tabular}{c@{~~}l@{~~}l@{~~}l@{~~}l@{~~}l@{~~}l}
\hline
\rule{0pt}{1.2em}%
$\omega$&
$J_{20}(u)$&
$J_{11}(u)$&
$J_{02}(u)$&
$J_{10}(u)$&
$J_{01}(u)$&
$J_{00}(u)$
\\ \hline
\rule{0pt}{1.2em}%
%%%%%%%%%%%%%%%
$\alpha_1$&
$\frac14\pi(1-2\alpha_1u)\bar{v}^4$&
$0$&
$\frac14\pi(1-2\alpha_1u)\bar{v}^4$&
$0$&
$0$&
$\pi(1-2\alpha_1u)\bar{v}^2$
\\
\rule{0pt}{1.1em}%
%%%%%%%%%%%%%%%
$\gamma_1$&
$\frac14\pi U^2$&
$0$&
$\frac14\pi U^2$&
$0$&
$-\frac14\pi\gamma_1 U(\varrho^2-5u^2)$&
$\pi U$
\\
\rule{0pt}{1.1em}%
%%%%%%%%%%%%%%%
$\delta_1$&
$\frac14\pi(1+\delta_1u)^3U^2$&
$0$&
$\frac14\pi(1+\delta_1u)U^2$&
$0$&
$0$&
$\pi(1+\delta_1u) U$
\\
\rule{0pt}{1.1em}%
%%%%%%%%%%%%%%%
$\varepsilon_1$&
$\frac14\pi U^2$&
$\frac12\pi\varepsilon_1u U^2$&
$\frac14\pi U^2$&
$0$&
$0$&
$\pi U$
\\
\rule{0pt}{1.1em}%
%%%%%%%%%%%%%%%
$\beta_2$&
$\frac14\pi(1+\beta_2u)^2U^2$&
$0$&
$\frac14\pi U^2$&
$0$&
$0$&
$\pi U$
\\
\rule{0pt}{1.1em}%
%%%%%%%%%%%%%%%
$\gamma_2$&
$\frac14\pi U^2$&
$\frac12\pi\gamma_2u U^2$&
$\frac14\pi U^2$&
$0$&
$0$&
$\pi U$
\\
\rule{0pt}{1.1em}%
%%%%%%%%%%%%%%%
$\delta_2$&
$\frac14\pi U^2$&
$0$&
$\frac14\pi U^2$&
$\frac14\pi\delta_2 U^2$&
$0$&
$\pi U$
\\
\rule{0pt}{1.1em}%
%%%%%%%%%%%%%%%
$\varepsilon_2$&
$\frac14\pi U^2$&
$0$&
$\frac14\pi U^2$&
$0$&
$0$&
$\pi U$
\\
\rule{0pt}{1.1em}%
%%%%%%%%%%%%%%%
$\zeta_2$&
$\frac14\pi U^2$&
$0$&
$\frac14\pi U^2$&
$\frac14\pi\zeta_2 U^2$&
$0$&
$\pi U$
\\
\rule{0pt}{1.1em}%
%%%%%%%%%%%%%%%
$\alpha_3$&
$\frac14\pi U^2$&
$0$&
$\frac14\pi U^2$&
$0$&
$\pi\alpha_3u^2 U$&
$\pi U$
\\
%%%%%%%%%%%%%%%
\hline
\end{tabular}
\end{table}

To calculate the value $G^0_\omega$ for each coefficient $\omega$
one can then assume that only this particular coefficient varies
around $0$ while all other coefficients vanish. For 10 of the
coefficients $\omega$ one calculates
then the integration bounds $\bar{u}$, $\ubar{v}(u)$,
$\bar{v}(u)$, $\ubar{w}(u,v)$, $\bar{w}(u,v)$ and the density
function $f(u,v,w)$ as stated in Table~\ref{tab-33anavals-1},
the respective integrals $J_{kl}(u)$ as given in 
Table~\ref{tab-33anavals-2} and finally using \eqref{F0J33}
the coefficients $G^0_\omega$ which are again listed in
Table~\ref{tab-33anavals-1}. The remaining 8 coefficients need
not be calculated in this tedious way as they can be derived
from the obvious symmetry of $F(0,0,0,\mathbf{e}_1)$ under
the exchange of $y$ and $z$; the detailed symmetries of coefficients
are also stated in Table~\ref{tab-33anavals-1}.

For the derivatives of $F$ we have
\begin{align}
F_{u^*}(0,0,0,\mathbf{e}_1) &= H^0 (1+\mathcal{O}(\varrho^2))\;,\\
F_{v^*}(0,0,0,\mathbf{e}_1) &= 0\;,\\
F_{w^*}(0,0,0,\mathbf{e}_1) &= 0\;.
\end{align}
Here, $H^0$ is calculated from the unperturbed case
$u=x$, $v=y$, $w=z$ via
\begin{align}
J_{20}(0)&=J_{02}(0)=\tfrac14\pi\varrho^4\;,\\
J_{00}(0)&=\pi\varrho^2\;,\\
J_{11}(0)&=J_{10}(0)=J_{01}(0)=0 \;.
\end{align}
Thus only the first summand of the integrand of \eqref{F0J33} 
is non-zero, leading to
\begin{align}
H^0 &= 
-6J_{20}(0)J_{02}(0)J_{00}(0)
=-\tfrac38\pi^3\varrho^{10}\;.
\end{align}
From the so obtained expressions
\begin{align}
F(0,0,0,\mathbf{e}_1) &\doteq G^0 + \tfrac1{32}\pi^3\varrho^{12}(
-4\alpha_1+5\delta_1+5\zeta_1
+2\beta_2+2\gamma_3)\;,
\\
F_{u^*}(0,0,0,\mathbf{e}_1) &\doteq -\tfrac38\pi^3\varrho^{10}
\end{align}
general expressions for $F(0,0,0,\bm{p})$ and
its derivatives w.r.t.\ $u^*$, $v^*$, $w^*$ can be obtained.
Given the parametrisation 
\begin{align}
\bm{p}&=\bm{p}(\varphi,\psi)
=(\cos\varphi,\sin\varphi\cos\psi,\sin\varphi\sin\psi)\transpose
\end{align}
one can use e.g.\ the rotation matrix 
\begin{align}
R &= \begin{pmatrix}
\cos\varphi&\sin\varphi\cos\psi&\sin\varphi\sin\psi\\
-\sin\varphi&\cos\varphi\cos\psi&\cos\varphi\sin\psi\\
0&\sin\psi&-\cos\psi
\end{pmatrix}
\end{align}
to transform the $(u,v,w)$ and $(x,y,z)$ coordinates simultaneously.
(There is a degree of freedom in the choice of $R$ that corresponds
to a rotation around the direction of $\bm{p}$.)

Integration over directions then yields
\begin{align}
\bm{\varPhi}(0,0,0)
&=\int_{\mathrm{S}^2}F(0,0,0,\bm{p})\bm{p}
\,\mathrm{d}\sigma(\bm{p})
\notag\\*
&=\int_0^\pi\int_0^{2\pi}F(0,0,0,\bm{p}(\varphi,\psi))
\begin{pmatrix}
\cos\varphi\\\sin\varphi\cos\psi\\\sin\varphi\sin\psi
\end{pmatrix}
\sin\varphi\,\mathrm{d}\psi\,\mathrm{d}\varphi
\notag\\*
&=
\frac{\pi^6}{40}\varrho^{12}
\begin{pmatrix}
2\alpha_1+4(\delta_1+\zeta_1)-(\beta_2+\gamma_3)\\
2\delta_2+4(\alpha_2+\zeta_2)-(\beta_1+\varepsilon_3)\\
2\zeta_3+4(\alpha_3+\delta_3)-(\gamma_1+\varepsilon_2)
\end{pmatrix}
\end{align}
and similarly
\begin{align}
\mathrm{D}\bm{\varPhi}(0,0,0)&=
-\frac{\pi^4}2\varrho^{10}
\begin{pmatrix}1&0&0\\0&1&0\\0&0&1\end{pmatrix}
\end{align}
and by \eqref{Philin33}
\begin{align}
\begin{pmatrix}u^*\\v^*\\w^*\end{pmatrix}
&=\frac{\varrho^2}{20}
\begin{pmatrix}
2\alpha_1+4(\delta_1+\zeta_1)-(\beta_2+\gamma_3)\\
2\delta_2+4(\alpha_2+\zeta_2)-(\beta_1+\varepsilon_3)\\
2\zeta_3+4(\alpha_3+\delta_3)-(\gamma_1+\varepsilon_2)
\end{pmatrix}\;,
\end{align}
hence an explicit time step of size $\tau=\varrho^2/20$ of
the PDE system \eqref{ojapde33-I-1}--\eqref{ojapde33-I-3}.
\hfill$\Box$

%%%%%%%%%%%%%%%%%%%%%%%%%%%%%%%%%%%%%%%%%%%%%%%%%%%%%%%%
\section{Proof of Lemma~\ref{lem-l1apde33-I}}
\label{app-l1a33proof}
%%%%%%%%%%%%%%%%%%%%%%%%%%%%%%%%%%%%%%%%%%%%%%%%%%%%%%%%

We start again from the Taylor expansion 
\eqref{33tayloru}--\eqref{33taylorw} of the function $\bm{u}$
around the point $\bm{x}=\bm{0}$.

The $L^1$ median $(\bm{u}^*,\bm{v}^*,\bm{w}^*)$ of the function
values of $\bm{u}$ within the structuring element $B_\varrho$
is determined by the equilibrium conditions
\begin{align}
0 &= \iiint_{B_\varrho}
\frac{\bm{u}(x,y,z)-\bm{u}^*}{\lvert\bm{u}(x,y,z)-\bm{u}^*\rvert}
\,\mathrm{d}z\,\mathrm{d}y\,\mathrm{d}x \;.
\label{l1-33-medcond}
\end{align}
With the goal of the PDE approximation, we will determine the
median as linear function of the Taylor coefficients. 
Cross-effects between the Taylor coefficients will again be restricted
to higher order terms in $\varrho$ that can be neglected in our
asymptotic analysis for $\varrho\to0$. We can therefore study the
effects of the Taylor coefficients separately.

To start with $\alpha_1$, we insert in \eqref{l1-33-medcond} the
function $u=x+\alpha_1x^2$, $v=y$, $w=z$, and obtain the three
conditions
\begin{align}
&0 = \iiint_{B_\varrho}
\frac{x+\alpha_1x^2-u^*}
{\sqrt{N(x,y,z)}}
\,\mathrm{d}z\,\mathrm{d}y\,\mathrm{d}x \;,
\label{l1a33-medcond-alpha1-u}
\\
&0 = \iiint_{B_\varrho}
\frac{y-v^*}
{\sqrt{N(x,y,z)}}
\,\mathrm{d}z\,\mathrm{d}y\,\mathrm{d}x \;,
\label{l1a33-medcond-alpha1-v}
\\
&0 = \iiint_{B_\varrho}
\frac{z-w^*}
{\sqrt{N(x,y,z)}}
\,\mathrm{d}z\,\mathrm{d}y\,\mathrm{d}x \;,
\label{l1a33-medcond-alpha1-w}
\end{align}
where 
\begin{equation}
N(x,y,z)=(x+\alpha_1x^2-u^*)^2+(y-v^*)^2+(z-w^*)^2 \;.
\label{l1a33-medcond-alpha1-N}
\end{equation}
Condition~\eqref{l1a33-medcond-alpha1-v} can be turned by substituting
$-y$ for $y$ into the same equation with $-v^*$ in place of $v^*$, i.e.\
for any triple $(u^*,v^*,w^*)$ that satisfies the three conditions,
$(u^*,-v^*,w^*)$ does the same. By the convexity of the objective 
function of the $L^1$ median it follows that $(u^*,0,w^*)$ is in this case
also a minimiser. The same argument works for $w^*$. Hence, we can 
seek the median as $(u^*,v^*,w^*)=(\lambda\varrho^2,0,0)$ and need to
consider only Condition~\eqref{l1a33-medcond-alpha1-u}. Using the
substitution $x=\xi\varrho$, $y=\eta\varrho$, $z=\zeta\varrho$ we
obtain
\begin{align}
0 &= \iiint_{B_1}
\frac{\xi+(\alpha_1\xi^2-\lambda)\varrho}
{\sqrt{\bigl(\xi+(\alpha_1\xi^2-\lambda)\varrho\bigr)^2+\eta^2+\zeta^2}}
\,\mathrm{d}\zeta\,\mathrm{d}\eta\,\mathrm{d}\xi \;.
\end{align}
Splitting this integral into the integrals over $B_{\varrho^{2/5}}$ and
$B_1\setminus B_{\varrho^{2/5}}$, we see that the integral over
$B_{\varrho^{2/5}}$ is of order $\mathcal{O}(\varrho^{6/5})$ because the
integrand is absolutely bounded by $1$.
In the domain of the second integral, we have that 
$(\alpha_1\xi^2-\lambda)\varrho/(\xi^2+\eta^2+\zeta^2)=
\mathcal{O}(\varrho^{3/5})$
and therefore by Taylor expansion
\begin{align}
\kern2em&\kern-2em
\bigl((\xi+(\alpha_1\xi^2-\lambda)\varrho)^2+\eta^2+\zeta^2\bigr)^{-1/2}
\notag\\*
&= (\xi^2+\eta^2+\zeta^2)^{-1/2}\left(
1 - \frac{(\alpha_1\xi^2-\lambda)\varrho}{\xi^2+\eta^2+\zeta^2}
+ \mathcal{O}(\varrho^{6/5})\right)
\label{tayinvsqrt}
\end{align}
which leads to
\begin{align}
0 &= \underbrace{
\iiint_{B_1\setminus B_{\varrho^{2/5}}}
\frac{\xi}{\sqrt{\xi^2+\eta^2+\zeta^2}}
\,\mathrm{d}\zeta\,\mathrm{d}\eta\,\mathrm{d}\xi
}_{{}=0}
+ \varrho \iiint_{B_1\setminus B_{\varrho^{2/5}}}
\frac{\alpha_1\xi^2-\lambda}{\sqrt{\xi^2+\eta^2+\zeta^2}}
\,\mathrm{d}\zeta\,\mathrm{d}\eta\,\mathrm{d}\xi
\notag\\&\quad{}
- \varrho \iiint_{B_1\setminus B_{\varrho^{2/5}}}
\frac{\alpha_1\xi^2-\lambda}{(\xi^2+\eta^2+\zeta^2)^{3/2}}
\,\mathrm{d}\zeta\,\mathrm{d}\eta\,\mathrm{d}\xi
+\mathcal{O}(\varrho^{6/5})
\notag\\
&= \varrho \iiint_{B_1\setminus B_{\varrho^{2/5}}}
\frac{(\alpha_1\xi^2-\lambda)(\xi^2+\eta^2+\zeta^2-\xi^2)}
{(\xi^2+\eta^2+\zeta^2)^{3/2}}
\,\mathrm{d}\zeta\,\mathrm{d}\eta\,\mathrm{d}\xi
+\mathcal{O}(\varrho^{6/5})
\end{align}
and finally, by neglecting $\mathcal{O}(\varrho^{1/5})$ terms, to
\begin{align}
\frac{\lambda}{\alpha_1} &= \frac
{\iiint_{B_1\setminus B_{\varrho^{2/5}}}
\xi^2(\eta^2+\zeta^2)/(\xi^2+\eta^2+\zeta^2)^{3/2}
\,\mathrm{d}\zeta\,\mathrm{d}\eta\,\mathrm{d}\xi}
{\iiint_{B_1\setminus B_{\varrho^{2/5}}}^{\vphantom{A}}
(\eta^2+\zeta^2)/(\xi^2+\eta^2+\zeta^2)^{3/2}
\,\mathrm{d}\zeta\,\mathrm{d}\eta\,\mathrm{d}\xi}
\notag\\*
&= \frac{(1-\varrho^{8/5})\cdot2\pi/15}{(1-\varrho^{4/5})\cdot4\pi/3}
\underset{\varrho\to0}{\longrightarrow}\frac1{10}\;,
\end{align}
thus in the limit $(u^*,v^*,w^*) = \frac1{20}\varrho^2(u_{xx},0,0)$.

By permutation of variables, one finds 
$(u^*,v^*,w^*) = \frac1{20}\varrho^2(0,v_{yy},0)$
if $u=x$, $v=y+\delta_2y^2$, $w=z$, and 
$(u^*,v^*,w^*) = \frac1{20}\varrho^2(0,0,w_{zz})$
if $u=x$, $v=y$, $w=z+\zeta_3z^2$.

Turning to the case $u=x+\delta_1y^2$, $v=y$, $w=z$, we can conclude
$v^*=w^*=0$ by a similar symmetry argument as before. For the
remaining condition
\begin{align}
&0 = \iiint_{B_\varrho}
\frac{x+\delta_1y^2-u^*}
{\sqrt{N(x,y,z)}}
\,\mathrm{d}z\,\mathrm{d}y\,\mathrm{d}x \;,
\label{l1a33-medcond-delta1-u}
\\
&N(x,y,z) = (x+\delta_1y^2-u^*)^2+(y-v^*)^2+(z-w^*)^2\;,
\label{l1a33-medcond-delta1-N}
\end{align}
we proceed by the same substitutions, splitting of the integral
domain, and Taylor expansion of the denominator to finally obtain
\begin{align}
\frac{\lambda}{\delta_1} &= \frac
{\iiint_{B_1\setminus B_{\varrho^{2/5}}}
\eta^2(\eta^2+\zeta^2)/(\xi^2+\eta^2+\zeta^2)^{3/2}
\,\mathrm{d}\zeta\,\mathrm{d}\eta\,\mathrm{d}\xi}
{\iiint_{B_1\setminus B_{\varrho^{2/5}}}
(\eta^2+\zeta^2)/(\xi^2+\eta^2+\zeta^2)^{3/2}
\,\mathrm{d}\zeta\,\mathrm{d}\eta\,\mathrm{d}\xi}
\notag\\*
&= \frac{(1-\varrho^{8/5})\cdot4\pi/15}{(1-\varrho^{4/5})\cdot4\pi/3}
\underset{\varrho\to0}{\longrightarrow}\frac1{5}\;,
\end{align}
thus in the limit $(u^*,v^*,w^*) = \frac1{20}\varrho^2(2u_{yy},0,0)$.

By permutation of variables this also determines the $u_{zz}$, $v_{xx}$,
$v_{zz}$, $w_{xx}$ and $w_{yy}$ contributions of the PDE system
\eqref{ojapde33-I-1}--\eqref{ojapde33-I-3}.

For $u=x+\beta xy$, $v=y$, $w=z$ we find $u^*=w^*=0$ by symmetry considerations
and evaluate the second component of \eqref{l1-33-medcond} with 
$v^*=\mu\varrho^2$ to
\begin{align}
\frac{\mu}{\beta_1} &= -\frac
{\iiint_{B_1\setminus B_{\varrho^{2/5}}}
\xi^2\eta^2/(\xi^2+\eta^2+\zeta^2)^{3/2}
\,\mathrm{d}\zeta\,\mathrm{d}\eta\,\mathrm{d}\xi}
{\iiint_{B_1\setminus B_{\varrho^{2/5}}}
(\xi^2+\zeta^2)/(\xi^2+\eta^2+\zeta^2)^{3/2}
\,\mathrm{d}\zeta\,\mathrm{d}\eta\,\mathrm{d}\xi}
\notag\\*
&= -\frac{(1-\varrho^{8/5})\cdot\pi/15}{(1-\varrho^{4/5})\cdot4\pi/3}
\underset{\varrho\to0}{\longrightarrow}-\frac1{20}\;,
\end{align}
thus in the limit $(u^*,v^*,w^*) = \frac1{20}\varrho^2(0,-u_{xy},0)$.

Permutation of the variables yields all remaining terms of 
\eqref{ojapde33-I-1}--\eqref{ojapde33-I-3}.
\hfill$\Box$

%%%%%%%%%%%%%%%%%%%%%%%%%%%%%%%%%%%%%%%%%%%%%%%%%%%%%%%%
\section{Proof of Lemma~\ref{lem-ojapde23-I}}
\label{app-23proof}
%%%%%%%%%%%%%%%%%%%%%%%%%%%%%%%%%%%%%%%%%%%%%%%%%%%%%%%%

The Taylor expansion of $\bm{u}(x,y)$ around
$(0,0,0)$ up to second order is given as
\begin{align}
u(x,y) &= x 
+ \alpha_1 x^2 + \beta_1 xy + \delta_1 y^2 \;,
\label{23tayloru}
\\
v(x,y) &= y 
+ \alpha_2 x^2 + \beta_2 xy + \delta_2 y^2 \;,
\label{23taylorv}
\\
w(x,y) &= \phantom{z + {}}
\alpha_3 x^2 + \beta_3 xy + \delta_3 y^2 \;,
\label{23taylorw}
\end{align}
where $\alpha_1=\frac12u_{xx}$, $\beta_1=u_{xy}$ etc.

We will again express the median of data values within 
the structuring element $D_\varrho$ in terms of the
Taylor coefficients, neglecting terms of higher order in $\varrho$.
As in the settings before, cross-effects between the Taylor
coefficients influence only higher-order terms such that 
the Taylor coefficients can be considered separatedly.

As long as $\alpha_3=\beta_3=\delta_3=0$, the component $w$ is
identically zero, and thus $w^*=0$.
Moreover, the effects of $\alpha_1$, \ldots, $\delta_2$ on $u^*$
and $v^*$ are the same as in Lemma~\ref{lem-ojapde22-I}, such that
we need only to consider $\alpha_3$, $\beta_3$ and $\delta_3$.

Since by the influence of $w$ which varies just in order 
$\mathcal{O}(\varrho^2)$ around zero, the triangles whose area sum
is minimised by the 2D Oja median stay approximately in the $u$-$v$
plane and their deformation is restricted to higher order terms,
neither of $\alpha_3$, $\beta_3$ and $\delta_3$ influences the
first two median components $u^*$, $v^*$ asymptotically.

It remains to study $w^*$.
For the case $u=x$, $v=y$, $w=\beta_3xy$ we notice that mirroring the
structuring element by replacing $y$ with $-y$, followed by
replacing $w$ with $-w$, restores the original function. Thus, for
each minimiser $w^*$ in this case, $-w^*$ is also a minimiser, and
by convexity of the objective function $w^*=0$ is a minimiser.

Regarding $\alpha_3$ and $\delta_3$, notice that rotation of the
structuring element by $90$ degrees switches the roles of $\alpha$
and $\delta$. As this rotation leaves the input value set unchanged,
we see that $\alpha_3$ and $\delta_3$ must have equal effects.
We can therefore consider the rotationally symmetric case
$\alpha_3=\delta_3$.

Assume therefore that we have $u=x$, $v=y$, and $w=\alpha(x^2+y^2)$,
and $M=(0,0,w^*)$ is the sought 2D Oja median.
The median point constitutes an equilibrium between forces
exercised by point pairs $(A,B)$ with
$A=(u_1,v_1,w_1)=\bigl(x_1,y_1,\alpha(x_1^2+y_1^2)\bigr)$,
$B=(u_2,v_2,w_2)=\bigl(x_2,y_2,\alpha(x_2^2+y_2^2)\bigr)$. The force coming
from a single point pair $(A,B)$ is expressed by a vector of length
$\lvert AB\rvert$ in direction $MH$, where $H$ is the foot of the
altitude on $AB$ in the triangle $MAB$. Thus, the sought 2D Oja
median is a weighted $L^1$ median of the feet $H$, weighted with the
base lengths $\lvert AB\rvert$. The equilibrium condition for $M$ can 
therefore be written 
similarly as in \eqref{l1-33-medcond} 
as
\begin{align}
0 &= \iint_{D_\varrho}\iint_{D_\varrho}
\frac{MH}{\lvert MH\rvert} \lvert AB\rvert
\,\mathrm{d}y_2\,\mathrm{d}x_2\,\mathrm{d}y_1\,\mathrm{d}x_1
\label{int-mhab}
\end{align}
where the points $H$, $A$, $B$ still need to be expressed in
coordinates. Before we do so, we notice that reorganisation of
the quadruple integral in Radon-like polar coordinates as
in Appendix~\ref{app-22proof1} creates an additional weight
factor $\lvert AB\rvert$ and an integrand that is rotationally symmetric
with regard to the angular coordinate $\varphi$. One can therefore
drop the integration over $\varphi$ and consider just $\varphi=0$
as minimality condition. Denoting by $H'$, $A'$, $B'$ the projections
of $H$, $A$, $B$, respectively, to the $u$-$v$ plane, the case $\varphi=0$
describes a configuration in which $A'B'$ is aligned in $v$ direction,
and the altitude in $MA'B'$ therefore in $u$ direction. Since $H'$
deviates from the foot of the altitude in $MA'B'$ at most by higher order
terms, we can assume that $H'=(x,0)$, $A'=(x,y_1)$, $B'=(x,y_2)$.
The 3D points $A$ and $B$ are then given by 
$A=\bigl(x,y_1,\alpha(x^2+y_1^2)\bigr)$,
$B=\bigl(x,y_2,\alpha(x^2+y_2^2)\bigr)$. $H$ is given up to higher order terms
by $H=\bigl(x,0,\alpha(x^2-y_1y_2)\bigr)$.

This leads to the simplified equilibrium condition
\begin{align}
0 &= \int_0^\varrho 
\int_{-\sqrt{\varrho^2-x^2}}^{\sqrt{\varrho^2-x^2}}
\int_{-\sqrt{\varrho^2-x^2}}^{\sqrt{\varrho^2-x^2}}
%\notag\\*&\qquad\quad
\frac{
(y_2-y_1)^2
\bigl(\alpha(x^2-y_1y_2)-w^*\bigr)}
{\sqrt{x^2+\bigl(\alpha(x^2-y_1y_2)-w^*\bigr)^2}}
\,\mathrm{d}y_2\,\mathrm{d}y_1\,\mathrm{d}x \;,
\end{align}
and after substituting $x=\xi\varrho$, $y_1=\eta_1\varrho$, 
$y_2=\eta_2\varrho$, $w^*=\nu\varrho^2$ one has
\begin{align}
0 &= \int_0^1
\int_{-\sqrt{1-\xi^2}}^{\sqrt{1-\xi^2}}
\int_{-\sqrt{1-\xi^2}}^{\sqrt{1-\xi^2}}
%\notag\\*&\qquad\quad
\frac{
(\eta_2-\eta_1)^2
\bigl(\alpha(\xi^2-\eta_1\eta_2)-\nu\bigr)}
{\sqrt{\xi^2+\varrho^2\bigl(\alpha(\xi^2-\eta_1\eta_2)-\nu\bigr)^2}}
\,\mathrm{d}\eta_2\,\mathrm{d}\eta_1\,\mathrm{d}\xi\;.
\end{align}
Splitting the integration range of the outer integral to the two intervals
$[0,\varrho^{2/3}]$ and $[\varrho^{2/3},1]$ we see that the
first integral yields $\mathcal{O}(\varrho^{2/3})$ since its integrand
is absolutely bounded by $1$, whereas the second integral is simplified
further by noticing that 
$\sqrt{\xi^2+\varrho^2\bigl(\alpha(\xi^2-\eta_1\eta_2)-\nu\bigr)^2}
=\xi\bigl(1+\mathcal{O}(\varrho^{2/3})\bigr)$
to
\begin{align}
0 &= \int_{\varrho^{2/3}}^1
\int_{-\sqrt{1-\xi^2}}^{\sqrt{1-\xi^2}}
\int_{-\sqrt{1-\xi^2}}^{\sqrt{1-\xi^2}}
%\notag\\*&\qquad\quad
\frac{
(\eta_2-\eta_1)^2
\bigl(\alpha(\xi^2-\eta_1\eta_2)-\nu\bigr)}{\xi}
\,\mathrm{d}\eta_2\,\mathrm{d}\eta_1\,\mathrm{d}\xi
%\notag\\*&\quad{}
+\mathcal{O}(\varrho^{2/3})
\end{align}
from which we obtain
\begin{align}
\frac{\nu}{\alpha} &= \frac
{
\int_{\varrho^{2/3}}^1
\int_{-\sqrt{1-\xi^2}}^{\sqrt{1-\xi^2}}
\int_{-\sqrt{1-\xi^2}}^{\sqrt{1-\xi^2}}
(\eta_2-\eta_1)^2(\xi^2-\eta_1\eta_2)\xi^{-1}
\,\mathrm{d}\eta_2\,\mathrm{d}\eta_1\,\mathrm{d}\xi
}
{\int_{\varrho^{2/3}}^1
\int_{-\sqrt{1-\xi^2}}^{\sqrt{1-\xi^2}}
\int_{-\sqrt{1-\xi^2}}^{\sqrt{1-\xi^2}}
(\eta_2-\eta_1)^2\xi^{-1}
\,\mathrm{d}\eta_2\,\mathrm{d}\eta_1\,\mathrm{d}\xi}
\end{align}
and after integral evaluation
\begin{align}
\frac{\nu}{\alpha} &= \frac
{-\ln\varrho\cdot16/27-10/27}
{-\ln\varrho\cdot16/9-16/9}
\underset{\varrho\to0}{\longrightarrow}\frac13 \;.
\end{align}
Since for the given function one has $w_{xx}=w_{yy}=2\alpha$, it follows
that each of $w_{xx}$ and $w_{yy}$ effects $w^*$ with weight $1/12$,
which concludes the proof.
\hfill$\Box$

%%%%%%%%%%%%%%%%%%%%%%%%%%%%%%%%%%%%%%%%%%%%%%%%%%%%%%%%
\section{Numerical Scheme for the PDE \eqref{ojapde23}}
\label{app-pdealgo}
%%%%%%%%%%%%%%%%%%%%%%%%%%%%%%%%%%%%%%%%%%%%%%%%%%%%%%%%

We use the notations from Section~\ref{ssec-pdenum}.
By square brackets $[\ldots]$ we denote discrete approximations of the
enclosed derivative expressions at pixel $(i,j)$ in time step $k$.
The numerical scheme
for the PDE \eqref{ojapde23} proceeds for each pixel $(i,j)$ as follows.

\begin{enumerate}
\item Compute the central difference approximations
\begin{align}
[\bm{u}_x]
&:= \tfrac{1}{2h} (\bm{u}^k_{i+1,j}-\bm{u}^k_{i-1,j})\;,
\\
[\bm{u}_y]
&:= \tfrac{1}{2h} (\bm{u}^k_{i,j+1}-\bm{u}^k_{i,j-1})\;.
\end{align}
\item
From $[\mathrm{D}\bm{u}]=\bigl([\bm{u}_x]~|~[\bm{u}_y]\bigr)$ compute the
$3\times3$ tensor product matrix 
$\bm{C}:=[\mathrm{D}\bm{u}][\mathrm{D}\bm{u}]\transpose$.
Compute the spectral decomposition of $\bm{C}$,
\begin{align}
\bm{C} &= \bm{Q} \bm{\varLambda} \bm{Q}\transpose
\end{align}
where $\bm{Q}$ is orthogonal, and $\bm{\varLambda}$ is the diagonal
matrix of the (nonnegative) eigenvalues of $\bm{C}$ in decreasing order,
$\lambda_1\ge\lambda_2\ge\lambda_3$.
\item Apply to the input values $\bm{u}_{i,j}^k$ the orthogonal
transform
\begin{align}
\hat{\bm{u}}_{i,j}^k &:= \bm{Q}\transpose\bm{u}_{i,j}^k\;.
\end{align}
Note that hereafter, the first and second channel of $\hat{\bm{u}}$
hold the directions of dominant variation within the patch, i.e.\ the
first two basis vectors of the transformed data set span the
tangential space of the image graph. Moreover, the gradients of the first
two channels of $\hat{\bm{u}}$ are orthogonal in the $(x,y)$ plane.
\item
Compute the central difference approximations
\begin{align}
[\hat{\bm{u}}_x]
&:= \tfrac{1}{2h} (\hat{\bm{u}}^k_{i+1,j}-\hat{\bm{u}}^k_{i-1,j})\;,
\\
[\hat{\bm{u}}_y]
&:= \tfrac{1}{2h} (\hat{\bm{u}}^k_{i,j+1}-\hat{\bm{u}}^k_{i,j-1})\;,
\\
[\hat{\bm{u}}_{xx}]
&:= \tfrac1{h^2} (\hat{\bm{u}}^k_{i+1,j}-2\hat{\bm{u}}^k_{i,j}
+\hat{\bm{u}}^k_{i-1,j})\;,
\\
[\hat{\bm{u}}_{yy}]
&:= \tfrac1{h^2} (\hat{\bm{u}}^k_{i,j+1}-2\hat{\bm{u}}^k_{i,j}
+\hat{\bm{u}}^k_{i,j-1})\;,
\\
[\hat{\bm{u}}_{xy}]
&:= \tfrac1{4h^2} (\hat{\bm{u}}^k_{i+1,j+1}-\hat{\bm{u}}^k_{i+1,j-1}
-\hat{\bm{u}}^k_{i-1,j+1}+\hat{\bm{u}}^k_{i-1,j-1})\;.
\end{align}
\item Compute the first contribution to $\hat{\bm{u}}_t$ as
\begin{align}
\hat{\bm{z}}_1 &:= [\hat{\bm{u}}_{xx}] + [\hat{\bm{u}}_{yy}]
\;.
\end{align}
\item
From the first component $\hat{u}$ of $\hat{\bm{u}}$, 
determine the image adaptive directions
\begin{align}
\bm{\eta} &:= \begin{pmatrix}c\\s\end{pmatrix} =
\frac{1}{\sqrt{[\hat{u}_x]^2+[\hat{u}_y]^2}}
\begin{pmatrix} [\hat{u}_x]\\{}[\hat{u}_y] \end{pmatrix}\;,\\
\bm{\xi} &:= \begin{pmatrix}-s\\c\end{pmatrix}
\end{align}
and the directional derivatives
\begin{align}
[\hat{\bm{u}}_{\bm{\eta}}] &:=  c[\hat{\bm{u}}_x]+s[\hat{\bm{u}}_y]\;,\\
[\hat{\bm{u}}_{\bm{\xi}}]  &:= -s[\hat{\bm{u}}_x]+c[\hat{\bm{u}}_y]\;,\\
[\hat{\bm{u}}_{\bm{\eta\eta}}] &:= c^2[\hat{\bm{u}}_{xx}]
+2cs[\hat{\bm{u}}_{xy}]+s^2[\hat{\bm{u}}_{yy}]\;,\\
[\hat{\bm{u}}_{\bm{\xi\xi}}] &:= s^2[\hat{\bm{u}}_{xx}]
-2cs[\hat{\bm{u}}_{xy}]+c^2[\hat{\bm{u}}_{yy}]\;,\\
[\hat{\bm{u}}_{\bm{\eta\xi}}] &:= 
cs\bigl([\hat{\bm{u}}_{yy}]-[\hat{\bm{u}}_{xx}]\bigr)
+(c^2-s^2)[\hat{\bm{u}}_{xy}]\;.
\end{align}
\item
Compute the second contribution to $\hat{\bm{u}}_t$ as
\begin{align}
\hat{\bm{z}}_2 &:= 2 
\bigl( [\hat{u}_{\xi\xi}], [\hat{v}_{\eta\eta}], 0 \bigr)\transpose
\;.
\end{align}
\item
Compute one-sided derivatives
\begin{align}
[\hat{\bm{u}}_x]^+
&:= \tfrac{1}{h} (\hat{\bm{u}}^k_{i+1,j}-\hat{\bm{u}}^k_{i,j})\;,
\\
[\hat{\bm{u}}_x]^-
&:= \tfrac{1}{h} (\hat{\bm{u}}^k_{i,j}-\hat{\bm{u}}^k_{i-1,j})\;,
\\
[\hat{\bm{u}}_y]^+
&:= \tfrac{1}{h} (\hat{\bm{u}}^k_{i,j+1}-\hat{\bm{u}}^k_{i,j})\;,
\\
[\hat{\bm{u}}_y]^-
&:= \tfrac{1}{h} (\hat{\bm{u}}^k_{i,j}-\hat{\bm{u}}^k_{i,j-1})\;.
\end{align}
If $[\hat{\bm{u}}_x]^+$ and $[\hat{\bm{u}}_x]^-$ have opposite sign,
replace the one with larger absolute value with their sum
$[\hat{\bm{u}}_x]^++[\hat{\bm{u}}_x]^-$ and set the other one to zero
(minmod stabilisation).
Proceed in the same way for $[\hat{\bm{u}}_y]^\pm$.
From the so obtained approximations, compute one-sided directional derivatives
\begin{align}
[\hat{\bm{u}}_{\bm{\eta}}]^\pm 
&:=  c[\hat{\bm{u}}_x]^\pm+s[\hat{\bm{u}}_y]^\pm\;,\\
[\hat{\bm{u}}_{\bm{\xi}}]^\pm  
&:= -s[\hat{\bm{u}}_x]^\mp+c[\hat{\bm{u}}_y]^\pm\;,
\end{align}
if $c,s\ge0$, and analogously for other sign combinations of $c$, $s$.
\item Compute regularised approximations 
$R_v$ for $v_{\eta\xi}/v_{\xi}$ 
and 
$R_u$ for $u_{\eta\xi}/u_{\eta}$
as
\begin{align}
R_v &:= \frac{2[\hat{v}_{\bm{\eta\xi}}][\hat{v}_{\bm{\xi}}]}
{\bigl([\hat{v}_{\bm{\xi}}]^+\bigr)^2+
\bigl([\hat{v}_{\bm{\xi}}]^-\bigr)^2+2\varepsilon}\;,
\\
R_u &:= \frac{2[\hat{u}_{\bm{\eta\xi}}][\hat{u}_{\bm{\eta}}]}
{\bigl([\hat{u}_{\bm{\eta}}]^+\bigr)^2+
\bigl([\hat{u}_{\bm{\eta}}]^-\bigr)^2+2\varepsilon}
\end{align}
with a fixed numerical regularisation parameter $\varepsilon$.
\item Compute the third contribution to
$\hat{\bm{u}}_t$ by the upwind discretisation
\begin{align}
\bm{z}_3 &:= 
\bigl( R_v [\hat{u}_{\bm{\eta}}]^\mp, R_u [\hat{v}_{\bm{\xi}}]^\mp,
0\bigr)\transpose\;,
\end{align}
choosing in each component the backward approximation $[\ldots]^-$ if the 
preceding factor is positive, and $[\ldots]^+$ otherwise.
\item
Let $[\hat{\bm{u}}_t]:= \bm{z}_1+\bm{z}_2-\bm{z}_3$ and by inverting the
orthogonal transform
\begin{align}
[\bm{u}_t] &:= \bm{Q}\,[\hat{\bm{u}}_t] \;.
\end{align}
\item Compute $\bm{u}_{i,j}^{k+1} = \bm{u}_{i,j}^k + \tau[\bm{u}_t]$.
\end{enumerate}


\begin{thebibliography}{10}
\providecommand{\url}[1]{{#1}}
\providecommand{\urlprefix}{URL }
\expandafter\ifx\csname urlstyle\endcsname\relax
  \providecommand{\doi}[1]{DOI~\discretionary{}{}{}#1}\else
  \providecommand{\doi}{DOI~\discretionary{}{}{}\begingroup
  \urlstyle{rm}\Url}\fi

\bibitem{Aloupis-CG03}
Aloupis, G., Langerman, S., Soss, M., Toussaint, G.: Algorithms for bivariate
  medians and a {F}ermat--{T}orricelli problem for lines.
\newblock Computational Geometry \textbf{26}, 69--79 (2003)

\bibitem{Alvarez-SINUM92}
Alvarez, L., Lions, P.L., Morel, J.M.: Image selective smoothing and edge
  detection by nonlinear diffusion. {II}.
\newblock SIAM Journal on Numerical Analysis \textbf{29}, 845--866 (1992)

\bibitem{Anandan-IJCV89}
Anandan, P.: A computational framework and an algorithm for the measurement of
  visual motion.
\newblock International Journal of Computer Vision \textbf{2}, 283--310 (1989)

\bibitem{Astola-PIEEE90}
Astola, J., Haavisto, P., Neuvo, Y.: Vector median filters.
\newblock Proceedings of the IEEE \textbf{78}(4), 678--689 (1990)

\bibitem{Austin-Met59}
Austin, T.L.: An approximation to the point of minimum aggregate distance.
\newblock Metron \textbf{19}, 10--21 (1959)

\bibitem{Barbara-MG01}
Barbara, R.: The {F}ermat--{T}orricelli points of $n$ lines.
\newblock Mathematical Gazette \textbf{84}, 24--29 (2001)

\bibitem{Boris-JCP73}
Boris, J.P., Book, D.L.: Flux corrected transport. {I.} {SHASTA}, a fluid
  transport algorithm that works.
\newblock Journal of Computational Physics \textbf{11}(1), 38--69 (1973)

\bibitem{Breuss-ss05}
Breu{\ss}, M., Brox, T., Sonar, T., Weickert, J.: Stabilised nonlinear inverse
  diffusion for approximating hyperbolic {PDEs}.
\newblock In: Kimmel, R., Sochen, S., Weickert, J.\ (eds.) Scale Space and {PDE}
  Methods in Computer Vision, \emph{Lecture Notes in Computer Science}, vol.
  3459, pp. 536--547. Springer, Berlin (2005)

\bibitem{Caselles-JMIV00}
Caselles, V., Sapiro, G., Chung, D.H.: Vector median filters, inf-sup
operations, and coupled PDE's: Theoretical connections.
\newblock
Journal of Mathematical Imaging and Vision \textbf{8}, 109--119 (2000)

\bibitem{Chakraborty-PAMS96}
Chakraborty, B., Chaudhuri, P.: On a transformation and re-transformation
  technique for constructing an affine equivariant multivariate median.
\newblock Proceedings of the AMS \textbf{124}(6), 2539--2547 (1996)

\bibitem{Chakraborty-StPL99}
Chakraborty, B., Chaudhuri, P.: A note on the robustness of multivariate
  medians.
\newblock Statistics and Probability Letters \textbf{45}, 269--276 (1999)

\bibitem{Chung-SPL00}
Chung, D.H., Sapiro, G.: On the level lines and geometry of vector-valued
  images.
\newblock IEEE Signal Processing Letters \textbf{7}(9), 241--243 (2000)

\bibitem{Eckhardt-JMIV03}
Eckhardt, U.: Root images of median filters.
\newblock Journal of Mathematical Imaging and Vision \textbf{19}, 63--70 (2003)

\bibitem{Gini-Met29}
Gini, C., Galvani, L.: Di talune estensioni dei concetti di media ai caratteri
  qualitativi.
\newblock Metron \textbf{8}, 3--209 (1929)

\bibitem{Guichard-sana97}
Guichard, F., Morel, J.M.: Partial differential equations and image iterative
  filtering.
\newblock In: Duff, I.S., Watson, G.A.\ (eds.) The State of the Art in Numerical
  Analysis, no.~63 in {IMA} Conference Series (New Series), pp. 525--562.
  Clarendon Press, Oxford (1997)

\bibitem{Hayford-JASA1902}
Hayford, J.F.: What is the center of an area, or the center of a population?
\newblock Journal of the American Statistical Association \textbf{8}(58),
  47--58 (1902)

\bibitem{Hettmansperger-Biomet02}
Hettmansperger, T.P., Randles, R.H.: A practical affine equivariant
  multivariate median.
\newblock Biometrika \textbf{89}(4), 851--860 (2002)

\bibitem{Horn-AI81}
Horn, B., Schunck, B.: Determining optical flow.
\newblock Artificial Intelligence \textbf{17}, 185--203 (1981)

\bibitem{Kleefeld-cciw15}
Kleefeld, A., Breu{\ss}, M., Welk, M., Burgeth, B.: Adaptive filters for color
  images: median filtering and its extensions.
\newblock In: Tr{\'e}meau, A., Schettini, R., Tominaga, S.\ (eds.) Computational
  Color Imaging, \emph{Lecture Notes in Computer Science}, vol. 9016, pp.
  149--158. Springer, Cham (2015)

\bibitem{Memin-TIP98}
M{\'e}min, E., P{\'e}rez, P.: Dense estimation and object-based segmentation of
  the optical flow with robust techniques.
\newblock IEEE Transactions on Image Processing \textbf{7}(5), 703--719 (1998)

\bibitem{Oja-StPL83}
Oja, H.: Descriptive statistics for multivariate distributions.
\newblock Statistics and Probability Letters \textbf{1}, 327--332 (1983)

\bibitem{Rao-Sankhya88}
Rao, C.R.: Methodology based on the $l_1$-norm in statistical inference.
\newblock Sankhy{\=a} A \textbf{50}, 289--313 (1988)

\bibitem{Ronkainen-drs03}
Ronkainen, T., Oja, H., Orponen, P.: Computation of the multivariate {O}ja
  median.
\newblock In: Dutter, R., Filzmoser, P., Gather, U., Rousseeuw, P.J.\ (eds.)
  Developments in Robust Statistics, pp. 344--359. Physica-Verlag, Heidelberg
  (2003)

\bibitem{Small-ISR90}
Small, C.G.: A survey of multidimensional medians.
\newblock International Statistical Review \textbf{58}(3), 263--277 (1990)

\bibitem{Spence-icip07}
Spence, C., Fancourt, C.: An iterative method for vector median filtering.
\newblock In: Proc.~2007 IEEE International Conference on Image Processing,
  vol.~5, pp. 265--268 (2007)

\bibitem{Struyf-JSS97}
Struyf, A., Hubert, M., Rousseeuw, P.J.: Clustering in an object-oriented
  environment.
\newblock Journal of Statistical Software \textbf{1}(4), 1--30 (1997)

\bibitem{Tukey-Book71}
Tukey, J.W.: Exploratory Data Analysis.
\newblock Addison--Wesley, Menlo Park (1971)

\bibitem{Vardi-MP01}
Vardi, Y., Zhang, C.H.: A modified {W}eiszfeld algorithm for the
  {F}ermat-{W}eber location problem.
\newblock Mathematical Programming A \textbf{90}, 559--566 (2001)

\bibitem{Weber-Book1909}
Weber, A.: {\"U}ber den {S}tandort der {I}ndustrien.
\newblock Mohr, T{\"u}bingen (1909)

\bibitem{Weiszfeld-TMJ37}
Weiszfeld, E.: Sur le point pour lequel la somme des distances de $n$ points
  donn{\'e}s est minimum.
\newblock T{\^o}hoku Mathematics Journal \textbf{43}, 355--386 (1937)

\bibitem{Welk-ssvm15}
Welk, M.: Partial differential equations for bivariate median filters.
\newblock In: Aujol, J.-F., Nikolova, M., Papadakis, N.\ (eds.) Scale Space and
  Variational Methods in Computer Vision, \emph{Lecture Notes in Computer
  Science}, vol. 9087, pp. 53--65. Springer, Cham (2015)

\bibitem{Welk-tr15x}
Welk, M.: Corrected {PDE} approximation result for the multivariate ${L}^1$
  median filter.
\newblock Tech. rep. (In preparation)

\bibitem{Welk-Aiep14}
Welk, M., Breu{\ss}, M.: Morphological amoebas and partial differential
  equations.
\newblock In: Hawkes, P.W.\ (ed.) Advances in Imaging and Electron Physics, vol.
  185, pp. 139--212. Elsevier Academic Press (2014)

\bibitem{Welk-JMIV11}
Welk, M., Breu{\ss}, M., Vogel, O.: Morphological amoebas are self-snakes.
\newblock Journal of Mathematical Imaging and Vision \textbf{39}, 87--99 (2011)

\bibitem{Welk-dagm03}
Welk, M., Feddern, C., Burgeth, B., Weickert, J.: Median filtering of
  tensor-valued images.
\newblock In: Michaelis, B., Krell, G.\ (eds.) Pattern Recognition, \emph{Lecture
  Notes in Computer Science}, vol. 2781, pp. 17--24. Springer, Berlin (2003)

\end{thebibliography}
\end{document}